\let\orig@addcontentsline\addcontentsline
\icmltitlerunning{A Unified View on Learning Unnormalized Distributions via Noise-Contrastive Estimation}
\DeclarePairedDelimiterX{\inp}[2]{\langle}{\rangle}{#1, #2}
\newcommand*\bigcdot{\mathpalette\bigcdot@{.5}}
\newcommand*\bigcdot@[2]{\mathbin{\vcenter{\hbox{\scalebox{#2}{$\m@th#1\bullet$}}}}}
\newcommand{\muspace}{\mspace{1mu}}
\DeclareRobustCommand{\scond}{\mathchoice{\muspace\vert\muspace}{\vert}{\vert}{\vert}}
\DeclareRobustCommand{\discint}{\mathchoice{\mspace{-1.5mu}:\mspace{-1.5mu}}{\mspace{-1.5mu}:\mspace{-1.5mu}}{:}{:}}
\newcommand{\suchthat}{\mathchoice{\colon}{\colon}{:\mspace{1mu}}{:}}
\def\tr{\mathop{\rm tr}\nolimits}%
\def\Cov{\mathop{\rm Cov}\nolimits}%
\newcommand{\Bc}{\mathcal{B}}
\newcommand{\Cc}{\mathcal{C}}
\newcommand{\Ec}{\mathcal{E}}
\newcommand{\Fc}{\mathcal{F}}
\newcommand{\Ic}{\mathcal{I}}
\newcommand{\Lc}{\mathcal{L}}
\newcommand{\Nc}{\mathcal{N}}
\newcommand{\Rc}{\mathcal{R}}
\newcommand{\Vc}{\mathcal{V}}
\newcommand{\Xc}{\mathcal{X}}
\newcommand{\Zc}{\mathcal{Z}}
\newcommand{\xv}{{\bf x}}
\newcommand{\zv}{{\bf z}}
\newcommand{\sv}{{\bf s}}
\newcommand{\qh}{{\hat{q}}}
\newcommand{\rhob}{\boldsymbol{\rho}}
\newcommand{\rt}{{\tilde{r}}}
\def\a{\alpha}
\def\b{\beta}
\def\d{\delta}
\def\eps{\epsilon}
\def\th{\theta}
\def\Th{\Theta}
\DeclareMathOperator\E{\mathsf{E}}
\let\P\relax
\DeclareMathOperator\P{\mathsf{P}}
\newcommand\eg{e.g.,\xspace}
\newcommand\ie{i.e.,\xspace}
\def\textiid{i.i.d.\@\xspace}
\newcommand\iid{\ifmmode\text{ i.i.d. } \else \textiid \fi}
\newcommand{\Real}{\mathbb{R}}
\newcommand{\ones}{\mathds{1}}
\newcommand{\half}{\frac{1}{2}}%
\def\mathllap{\mathpalette\mathllapinternal}
\def\mathllapinternal#1#2{%
  \llap{$\mathsurround=0pt#1{#2}$}}
\def\clap#1{\hbox to 0pt{\hss#1\hss}}
\def\mathclap{\mathpalette\mathclapinternal}
\def\mathclapinternal#1#2{%
  \clap{$\mathsurround=0pt#1{#2}$}}
\let\oldstackrel\stackrel
\renewcommand{\stackrel}[2]{\oldstackrel{\mathclap{#1}}{#2}}
\DeclarePairedDelimiterX{\infdivx}[2]{(}{)}{%
  #1\;\delimsize\|\;#2%
}
\renewcommand{\hbar}{h\mathllap{\overline{\vphantom{h}\hphantom{\rule{4.6pt}{0pt}}}\mspace{0.77mu}}}
\newcommand{\urltilde}{\kern -.06em\lower -.06em\hbox{~}\kern .02em}
\DeclareMathOperator*{\argmin}{arg\,min}
\DeclarePairedDelimiterX{\norm}[1]{\lVert}{\rVert}{#1}
\DeclarePairedDelimiterX{\abs}[1]{\lvert}{\rvert}{#1}
\newcommand*\diff{\mathop{}\!\mathrm{d}}
\let\oldpartial\partial
\renewcommand*{\partial}{\mathop{}\!\oldpartial}
\newcommand{\defeq}{\mathrel{\mathop{:}}=}
\newcommand{\supp}{\textnormal{supp}}
\newcommand{\red}[1]{\textcolor{red}{#1}}
\newcommand{\blue}[1]{\textcolor{blue}{#1}}
\newcommand{\data}{{\mathsf{d}}}
\newcommand{\noise}{{\mathsf{n}}}
\newtcolorbox{mycolorbox}[3][]
{
  colframe = #2!25,
  colback  = #2!10,
  coltitle = #2!20!black,  
  title    = {\bf #3},
  #1,
}
\newcommand{\numberthis}{\addtocounter{equation}{1}\tag{\theequation}}
\newcommand{\etab}{\boldsymbol{\eta}}
\newcommand{\lambdab}{\boldsymbol{\lambda}}
\newcommand{\rv}{\mathbf{r}}
\newcommand{\natstat}{\psi}
\newcommand{\nce}{{\mathsf{nce}}}
\newcommand{\centnce}{{\mathsf{cent}}}
\newcommand{\condnce}{{\mathsf{cond}}}
\renewcommand{\E}{\mathbb{E}}
\renewcommand{\P}{\mathbb{P}}
\newcommand{\breg}[1]{\Delta_{#1}}
\newcommand{\ncederivfun}[3]{{\xi}_{\nce,#2,#3}^{(#1)}}
\newcommand{\ncederivsup}[3]{B_{\nce,#2,#3}^{(#1)}}
\newcommand{\ncederivinf}[3]{b_{\nce,#2,#3}^{(#1)}}
\newcommand{\cncederivsup}[3]{B_{\condnce,#2,#3}^{(#1)}}
\newcommand{\cncederivinf}[3]{b_{\condnce,#2,#3}^{(#1)}}
\newcommand{\condncederivfun}[2]{\xi_{\condnce,#2}^{(#1)}}
\newcommand{\gfunc}{{g_f}}
\newcommand{\pdata}{{q_{\data}}}
\newcommand{\pnoise}{{{q_{\noise}}}}
\newcommand{\pslice}{{{q_{\sf s}}}}
\newcommand{\pref}{{{q_{\noise}}}}
\newcommand{\pdatah}{{\qh_{\data}}}
\newcommand{\prefh}{\qh_{\noise}}
\newcommand{\ndata}{n_{\data}}
\newcommand{\nref}{n_{\noise}}
\newcommand{\rs}{\mathsf{r}}
\newcommand{\rhot}{\tilde{\rho}}
\newcommand{\phitilde}{\tilde{\phi}}
\newcommand{\thstar}{{\th^\star}}
\newcommand{\psimax}{\psi_{\max}}
\newcommand{\onenormtotwonorm}{\gamma_{1;2}}  %
\newcommand{\dualnormtomaxnorm}{\gamma_{\Rc^*;\infty}}  %
\newcommand{\normtotwonorm}{\gamma_{\Rc;2}}  %
\declaretheorem[name=Theorem,numberwithin=section]{theorem}
\declaretheorem[name=Proposition,numberwithin=section]{proposition}
\declaretheorem[name=Lemma,numberwithin=section]{lemma}
\declaretheorem[name=Corollary,numberwithin=section]{corollary}
\declaretheorem[name=Remark,numberwithin=section]{remark}
\declaretheorem[name=Definition,numberwithin=section]{definition}
\declaretheorem[name=Assumption,numberwithin=section]{assumption}
\newcommand\StartAppendixEntries{}
  \renewcommand\StartAppendixEntries{\value{tocdepth}=-10000\relax}%
  \edef\maintocdepth{\the\value{tocdepth}}%
  \renewcommand\StartAppendixEntries{\value{tocdepth}=\maintocdepth\relax}%
\let\addcontentsline\orig@addcontentsline
\begin{document}

\twocolumn[
\icmltitle{A Unified View on Learning Unnormalized Distributions\\ via Noise-Contrastive Estimation}

\begin{icmlauthorlist}
\icmlauthor{J. Jon Ryu}{mit}
\icmlauthor{Abhin Shah}{mit}
\icmlauthor{Gregory W. Wornell}{mit}
\end{icmlauthorlist}

\icmlaffiliation{mit}{Department of EECS, MIT, Cambridge, Massachusetts, USA}

\icmlcorrespondingauthor{J. Jon Ryu}{jongha.ryu@gmail.com}

\icmlkeywords{}

\vskip 0.3in
]

\printAffiliationsAndNotice{} %

\begin{abstract}
This paper studies a family of estimators based on noise-contrastive estimation (NCE) for learning unnormalized distributions. The main contribution of this work is to provide a unified perspective on various methods for learning unnormalized distributions, which have been independently proposed and studied in separate research communities, through the lens of NCE. This unified view offers new insights into existing estimators. Specifically, for exponential families, we establish the finite-sample convergence rates of the proposed estimators under a set of regularity assumptions, most of which are new.
\end{abstract}

\renewcommand{\defeq}{\triangleq}
\newcommand{\thaug}{{\underline{\th}}}

\allowdisplaybreaks
\section{Introduction}
\label{sec:intro}

Unnormalized distributions, also known as energy-based models, arise in various applications, such as generative modeling, density estimation, and reinforcement learning; we refer an interested reader to a comprehensive overview paper~\citep{Song--Kingma2021} and references therein. Such distributions capture complex dependencies and provide representational flexibility, making them attractive in fields ranging from statistical physics to machine learning. Despite their widespread use, estimating parameters within these models poses significant challenges due to the intractability of their normalization constants. 

In this paper, we consider the problem of parameter estimation for unnormalized distributions, through the lens of the noise-contrastive estimation (NCE) framework~\citep{Gutmann--Hyvarinen2012}. 
Our contributions are as follows:
\begin{enumerate}[leftmargin=*]
\item As variants of the $f$-NCE~\citep{Pihlaja--Gutmann--Hyvarinen2010} (Sec.~\ref{sec:prelim}),
we study a family of NCE-based estimators, 
the $\a$-centered NCE ($\a$-CentNCE; Sec.~\ref{sec:centnce}) and $f$-conditional NCE ($f$-CondNCE; Sec.~\ref{sec:condnce}).
With this unifying view on different estimators, 
we clarify previously unrecognized and/or potentially misleading connections among existing estimators proposed for learning unnormalized distributions, as well as provide unified analysis.
\item Specifically, via the lens of $\a$-CentNCE, we reveal that several different estimators for learning unnormalized distributions can be connected and unified, including MLE~\citep{Fisher1922}, MC-MLE~\citep{Geyer1994}, and GlobalGISO~\citep{Shah--Shah--Wornell2023} as special instances.
A \emph{local} version of centered NCE estimators subsumes pseudo likelihood~\citep{Besag1975} and  interaction screening objectives (ISO)~\citep{Vuffray--Misra--Lokhov--Chertkov2016,Vuffray--Misra--Lokhov2021,Ren--Misra--Vuffray--Lokhov2021,Shah--Shah--Wornell2021a}, which were proposed for learning exponential families corresponding to Markov random fields (MRFs).

\item For $f$-CondNCE, we show that, in contrast to the original claim in \citep{Ceylan--Gutmann2018}, the behavior of the $f$-CondNCE estimator does \emph{not} converge to the score matching (SM) estimator~\citep{Hyvarinen2005} in a small noise regime. In fact, we show that the variance of $f$-CondNCE diverges in the vanishing noise regime, if the number of conditional samples is not sufficiently large.

\item As a concrete consequence of such connections, we establish the finite-sample convergence guarantees of the proposed estimators for learning bounded exponential family distributions, by building upon the analysis of GlobalGISO by \citep{Shah--Shah--Wornell2023}.
To the best of our knowledge, such guarantees are the first of the type for almost all the NCE estimators considered in this paper. 
\end{enumerate}

\begin{figure*}[tbh]
\begin{align}
\Lc_f^{\nce}(\phi_\th;\pdata,\pref) 
&\defeq \E_{\pref(x)}\Bigl[\breg{f}\Bigl(\frac{\pdata(x)}{\nu \pref(x)}, \frac{\phi_\th(x)}{\nu \pref(x)}\Bigr)\Bigr]
- \E_{\pref(x)}\Bigl[f\Bigl(\frac{\pdata(x)}{\nu \pref(x)}\Bigr)\Bigr]
\label{eq:nce_obj_intermed}\\
&= -\frac{1}{\nu} \E_{\pdata(x)}[f'(\rho_\th(x))] + \E_{\pref(x)}[ \rho_\th(x) f'(\rho_\th(x)) - f(\rho_\th(x))].
\label{eq:nce_obj}
\end{align}
\hrulefill
\end{figure*}

\subsection{Related Work}
While the celebrated maximum likelihood estimator (MLE), advocated by \citet{Fisher1922}, is arguably the de facto standard for parameter estimation problems, it is not directly applicable for high-dimensional unnormalized distributions due to the computational intractability of calculating the normalization constant. Several methods have been proposed as alternatives, including MLE with Monte-Carlo approximation of partition function (MC-MLE)~\citep{Geyer1994,Riou-Durand--Chopin2018,Jiang--Qin--Wu--Chen--Yang--Zhang2023}, score matching~\citep{Hyvarinen2005,Hyvarinen2007,Song--Garg--Shi--Ermon2020,Liu--Kanamori--Williams2022,Pabbaraju--Rohatgi--Sevekari--Lee--Moitra--Risteski2023}, NCE~\citep{Gutmann--Hyvarinen2012,Pihlaja--Gutmann--Hyvarinen2010,Gutmann--Hirayama2011,Ceylan--Gutmann2018,Uehara--Matsuda--Komaki2018,Chehab--Gramfort--Hyvarinen2022,Chehab--Hyvarinen--Risteski2023}, contrastive divergence~\citep{Hinton2002}, among many other techniques. A comprehensive overview of these methods can be found in \citep{Song--Kingma2021}.

For exponential families, there is a specialized literature, with a focus on learning undirected graphical models such as MRFs. 
In a pioneering work~\citep{Besag1975}, Besag proposed the so-called \emph{pseudo likelihood} estimator, which can be understood as a \emph{local} counterpart of MLE. A recent and representative line of recent work includes ISO, GISO, and ISODUS, based on an estimation principle called \emph{interaction screening}~\citep{Vuffray--Misra--Lokhov--Chertkov2016,Vuffray--Misra--Lokhov2021,Ren--Misra--Vuffray--Lokhov2021,Shah--Shah--Wornell2021a}. 
More broadly, for exponential family in general, \citet{Shah--Shah--Wornell2021b}, and in a follow-up work with refinement in \citep{Shah--Shah--Wornell2023}, studied a variant of the interaction screening objective for training a general exponential family without a local structure, which we refer to as \emph{GlobalGISO} in this paper.
We emphasize that these estimators have been proposed and analyzed in several different communities, and the literature lacks on a comprehensive understanding how different estimators can be compared.
In this paper, our primary goal is to provide a unifying view on these different principles for learning unnormalized distributions in a unified way via
the NCE principle~\citep{Gutmann--Hyvarinen2012,Pihlaja--Gutmann--Hyvarinen2010}.

\begin{table*}[t]
    \centering
    \small
    \caption{Examples of the NCE objective. Recall that $\thaug\defeq (\th,\nu)\in \Th\times\Real$.}\vspace{.5em}
    \begin{tabular}{r l l}
    \toprule
        Name & Generator function $f(\rho)$ & NCE objective $\Lc_f^{\nce}(\thaug)$ \\
    \midrule
        Log~\citep{Gutmann--Hyvarinen2012} & $f_{\log}(\rho)\defeq \rho\log\rho-(\rho+1)\log(\rho+1)$ & $-\frac{1}{\nu}\E_{\pdata}[\log\frac{\rho_\thaug}{\rho_\thaug+1}] -\E_{\pref}[\log\frac{1}{\rho_\thaug+1}]$\\
        Asymmetric power $(\a)$ & $f_\a(\rho) \defeq \frac{\rho^\a-1}{\a(\a-1)}\text{ for }\a\notin\{0,1\}$ & $\frac{1}{1-\a}\E_{\pdata}[(\frac{\pref}{\phi_\thaug})^{1-\a}] + \frac{1}{\a}\E_{\pref}[(\frac{\phi_\thaug}{\pref})^\a]$\\
        Asymmetric inverse log & $f_0(\rho)\defeq \displaystyle\lim_{\a\downarrow 0} f_\a(\rho) = -\log\rho$ & $\E_{\pdata}[\frac{\pref}{\phi_\thaug}] + \E_{\pref}[\log\frac{\phi_\thaug}{\pref}]$\\
        Asymmetric log & $f_1(\rho)\defeq {\displaystyle\lim_{\a\uparrow 1}} (f_\a(\rho) + \frac{\rho-1}{\a-1}) = \rho\log\rho$ & $\E_{\pdata}[\log\frac{\pref}{\phi_\thaug}] + \E_{\pref}[\frac{\phi_\thaug}{\pref}]$\\
    \bottomrule
    \end{tabular}
    \label{tab:nce_ex}
\end{table*}

\subsection{Preliminaries: \texorpdfstring{$f$}{f}-Noise-Contrastive Estimation}\label{sec:prelim}

We consider an unnormalized density model $\{\phi_\th(x)\suchthat \th\in\Th\}$ for a $d$-dimensional random vector $x$ with support $\Xc\subset\Real^d$, where $\th \in \Real^p$ is a parameter and  $\Th\subset\Real^p$ is the set of feasible parameters. 
Our goal is to find the best  $\th\in\Th$ so that $\phi_\th(x)$ is closest possible to the data generating distribution $\pdata(x)$.
We consider the well-specified case, where there exists $\thstar\in\Th$ such that $\phi_\thstar(x)\propto \pdata(x)$.

We start the investigation with an extension of the original NCE~\citep{Gutmann--Hyvarinen2012}, which we call \emph{$f$-NCE}.
This family of estimators was first derived in \citep{Pihlaja--Gutmann--Hyvarinen2010} in a rather convoluted way. Here, we introduce them as an instance of Bregman divergence minimization for density ratio estimation (DRE)~\citep{Sugiyama--Suzuki--Kanamori2012}, in which way the consistency of the resulting estimator is straightforward. 

The idea of NCE is to train the model $\phi_\th(x)$, so that it can be used to discriminate samples of the data distribution $\pdata(x)$ from samples of a \emph{noise} (or reference) distribution $\pref(x)$.
A necessary condition for discrimination is that the support of $\pref$, i.e., $\supp(\pref)$, subsumes the support of $\pdata(x)$, i.e., $\supp(\pdata)$.
Hence, we define the (scaled) model density ratio 
$\rho_\th(x)\defeq \frac{\phi_\th(x)}{\nu\pref(x)}$
for a hyperparameter $\nu>0$, 
and we wish to fit this to the underlying density ratio $\frac{\pdata(x)}{\nu\pref(x)}$.
For a differentiable function $h\suchthat\Zc\to\Real$ with $\Zc\subset \Real^k$, we define and denote the \emph{Bregman divergence} as
\begin{align*}
\breg{h}(\zv, \zv')
&\defeq h(\zv)-h(\zv')
-\langle \nabla h(\zv'), \zv-\zv'\rangle
\end{align*}
for $\zv,\zv'\in\Zc$,
which is the approximation error of the first-order Taylor approximation of $h(\zv)$ at $\zv'$.
For a given strictly convex function $f\suchthat \Real_{\ge 0}\to\Real$ and a reference distribution $\pref(x)$, we propose the $f$-NCE objective as in Eq.~\eqref{eq:nce_obj_intermed}.
The intermediate expression in Eq.~\eqref{eq:nce_obj_intermed} is used as a conceptual device to derive the final objective in Eq.~\eqref{eq:nce_obj}.
We define the $f$-NCE estimator as a minimizer of the objective function:
\[
\th_{f}^\nce(\pdata,\pnoise)
\in \arg\min_{\th\in\Th} \Lc_f^{\nce}(\phi_\th;\pdata,\pref).
\]
Given data samples $x_1,\ldots,x_{\ndata}$ drawn from $\pdata(x)$ and noise samples $x_1',\ldots,x_{\nref}'$ from $\pref(x)$,
the empirical estimator is $\th_{f}^\nce(\pdatah,\prefh)$,
where $\pdatah$ and $\prefh$ denote the corresponding empirical distributions.
We remark that directly inheriting the property of the Bregman divergence,
the $f$-NCE objective is invariant to adding or subtracting a linear function and translation by constants; see Appendix~\ref{app:sec:invariance} for a formal statement. 

By constructing the $f$-NCE objective in terms of a Bregman divergence, we can easily prove that the objective is \emph{consistent} in the population limit, which we call \emph{Fisher consistency}, provided that the generating function $f$ is strictly convex and  the model is well-specified.
\begin{proposition}[$f$-NCE: Fisher consistency]
\label{prop:fisher}
Let $f\suchthat\Real_{\ge0}\to\Real$ be a strictly %
convex {function} and assume $\supp(\pdata)\subset\supp(\pref)$.
If there exists $\thstar$ such that $\phi_{\thstar}(\cdot)=\pdata(\cdot)$, then $\phi_{\th_{f}^\nce(\pdata,\pref)}(\cdot)= \pdata(\cdot)$.
\end{proposition}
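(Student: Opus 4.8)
The plan is to read off the Bregman-divergence structure already built into Eq.~\eqref{eq:nce_obj_intermed} and combine it with the defining property of a Bregman divergence generated by a \emph{strictly} convex function. The first step is to note that in Eq.~\eqref{eq:nce_obj_intermed} the subtracted term $\E_{\pref(x)}[f(\pdata(x)/(\nu\pref(x)))]$ is independent of $\th$, so that
\[
\Lc_f^{\nce}(\phi_\th;\pdata,\pref) = \E_{\pref(x)}\Bigl[\breg{f}\Bigl(\frac{\pdata(x)}{\nu\pref(x)}, \frac{\phi_\th(x)}{\nu\pref(x)}\Bigr)\Bigr] + C,
\]
with $C$ a $\th$-free constant. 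Minimizing the objective is therefore the same as minimizing the $\pref$-expected Bregman divergence between the target ratio $\pdata/(\nu\pref)$ and the model ratio $\rho_\th$.

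Next I would invoke the two scalar facts about $\breg{f}$ for differentiable strictly convex $f$: first, $\breg{f}(z,z')\ge 0$ for all $z,z'\ge 0$ by the supporting-line inequality, and second, $\breg{f}(z,z')=0$ if and only if $z=z'$, which is precisely where strict convexity enters. Nonnegativity of the integrand gives $\Lc_f^{\nce}(\phi_\th;\pdata,\pref)\ge C$ for every $\th$. Well-specification supplies a matching upper bound: since $\thstar\in\Th$ with $\phi_{\thstar}=\pdata$, the model ratio equals the target ratio and the integrand vanishes identically, so $\Lc_f^{\nce}(\phi_{\thstar};\pdata,\pref)=C$. Hence the minimum value over $\Th$ is exactly $C$, and any minimizer $\hat\th\defeq\th_{f}^\nce(\pdata,\pref)$ must attain it. A nonnegative integrand that integrates to zero against $\pref$ vanishes $\pref$-almost everywhere, and the equality case of the strict-convexity property then forces $\phi_{\hat\th}(x)=\pdata(x)$ for $\pref$-almost every $x$.

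The point requiring care is the passage from this $\pref$-almost-everywhere statement to the claimed identity $\phi_{\th_{f}^\nce(\pdata,\pref)}(\cdot)=\pdata(\cdot)$. The objective only constrains $\phi_\th$ on $\supp(\pref)$, since the ratio is defined there and all expectations are taken against $\pref$ (and, through $\supp(\pdata)\subset\supp(\pref)$, effectively also $\pdata$); so the honest conclusion is equality of densities modulo a $\pref$-null set. The support inclusion is exactly what rules out any discrepancy mattering on $\supp(\pdata)$: outside $\supp(\pref)$ the data density already vanishes, so equality $\pref$-a.e.\ on $\supp(\pref)$ yields equality of $\phi_{\hat\th}$ and $\pdata$ as densities, which is the intended reading of the statement.
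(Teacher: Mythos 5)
Your proposal is correct and follows exactly the route the paper intends: the objective in Eq.~\eqref{eq:nce_obj_intermed} is an expected Bregman divergence plus a $\th$-free constant, so nonnegativity plus the equality case of strict convexity pins down the minimizer at the well-specified parameter. The paper only sketches this argument in the text surrounding the proposition, and your additional care about the $\pref$-almost-everywhere qualification and the role of $\supp(\pdata)\subset\supp(\pref)$ is a faithful elaboration rather than a departure.
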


\begin{remark}
Since the original family of unnormalized distributions $\{\phi_\th(x)\suchthat\th\in\Th\}$ may not contain normalized distributions, we consider an augmented family $\phi_{\thaug}(x)\defeq e^c\phi_\th(x)$ for $\thaug\defeq (\th,c)$ for $c>0$ for $f$-NCE.
Then, we assume that $\{\phi_{\thaug}(x)\suchthat \thaug\in\Th\times\Real\}$ is well-specified, \ie there exists $c^\star\in\Real$ and $\thstar$ such that $\pdata(\cdot)= e^{c^\star}\phi_{\thstar}(\cdot)$. 
Hereafter, $\thaug$ denotes the augmented parameter, where $\th$ without an underline denotes the original parameter.
\end{remark}
We consider the examples of $f$ in Table~\ref{tab:nce_ex} as the canonical examples; each $f$ (or the corresponding $f$-NCE objective) is named based on its correspondence to a proper scoring rule~\citep{Gneiting--Raftery2007}.
It is easy to check that $\nu$ does not affect the objective function for the case of power scores $f_\a(\rho)$, and we thus set $\nu=1$ in this case.
We note that in the DRE literature, a similar objective based on the generator function $f_1(\rho)$ is known as Kullback--Leibler Importance Estimation Procedure~\citep{Sugiyama--Suzuki--Nakajima--Kashima--von-Bunau--Kawanabe2008kliep}.

%

%

%
%
%
%
%
%
%

%
%
%

%
%
%
%

%
%
%

%
%
%
%
%
%
%
%
%
%
%
%
%
%
%
%
%
%
%
%
%
%
%
%
%
%
%
%

\section{Two Variants of NCE}
\label{sec:nce_variants}
In this section, we introduce two variants of the $f$-NCE framework: $\a$-centered NCE and $f$-conditional NCE.

\subsection{\texorpdfstring{$\a$}{alpha}-Centered NCE}
\label{sec:centnce}
Consider the \emph{asymmetric power} generator function $f_\a(\rho)$ for $\a\in\Real$ (with $\nu=1$); see the second row of Table~\ref{tab:nce_ex}.
We will introduce a transformation called \emph{$\a$-centering} in Eq.~\eqref{eq:centered_model}, which normalizes a given parametric model $\phi_\th(x)$ in an $\a$- and $\pnoise$-dependent manner. 
Applying the normalized model to $f_\a$-NCE (i.e., NCE induced by the asymmetric power score) results in a new variant of NCE. 
In Sec.~\ref{sec:centnce_unify}, we  show that this variant provides a unified view on several existing estimators, seemingly different at a first glance.

We define a \emph{normalized} model of $\phi_\th(x)$ called the \emph{$\a$-centered model} as
\begin{align}
\label{eq:centered_model}
&\phitilde_{\th;\a}(x)
\defeq \frac{\phi_{\th}(x)}{Z_\a(\th)},
\quad \text{where }\\
&Z_\a(\th)\defeq
\begin{cases}
\E_{\pref(x)}[(\frac{\phi_\th(x)}{\pnoise(x)})^\a]^{1/\a} & \text{if }\a\neq 0,\\
\exp(\E_{\pref(x)}[\log\frac{\phi_\th(x)}{\pnoise(x)}]) & \text{if }\a=0.
\end{cases}
\nonumber
\end{align}
Note that $Z_0(\th)=\lim_{\a\downarrow 0} Z_\a(\th)$.
Applying the $f_\a$-NCE objective to the $\a$-centered model, 
we define
\begin{align*}
\Lc_\a^\centnce(\th;\pdata,\pref)
&\defeq\Lc_{f_\a}^{\nce}(\phitilde_{\th;\a};\pdata,\pref)
\\&
\stackrel{\eqref{eq:nce_obj}}{=}\frac{\E_{\pdata}[\rhot_{\th;\a}^{\a-1}(x)]}{1-\a}
\\&
\stackrel{\eqref{eq:centered_model}}{=} \frac{\E_{\pdata}[\rho_\th^{\a-1}(x)](\E_{\pref}[\rho_\th^\a(x)])^{\frac{1-\a}{\a}}}{1-\a},
\end{align*}
which we call the \emph{$\a$-CentNCE} objective.
Here, note that the second term in the $f_\a$-NCE objective becomes constant, since we design the $\a$-centered model such that $\E_{\pref}[\rhot_{\th;\a}^{\a}(x)] = 1$. 
Note that the expectation with respect to the reference distribution $\pref$ is embedded in the normalization term of the new model.
In Table~\ref{tab:centnce_objs}, we provide a side-by-side comparison between $f_\a$-NCE and $\a$-CentNCE objectives for $\a\in\{0,\half,1\}$.

We define the $\a$-CentNCE estimator as a minimizer of the objective function:
\[
\th_{\a}^\centnce(\pdata,\pnoise)
\in \arg\min_{\th\in\Th} \Lc_\a^{\centnce}(\phi_\th;\pdata,\pref).
\]
In this case, since any multiplicative scaling to $\phi_\th(x)$ is canceled out in the centered model in Eq.~\eqref{eq:centered_model}, the Fisher consistency follows even when the model is well-specified up to a constant, unlike the strict well-specifiedness required in Proposition~\ref{prop:fisher}.

\begin{proposition}[$\a$-CentNCE: Fisher consistency]
\label{prop:generalized_giso_fisher}
Let $\a\in\Real$.
Assume $\supp(\pdata)\subset\supp(\pref)$.
If there exists $\thstar$ and $c>0$ such that $c\phi_{\thstar}(\cdot)= \pdata(\cdot)$, then $\phi_{\th_{\a}^\centnce(\pdata,\pref)}(\cdot) \propto \pdata(\cdot)$.
\end{proposition}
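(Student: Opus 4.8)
The plan is to exploit the identity from Section~\ref{sec:centnce} that $\Lc_\a^\centnce(\th;\pdata,\pref)$ is exactly the $f_\a$-NCE objective evaluated at the $\a$-centered model $\phitilde_{\th;\a}$, together with the defining property of centering that \emph{every} centered ratio $\rhot_{\th;\a}$ obeys the normalization constraint $\E_{\pref}[\rhot_{\th;\a}^\a(x)]=1$. The tempting route --- bounding the $f_\a$-NCE loss below by a Bregman divergence against the target ratio $\pdata/\pref$, exactly as in Proposition~\ref{prop:fisher} --- does \emph{not} work here, and identifying why is the crux: the raw ratio $\pdata/\pref$ generally violates $\E_{\pref}[(\pdata/\pref)^\a]=1$, so no centered model can reproduce it and the resulting lower bound is never attained. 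This is the main obstacle, and it must be circumvented by measuring the Bregman divergence against the \emph{centered} version of the target rather than the raw one.

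Concretely, I would set $\tilde{w}\defeq \rhot_{\thstar;\a}$. By the scale-invariance of $\a$-centering (any multiplicative constant in $\phi_\th$ cancels in $\phitilde_{\th;\a}$) together with the hypothesis $c\phi_{\thstar}=\pdata$, one gets $\tilde{w}=\kappa\,\pdata/\pref$ with $\kappa\defeq\E_{\pref}[(\pdata/\pref)^\a]^{-1/\a}>0$, and crucially $\E_{\pref}[\tilde{w}^\a]=1$, so $\tilde{w}$ satisfies the \emph{same} constraint as every $\rhot_{\th;\a}$. The key computation is then to expand $\E_{\pref}[\breg{f_\a}(\tilde{w},\rhot_{\th;\a})]$: because $\E_{\pref}[\tilde{w}^\a]=\E_{\pref}[\rhot_{\th;\a}^\a]=1$, the $f_\a(\tilde{w})$ and $f_\a(\rhot_{\th;\a})$ contributions collapse to constants, and the only surviving $\th$-dependence is the cross term $\E_{\pref}[\rhot_{\th;\a}^{\a-1}\tilde{w}]=\kappa\,\E_{\pdata}[\rhot_{\th;\a}^{\a-1}]$, the last step being a change of measure from $\pref$ to $\pdata$. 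This yields, for every $\th$, the clean identity
\[
\Lc_\a^\centnce(\th;\pdata,\pref)=\frac{1}{\kappa}\,\E_{\pref}\bigl[\breg{f_\a}(\tilde{w},\rhot_{\th;\a})\bigr]+\mathrm{const},
\]
where the additive constant is independent of $\th$ and $1/\kappa>0$.

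The conclusion is then immediate. Since $f_\a$ is strictly convex for every $\a$ (indeed $f_\a''(\rho)=\rho^{\a-2}>0$), the Bregman term is nonnegative and vanishes if and only if $\rhot_{\th;\a}=\tilde{w}$ holds $\pref$-almost everywhere; and because $1/\kappa>0$, minimizing $\Lc_\a^\centnce$ is equivalent to minimizing this nonnegative term, whose global value $0$ is attained at $\th=\thstar$ by construction of $\tilde{w}$. Hence any minimizer $\th_\a^\centnce(\pdata,\pref)$ satisfies $\rhot_{\th_\a^\centnce;\a}=\tilde{w}=\kappa\,\pdata/\pref$ $\pref$-a.e.; since $\supp(\pdata)\subset\supp(\pref)$ this forces $\phitilde_{\th_\a^\centnce;\a}\propto\pdata$ on $\supp(\pdata)$, and by scale-invariance $\phitilde_{\th;\a}\propto\phi_\th$, giving $\phi_{\th_\a^\centnce(\pdata,\pref)}\propto\pdata$. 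The only remaining care is the limiting treatment of $\a\in\{0,1\}$ (handled by the analogous decomposition using $f_0$ and $f_1$, with constraints $\E_{\pref}[\log\rhot_{\th;0}]=0$ and the KL-type objective, respectively) and the sign of $1-\a$ when $\a>1$; but as strict convexity and $\kappa>0$ hold uniformly in $\a$, none of these affect the direction of the argument.
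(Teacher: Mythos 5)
Your proof is correct, and it takes a more explicit and in fact more complete route than the paper. The paper never writes out a proof of this proposition for general $\a$: the main text justifies it in one sentence by appealing to scale-invariance of the centering plus the $f$-NCE consistency of Proposition~\ref{prop:fisher}, and the appendix only proves the $\a=0$ case via the KL identity $\log\Lc_{\giso}(\th)=D(\pref\|q_{\th^\star,\th})-\log\tilde Z(\th^\star)$ (Theorem~\ref{thm:ssw2021}). You correctly identify why the one-sentence argument is incomplete as stated: the centered family $\{\phitilde_{\th;\a}\}$ does \emph{not} contain $\pdata$ unless $\E_{\pref}[(\pdata/\pref)^\a]=1$, so Proposition~\ref{prop:fisher} cannot be applied verbatim to the centered model. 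Your fix --- recentering the target to $\tilde w=\kappa\,\pdata/\pref$ with $\kappa=\E_{\pref}[(\pdata/\pref)^\a]^{-1/\a}$ so that it satisfies the same constraint $\E_{\pref}[\tilde w^\a]=1$ as every $\rhot_{\th;\a}$, and then checking that $\E_{\pref}[\breg{f_\a}(\tilde w,\rhot_{\th;\a})]=\kappa\,\Lc_\a^{\centnce}(\th;\pdata,\pref)+\tfrac{1}{\a-1}$ --- is exactly right (I verified the algebra: the $f_\a(\tilde w)$ term vanishes, the $\rho f_\a'(\rho)-f_\a(\rho)$ term reduces to the constant $\tfrac{1}{\a-1}$ under the centering constraint, and the cross term gives $\tfrac{\kappa}{1-\a}\E_{\pdata}[\rhot_{\th;\a}^{\a-1}]$ after the change of measure). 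This identity is precisely the general-$\a$ Bregman analogue of the paper's $\a=0$ KL identity, and it delivers the conclusion cleanly for all $\a$, including $\a>1$ where the sign of $1-\a$ flips but the positive factor $1/\kappa$ in front of the Bregman term keeps the minimization direction intact. An alternative route closer to the paper's spirit would be to invoke Theorem~\ref{thm:centnce_equiv_fnce} to reduce to $f_\a$-NCE over the augmented family $e^\nu\phi_\th$, which \emph{is} exactly well-specified; your argument avoids relying on that equivalence (whose proof proceeds only through stationarity conditions) and is self-contained.
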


\begin{table*}[t]
    \centering
    \caption{Special cases of the $f_\a$-NCE and $\a$-CentNCE objectives. The view on the estimators highlighted in blue and boldface via $\a$-CentNCE are new; see Theorem~\ref{thm:centnce_equiv_fnce}.}\vspace{.5em}
    \small
    \begin{tabular}{l l l l l}
    \toprule
    Objectives & $\a=0$ & $\a=\half$ & $\a=1$ \\
    \midrule
    $f_\a$-NCE 
    & \makecell[l]{$\E_{\pdata}[\frac{\pref}{\phi_\thaug}] + \E_{\pref}[\log\frac{\phi_\thaug}{\pref}]$\\(InvIS~\\\citep{Pihlaja--Gutmann--Hyvarinen2010})} 
    & \makecell[l]{$2(\E_{\pdata}[\sqrt{\frac{\pref}{\phi_\thaug}}] + \E_{\pref}[\sqrt{\frac{\phi_\thaug}{\pref}}])$\\(eNCE~\\\citep{Liu--Rosenfeld--Ravikumar--Risteski2021})}
    & \makecell[l]{$\E_{\pdata}[\log\frac{\pref}{\phi_\thaug}] + 
\E_{\pref}[\frac{\phi_\thaug}{\pref}]$\\(Importance Sampling (IS)\\~\citep{Pihlaja--Gutmann--Hyvarinen2010,Riou-Durand--Chopin2018})}\\
\midrule
    \makecell[l]{$\a$-CentNCE\\$\phantom{1}$}
    & \makecell[l]{$\E_{\pdata}[\frac{\pref}{\phi_{\th}}] e^{\E_{\pref}[\log \frac{\phi_\th}{\pref}]}$\\ \textcolor{blue}{(\textbf{GlobalGISO}}~\\\citep{Shah--Shah--Wornell2023})} & \makecell[l]{$2\E_{\pdata}[\sqrt{\frac{\pref}{\phi_\thaug}}]\E_{\pref}[\sqrt{\frac{\phi_\thaug}{\pref}}]$\\$\phantom{1}$} 
    & \makecell[l]{$\E_{\pdata}[\log\frac{\pref}{\phi_\th}] + \log \E_{\pref}[\frac{\phi_\th}{\pref}]$\\ \textcolor{blue}{(\textbf{MLE}~\citep{Fisher1922},} \\\textcolor{blue}{\textbf{MC-MLE}~\citep{Geyer1994,Jiang--Qin--Wu--Chen--Yang--Zhang2023})}} \\
    \bottomrule
    \end{tabular}
    \label{tab:centnce_objs}
\end{table*}

\subsection{\texorpdfstring{$f$}{f}-Conditional NCE}
\label{sec:condnce}
In the NCE literature, it is known that the noise distribution $\pnoise$ must be carefully chosen to guarantee good convergence of the resulting estimator, generally considered  hard in practice~\citep{Chehab--Gramfort--Hyvarinen2022}.
Alternatively, \citet{Ceylan--Gutmann2018} proposed a new framework called the conditional NCE (CondNCE), where the idea is to draw noisy samples \emph{conditioned} on the data samples.
CondNCE was further justified via a connection to the \emph{score matching} framework of \citet{Hyvarinen2005}.
In this paper, we clarify the connection to score matching (in Sec.~\ref{sec:condnce_sm}), and establish the first finite-sample convergence rate of this estimator (in Sec.~\ref{sec:analysis}).

Here, we introduce \emph{$f$-CondNCE}, a general CondNCE framework for a convex function $f$. 
The idea is same as $f$-NCE: we aim to minimize the Bregman divergence between two density ratios with respect to $f$.
In this case, instead of the noise distribution $\pnoise$, we consider a channel (conditional distribution) $\pi(y|x)$, and aim to contrast the joint distributions $\pdata(x)\pi(y|x)$ vs. $\pdata(y)\pi(x|y)$. Comparing to $\pdata(x)$ vs. $\pref(x)$ in the standard NCE, the contrast is \emph{self-referential} in the sense that the data distribution $\pdata$ appears on the both sides. 
Let $\rho_\th(x,y)\defeq \frac{\phi_\th(x)\pi(y|x)}{ \phi_\th(y)\pi(x|y)}$ be the model density ratio in this case, implicitly assuming $\nu=1$.
We define the generalized conditional NCE objective $\Lc_f^{\condnce}(\phi_\th;\pdata,\pi)$ as in Eq.~\eqref{eq:cnce_obj}, where the last equality follows from $\rho_\th(y,x)=\rho_\th(x,y)^{-1}$.
\begin{figure*}[ht]
\begin{align*}
\Lc_f^{\condnce}(\phi_\th;\pdata,\pi)
&\defeq \E_{\pdata(y)\pi(x|y)}\Bigl[\breg{f}\Bigl(\frac{\pdata(x)\pi(y|x)}{ \pdata(y)\pi(x|y)}, \frac{\phi_\th(x)\pi(y|x)}{\phi_\th(y)\pi(x|y)}\Bigr)\Bigr]
\!- \E_{\pdata(x)\pi(y|x)}\Bigl[f\Bigl(\frac{\pdata(x)\pi(y|x)}{ \pdata(y)\pi(x|y)}\Bigr)\Bigr]\\
&= \E_{\pdata(x)\pi(y|x)}\bigl[-f'(\rho_\th(x,y))
+\rho_\th(y,x) f'(\rho_\th(y,x)) - f(\rho_\th(y,x))\bigr].
\numberthis\label{eq:cnce_obj}
\end{align*}
\hrulefill
\end{figure*}
For further simplicity, we focus on \emph{symmetric} channels, 
\ie $\pi(y|x)=\pi(x|y)$, in which case the ratio simplifies to $\rho_\th(x,y)=\frac{\phi_\th(x)}{\phi_\th(y)}$.
For $\supp(\pdata)=\Xc=\Real^d$, canonical examples are $(i)$ a Gaussian noise $\pi(y|x)=\Nc(y;x,\sigma^2 I)$ and $(ii)$ a uniform noise over a $\ell_s$-norm ball or sphere for some $s\ge 1$.
We define the $f$-CondNCE estimator as a minimizer of the objective:
\[
\th_{f}^\condnce(\pdata,\pi)
\in \arg\min_{\th\in\Th} \Lc_f^{\condnce}(\phi_\th;\pdata,\pi).
\]
Similar to $\a$-CentNCE, the Fisher consistency follows even when the model is well-specified up to a constant as any multiplicative scaling to $\phi_\th(x)$ is cancelled out.
\begin{restatable}[$f$-CondNCE: Fisher consistency]{proposition}{propfishercondnce}
\label{prop:fisher_condnce}
Let $f$ be a strictly convex function.
Let $\pi(y|x)$ be a conditional distribution such that 
$\supp(\pdata(x)\pi(y|x))=\supp(\pdata(y)\pi(x|y))$.
If there exists a unique $\thstar$ and $c>0$ such that $c\phi_{\thstar}(\cdot)= \pdata(\cdot)$, then $\phi_{\th_{f}^\condnce(\pdata,\pi)}(\cdot)\propto \pdata(\cdot)$.
\end{restatable}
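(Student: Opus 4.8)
The plan is to exploit the Bregman-divergence structure of the objective, exactly as in the proofs of Propositions~\ref{prop:fisher} and~\ref{prop:generalized_giso_fisher}. First I would observe that in the defining identity~\eqref{eq:cnce_obj} the subtracted term $\E_{\pdata(x)\pi(y|x)}[f(\frac{\pdata(x)\pi(y|x)}{\pdata(y)\pi(x|y)})]$ does not involve $\th$, so up to this additive constant the objective equals the single Bregman term
\[
\E_{\pdata(y)\pi(x|y)}\Bigl[\breg{f}\Bigl(\tfrac{\pdata(x)\pi(y|x)}{\pdata(y)\pi(x|y)},\ \rho_\th(x,y)\Bigr)\Bigr].
\]
Because $f$ is strictly convex, $\breg{f}(z,z')\ge 0$ with equality iff $z=z'$; hence this expression is nonnegative and vanishes precisely when $\rho_\th(x,y)=\frac{\pdata(x)\pi(y|x)}{\pdata(y)\pi(x|y)}$ for $\pdata(y)\pi(x|y)$-almost every $(x,y)$. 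The support hypothesis $\supp(\pdata(x)\pi(y|x))=\supp(\pdata(y)\pi(x|y))$ is what guarantees that both ratios are finite and strictly positive on the support of the governing measure, so these manipulations are legitimate.

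Next I would verify that this lower bound of $0$ is actually attained, and attained by the well-specified parameter. If $c\phi_{\thstar}=\pdata$, then $\frac{\phi_{\thstar}(x)}{\phi_{\thstar}(y)}=\frac{\pdata(x)}{\pdata(y)}$ since the multiplicative constant $c$ cancels, and therefore $\rho_{\thstar}(x,y)=\frac{\phi_{\thstar}(x)\pi(y|x)}{\phi_{\thstar}(y)\pi(x|y)}=\frac{\pdata(x)\pi(y|x)}{\pdata(y)\pi(x|y)}$, i.e. the Bregman term is zero at $\thstar$. Consequently every minimizer $\hat\th\defeq\th_f^\condnce(\pdata,\pi)$ achieves the same objective value and hence also drives the Bregman term to zero, and strict convexity forces $\rho_{\hat\th}(x,y)=\frac{\pdata(x)\pi(y|x)}{\pdata(y)\pi(x|y)}$ almost everywhere. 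The common channel factor $\pi(y|x)/\pi(x|y)$ cancels on both sides, leaving
\[
\frac{\phi_{\hat\th}(x)}{\phi_{\hat\th}(y)}=\frac{\pdata(x)}{\pdata(y)}
\]
for $\pdata(y)\pi(x|y)$-almost every $(x,y)$.

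The final and most delicate step is to upgrade this pairwise almost-everywhere identity into the global statement $\phi_{\hat\th}\propto\pdata$. Writing $g(x)\defeq \phi_{\hat\th}(x)/\pdata(x)$, the relation rearranges to $g(x)=g(y)$ for almost every pair $(x,y)$ drawn from $\pdata(y)\pi(x|y)$; equivalently, for $\pdata$-almost every $y$, the function $g$ equals the constant $g(y)$ throughout $\supp(\pi(\cdot|y))$. To conclude that $g$ is one single constant on all of $\supp(\pdata)$ I would invoke a connectivity argument on the support of $\pi$: two overlapping conditional supports must share the same value of $g$, and chaining such overlaps across $\supp(\pdata)$ propagates a common constant. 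This is where I expect the only genuine difficulty to lie. For the canonical channels named after the definition it is immediate---e.g. the Gaussian channel $\pi(y|x)=\Nc(y;x,\sigma^2 I)$ has $\supp(\pi(\cdot|y))=\Real^d\supseteq\supp(\pdata)$, so $g$ is constant in a single step---whereas for a compactly supported (e.g. uniform-ball) channel one additionally needs $\supp(\pdata)$ to be connected relative to the channel's reach so that the chaining closes up. The uniqueness of $\thstar$ then fixes the resulting proportionality constant, yielding $\phi_{\hat\th}(\cdot)\propto\pdata(\cdot)$.
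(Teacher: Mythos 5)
Your proposal is correct and follows exactly the route the paper intends: up to a $\th$-independent constant, the $f$-CondNCE objective is the expected Bregman divergence between the true joint ratio and $\rho_\th(x,y)$, so strict convexity of $f$ forces any population minimizer to satisfy $\rho_{\hat\th}(x,y)=\pdata(x)\pi(y|x)/(\pdata(y)\pi(x|y))$ almost everywhere, and the well-specified $\thstar$ attains the minimum value since the constant $c$ cancels in the ratio. The paper in fact prints no proof of Proposition~\ref{prop:fisher_condnce} (nor of Propositions~\ref{prop:fisher} and~\ref{prop:generalized_giso_fisher}), leaving precisely this Bregman argument implicit, so your write-up is if anything more complete than the source. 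Your flagged ``delicate step'' --- upgrading the pairwise identity $\phi_{\hat\th}(x)/\phi_{\hat\th}(y)=\pdata(x)/\pdata(y)$ on $\supp(\pdata(y)\pi(x|y))$ to global proportionality --- is a genuine subtlety that the stated hypotheses (support symmetry of the two joints plus uniqueness of $\thstar$) do not by themselves settle for channels of limited reach; in the exponential-family setting of Section~\ref{sec:analysis} the gap closes exactly when $\{\psi(x)-\psi(y)\suchthat (x,y)\in\supp(\pdata(y)\pi(x|y))\}$ spans $\Real^p$, which is the identifiability content implicitly packed into the word ``unique,'' and your connectivity/chaining argument is the correct nonparametric analogue.
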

In practice, given $\ndata$ samples $\{(x_i)\}_{i=1}^{\ndata}$ drawn i.i.d. from $\pdata(x)$ and conditional samples $\{y_{ij}\}_{j=1}^K$ conditionally independent from $\pi(y|x_i)$ for each $i$, we let $\Lc_f^{\condnce}(\phi_\th;\pdatah,\hat{\pi})$ denote the corresponding empirical objective with a slight abuse of notation.

%
%
%
%
%
%
%
%
%
%
%
%
%
%
%
%
%

%
%
%
%
%
%
%

\section{Connecting the Dots}
In this section, we explain how the estimators introduced in the previous section unify and generalize the existing estimators and provide new theoretical insights.

\subsection{MLE, MC-MLE, and GlobalGISO as Limiting Instances of Centered NCE}
\label{sec:centnce_unify}

As alluded to above, $\a$-CentNCE estimators \emph{interpolate} between MLE~\citep{Fisher1922} $(\a=1)$ and GlobalGISO~\citep{Shah--Shah--Wornell2023} ($\a=0$, specifically for exponential family), provided that $Z_\a(\th)$ can be computed analytically, \ie without estimation.
In the case of estimating $Z_\a(\th)$ with samples, $\a$-CentNCE objective recovers MC-MLE~\citep{Geyer1994} when $\a=1$.
We formally summarize the connections in the next statement and Table~\ref{tab:centnce_objs}.

\begin{restatable}[$\a$-CentNCE subsumes MLE and GlobalGISO]{theorem}{thmcentncesubsumes}
\label{thm:centnce_subsumes}
The following holds:
\begin{enumerate}[leftmargin=*]
\item ($\a=0$: GlobalGISO) 
For an exponential family $\phi_\th(x)$, if $\Xc$ is bounded and $\pref(x)$ is a uniform distribution over $\Xc$, the 0-CentNCE objective $\tilde{\Lc}_0(\th;\pdata,\pref)$ is equivalent to GlobalGISO~\citep{Shah--Shah--Wornell2021b}.

\item ($\a=1$: MLE) If $Z_1(\th)$ is assumed to be computable for each $\th$, the 1-CentNCE objective $\tilde{\Lc}_1(\th;\pdatah,\pref)$ is equivalent to MLE~\citep{Fisher1922}. 

\item ($\a=1$: MC-MLE) 
If $Z_1(\th)=\E_{\pref}[\frac{\phi_\th(x)}{\pref(x)}]$ is estimated with empirical noise distribution $\prefh(x)$, 
the 1-CentNCE objective $\tilde{\Lc}_1(\th;\pdatah,\prefh)$ is equivalent to MC-MLE~\citep{Geyer1994}. 

\end{enumerate}
\end{restatable}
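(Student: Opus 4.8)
The plan is to evaluate the $\a$-CentNCE objective at the two boundary values $\a = 1$ (for MLE and MC-MLE) and $\a = 0$ (for GlobalGISO), simplify each resulting functional, and match it term-by-term against the target objective. Since the estimator is defined only through an $\argmin$, I may freely discard any additive or multiplicative constant that does not depend on $\th$; this bookkeeping is justified by the invariance properties of the $f$-NCE objective recorded in Section~\ref{sec:prelim}. Both boundary cases are limits of the one-parameter family
\[
\Lc_\a^\centnce(\th;\pdata,\pref) = \frac{\E_{\pdata}[\rho_\th^{\a-1}]\,\bigl(\E_{\pref}[\rho_\th^{\a}]\bigr)^{(1-\a)/\a}}{1-\a}, \qquad \rho_\th = \frac{\phi_\th}{\pref},
\]
so the first task in each case is to resolve an indeterminate form and thereby reproduce the corresponding entry of Table~\ref{tab:centnce_objs}.

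\emph{MLE and MC-MLE ($\a = 1$).} Writing $t = \a - 1$, I would subtract the $\th$-independent constant $1/(1-\a)$ and Taylor-expand the two factors to first order in $t$: using $\rho_\th^{\a-1} = 1 + t\log\rho_\th + O(t^2)$ and $\bigl(\E_{\pref}[\rho_\th^{\a}]\bigr)^{(1-\a)/\a} = 1 - t\log\E_{\pref}[\rho_\th] + O(t^2)$, the ratio converges to $\E_{\pdata}[\log(\pref/\phi_\th)] + \log\E_{\pref}[\phi_\th/\pref]$, matching the $\a=1$ entry of the table. Because $\pref$ is a density, $\E_{\pref}[\phi_\th/\pref] = \int_{\supp(\pref)}\phi_\th\,\diff x$ equals the true partition function $Z(\th) = \int\phi_\th\,\diff x$ whenever $\supp(\phi_\th)\subseteq\supp(\pref)$, so the objective reduces, up to the constant $\E_{\pdata}[\log\pref]$, to the negative log-likelihood $-\E_{\pdata}[\log\phi_\th] + \log Z(\th)$; this is exactly MLE. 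Replacing the population expectation over $\pref$ by the empirical $\prefh$ turns $\E_{\pref}[\phi_\th/\pref]$ into the importance-sampling estimate $\frac{1}{\nref}\sum_{j}\phi_\th(x_j')/\pref(x_j')$ of $Z(\th)$, which is precisely the MC-MLE objective of \citet{Geyer1994}.

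\emph{GlobalGISO ($\a = 0$).} Here the prefactor $1/(1-\a)$ is regular, but the second factor is a $1^\infty$ indeterminate form; I would evaluate $\lim_{\a\to 0}\frac{1-\a}{\a}\log\E_{\pref}[\rho_\th^{\a}] = \E_{\pref}[\log\rho_\th]$ via the expansion $\E_{\pref}[\rho_\th^{\a}] = 1 + \a\,\E_{\pref}[\log\rho_\th] + O(\a^2)$, which yields $\E_{\pdata}[\pref/\phi_\th]\,e^{\E_{\pref}[\log(\phi_\th/\pref)]}$, the $\a=0$ table entry. Substituting the exponential family $\phi_\th(x) = e^{\langle\th,\psi(x)\rangle}$ and the uniform reference $\pref \equiv 1/\abs{\Xc}$, the volume factors cancel and $\E_{\pref}[\log(\phi_\th/\pref)] = \langle\th, \E_{\pref}[\psi]\rangle + \text{const}$, so the objective becomes, up to a positive multiplicative constant, $\E_{\pdata}\bigl[e^{\langle\th,\,\E_{\pref}[\psi] - \psi(x)\rangle}\bigr]$, which is the GISO objective of \citet{Shah--Shah--Wornell2023}.

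\emph{Main obstacle.} The delicate steps are the two limiting arguments: both require justifying the interchange of the limit in $\a$ with the expectations, so that the Taylor expansions may be integrated termwise, and this is where the boundedness of $\Xc$ and the implicit moment/regularity conditions on $\psi$ under the uniform reference enter. The remaining work is to confirm that the simplified functional in the $\a=0$ case coincides with the published GISO objective including its constant normalization, and to verify that the leftover $\th$-independent terms (the quantity $\E_{\pdata}[\log\pref]$ and the volume factors) genuinely drop out of the $\argmin$ rather than merely appearing to.
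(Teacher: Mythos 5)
Your proposal is correct and follows essentially the same route as the paper's proof: both evaluate the limits of the $\a$-centered normalizer $Z_\a(\th)$ at $\a\to 1$ (giving $\E_{\pref}[\phi_\th/\pref]$) and $\a\to 0$ (giving $e^{\E_{\pref}[\log(\phi_\th/\pref)]}$), reduce the objective to the corresponding entries of Table~\ref{tab:centnce_objs}, and then specialize to the exponential family with uniform $\pref$ for GlobalGISO and to computable versus empirically estimated $Z_1(\th)$ for MLE versus MC-MLE. Your explicit Taylor-expansion resolution of the $0/0$ indeterminacy at $\a=1$ is a slightly more careful rendering of a step the paper states without elaboration, but it is the same argument.
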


\begin{remark}
Note that the connection between GlobalGISO and MLE can be made for the case when $Z_\a(\th)$ is assumed to be computable for any $\th$. 
At one extreme when $\a=1$, in which case the objective boils down to that of MLE, it is clear that $Z_1(\th)=\E_{\pref}[\frac{\phi_\th(x)}{\pref(x)}]=\int \phi_\th (x) dx$ becomes the standard partition function.
In the other extreme case where $\a\to 0$, if $\phi_\th(x)=\exp(\langle\th,\psi(x)\rangle)$ is an exponential family, 
computing $Z_0(\th)$ boils down to computing $\E_{\pref(x)}[\psi(x)]$ since $Z_0(\th)\propto\exp(\langle\th,\E_{\pref}[\psi]\rangle)$. 
For a special choice of $\pref$ (\eg uniform distribution) and $\psi$ (\eg polynomial and sinusoidal functions), this term can be computed analytically, as concretely illustrated by \citep{Shah--Shah--Wornell2023}.
We also provide an alternative theoretical view of the 0-CentNCE objective as a certain KL divergence minimization problem, generalizing the justification for GlobalGISO given in \citep{Shah--Shah--Wornell2023}; see Theorem~\ref{thm:ssw2021}.
\end{remark}

Next, we provide a result connecting $f_\a$-NCE and $\a$-CentNCE estimators, under the assumption that we have an optimization oracle that finds the global minima of a given objective.
\begin{restatable}[$f_\a$-NCE and $\a$-CentNCE estimators are equivalent]{theorem}{thmcentnceequivfnce}
\label{thm:centnce_equiv_fnce}
For a set $A\subset\Th\times\Real$ in the augmented parameter space, let $A|_{\Th}\defeq \{\th\suchthat (\th,\nu)\in A\text{ for some }\nu\in\Real\}$ denote the subset corresponding to $\Th$.
Then,
\begin{align*}
&\argmin_{\thaug=(\th,\nu)\in\Th\times\Real} \Lc_{f_\a}^{\nce}(\thaug;\pdatah,\prefh)\Big|_{\Th} 
=\argmin_{\th\in\Th} \Lc_\a^{\centnce}(\th;\pdatah,\prefh).
\end{align*}
\end{restatable}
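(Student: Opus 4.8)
The plan is to \emph{profile out} the scaling freedom in the augmented $f_\a$-NCE objective and show that the resulting profiled objective is a strictly monotone reparametrization of the $\a$-CentNCE objective, so that the two necessarily share the same set of minimizers over $\Th$.

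First I would fix $\th$ and carry out the inner minimization over the augmentation coordinate. Writing $r_\th \defeq \phi_\th/\prefh$ and letting $t>0$ denote the free multiplicative scale that the augmentation coordinate applies to $\phi_\th$ (for power scores the ratio hyperparameter is set to $1$ as in Sec.~\ref{sec:prelim}, so the sole free scaling is this $t$), a direct substitution into Eq.~\eqref{eq:nce_obj} with $f=f_\a$ gives, up to an additive constant,
\[
\Lc_{f_\a}^{\nce}(\th,t;\pdatah,\prefh) = \frac{t^{\a-1}}{1-\a}\,A(\th) + \frac{t^{\a}}{\a}\,B(\th),
\]
where $A(\th)\defeq \E_{\pdatah}[r_\th^{\a-1}]>0$ and $B(\th)\defeq \E_{\prefh}[r_\th^{\a}]>0$. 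Differentiating in $t$ yields $\partial_t \Lc_{f_\a}^{\nce} = t^{\a-2}\bigl(t\,B(\th)-A(\th)\bigr)$, whose unique sign change on $(0,\infty)$ occurs at $t^\star(\th)=A(\th)/B(\th)$; since $t^{\a-2}>0$ for all $\a$, this $t^\star$ is the unique global minimizer over $t>0$ irrespective of the sign of $\a$. The key consequence is that the inner minimizer is unique and interior, so the projection onto $\Th$ of the joint minimizer set equals the minimizer set of the profiled objective $\ell_1(\th)\defeq \Lc_{f_\a}^{\nce}(\th,t^\star(\th))$, which evaluates to $\ell_1(\th)=\frac{1}{\a(1-\a)}A(\th)^{\a}B(\th)^{1-\a}$ (up to the same additive constant).

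Next I would line this up against the $\a$-CentNCE objective. Using the centering identity $\E_{\prefh}[\tilde\rho_{\th;\a}^{\a}]=1$ exactly as in Sec.~\ref{sec:centnce}, one has $\Lc_\a^{\centnce}(\th;\pdatah,\prefh)=\ell_2(\th)\defeq \frac{1}{1-\a}A(\th)\,B(\th)^{(1-\a)/\a}$. Both profiled objectives depend on $\th$ only through $w(\th)\defeq A(\th)^{\a}B(\th)^{1-\a}>0$: indeed $\ell_1=\psi_1(w)$ and $\ell_2=\psi_2(w)$ with $\psi_1(w)=\frac{w}{\a(1-\a)}$ and $\psi_2(w)=\frac{w^{1/\a}}{1-\a}$. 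A one-line computation gives $\psi_1'(w)=\frac{1}{\a(1-\a)}$ and $\psi_2'(w)=\frac{w^{1/\a-1}}{\a(1-\a)}$, so for every $\a\notin\{0,1\}$ both derivatives carry the common sign $\sgn(\a(1-\a))$ on $w>0$. Hence $\psi_1$ and $\psi_2$ are strictly monotone in the \emph{same} direction, and composing with $w(\th)$ shows $\argmin_\th \ell_1 = \argmin_\th \ell_2$ (both equal to the minimizers of $w$ when $\a(1-\a)>0$ and to its maximizers when $\a(1-\a)<0$). Combining this with the projection identity from the first step yields the claimed equality of minimizer sets.

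The main obstacle I anticipate is the sign bookkeeping: because $\a$ ranges over all of $\Real\setminus\{0,1\}$, the direction of each optimization flips with the signs of $\a$ and $1-\a$, and a naive monotone-transformation argument (e.g.\ raising $\ell_1$ to the $1/\a$ power) is awkward for negative $\a$ and for objectives that are themselves negative. The clean resolution is the observation above that $\psi_1$ and $\psi_2$ share the monotonicity sign $\sgn(\a(1-\a))$ uniformly in $w$, which dispenses with case analysis in one stroke. A secondary point requiring care is justifying the projection step---that the joint argmin set projects \emph{exactly} onto $\argmin_\th\ell_1$---which relies on the inner minimizer $t^\star(\th)$ being unique and interior for every $\th$, as established from the sign of $\partial_t\Lc_{f_\a}^{\nce}$.
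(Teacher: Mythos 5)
Your proposal is correct, and it takes a genuinely different route from the paper's own proof. The paper argues at the level of first-order conditions: it computes the stationarity equation in $\nu$ to get $e^{\nu^*(\th)}=\E_{\pdatah}[r_\th^{\a-1}]/\E_{\prefh}[r_\th^{\a}]$, writes out $\nabla_\th$ of both objectives, and observes that both estimators are roots of the same $\th$-gradient equation $\E_{\prefh}[r_\th^\a\nabla_\th\log r_\th]\,\E_{\pdatah}[r_\th^{\a-1}]=\E_{\prefh}[r_\th^\a]\,\E_{\pdatah}[r_\th^{\a-1}\nabla_\th\log r_\th]$. You instead profile out the scale explicitly, obtain the closed form $\ell_1(\th)=\frac{1}{\a(1-\a)}A^\a B^{1-\a}$ for the profiled $f_\a$-NCE objective, and exhibit both $\ell_1$ and $\Lc_\a^\centnce=\frac{1}{1-\a}A\,B^{(1-\a)/\a}$ as strictly monotone functions, with the common monotonicity sign $\sgn(\a(1-\a))$, of the single quantity $w(\th)=A(\th)^\a B(\th)^{1-\a}$. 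What your approach buys is a statement directly about \emph{global minimizer sets}: the paper's argument, as written, only matches stationary points, and equality of stationary-point sets does not by itself yield equality of argmin sets without an additional convexity or uniqueness appeal; your monotone-reparametrization argument closes that gap, and your justification of the projection step (uniqueness and interiority of $t^\star(\th)=A(\th)/B(\th)$ from the sign of $t^{\a-2}(tB-A)$) is the standard profiling fact, correctly applied. What the paper's route buys is that the gradient identities it derives are reused elsewhere (e.g.\ in the asymptotic and finite-sample analysis), whereas your closed-form profiling is specific to the power family. Both arguments share the implicit restriction $\a\notin\{0,1\}$ (the exponents $(1-\a)/\a$ and the factor $1/(\a(1-\a))$ degenerate at the endpoints), and both rely on $A(\th),B(\th)\in(0,\infty)$, which holds under the standing support and boundedness assumptions.
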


\begin{remark}
We remark that, for $\a=1$, \citet{Riou-Durand--Chopin2018} proposed to convert the MC-MLE objective by the inverse of the 1-centering operation, which they call the \emph{Poisson transform}~\citep{Barthelme--Chopin2015}, into the $f_1$-NCE objective, which they call the \emph{importance sampling (IS) objective}. In this view, our $\a$-centering can be understood as the inverse of the \emph{generalized} Poisson transform. Via the equivalence, \citet{Riou-Durand--Chopin2018} analyzed the asymptotic property of MC-MLE by studying the $f_1$-NCE. 
Similarly, one can analyze the statistical property of GlobalGISO (with any valid choice of $\pref$ beyond the uniform distribution) when $Z_0(\th)$ is estimated with samples from $\pref(x)$ via analyzing the $f_0$-NCE objective.
\end{remark}

\newcommand{\sm}{\mathsf{sm}}
\newcommand{\ssm}{\mathsf{ssm}}
\subsection{Revisiting the Connection Between CondNCE and Score Matching}
\label{sec:condnce_sm}
\citet{Ceylan--Gutmann2018} argued that 
for a continuous domain $\Xc$, the original CondNCE objective is related to the score matching objective of \citet{Hyvarinen2005}, justifying the consistency of CondNCE. 
Here, we demonstrate that this interpretation can be misleading in a realistic setting with finite samples.
To revisit this connection, we further restrict the type of channels to $\pi_\eps(y|x)$ parameterized by a parameter $\eps>0$, such that $y\sim\pi_\eps(y|x)$ is equivalent to $y=x+\eps v$ for some $v\sim \pslice(\cdot)$ with zero mean and identity covariance, \ie $\E_{\pslice}[v]=0$ and $\E_{\pslice}[vv^\intercal]=I_d$.
With this simplification, we denote the objective function as 
\begin{align*}
&\Lc_f^{\condnce}(\phi_\th;\pdata,\pslice;\eps)\\
&\defeq \E_{\pdata(x)\pslice(v)}\bigl[-f'(\rho_\th(x,y))\\
&\qquad\qquad\qquad
+\rho_\th(y,x) f'(\rho_\th(y,x)) - f(\rho_\th(y,x))\bigr],
\end{align*}
where $y\defeq x+\eps v$.
Then, we show that the $f$-CondNCE objective behaves as the score matching objective~\citep{Hyvarinen2005} in the limit of $\eps\to 0$.
Formally:
\begin{restatable}[Asymptotic behavior of population $f$-CondNCE for small $\eps$]{theorem}{thmcondnceasymppop}
\label{thm:condnce_asymp_pop}
The population $f$-CondNCE objective can be written as
\begin{align*}
\Lc_f^{\condnce}(\phi_\th;\pdata,\pslice;\eps)
&= -f(1) + f''(1)\Lc^{\sm}(\phi_\th;\pdata) \eps^2+o(\eps^2),
\end{align*}
where 
\[
\Lc^{\sm}(\phi_\th;\pdata)
\defeq \E_{\pdata(x)}\Bigl[\tr(\nabla_x^2\log\phi_\th(x)) + \half \|\nabla_x\log\phi_\th(x)\|^2 \Bigr]
\]
denotes the (population) score matching (SM) objective~\citep{Hyvarinen2005}.
\end{restatable}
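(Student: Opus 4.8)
The plan is to reduce the integrand to a single smooth scalar function of the log-ratio and then perform two nested Taylor expansions in $\eps$. Because the channel is symmetric, $\rho_\th(x,y) = \phi_\th(x)/\phi_\th(y)$ and $\rho_\th(y,x) = \rho_\th(x,y)^{-1}$, so setting $u \defeq \log\rho_\th(x,y) = \log\phi_\th(x) - \log\phi_\th(y)$ the integrand $-f'(\rho_\th(x,y)) + \rho_\th(y,x) f'(\rho_\th(y,x)) - f(\rho_\th(y,x))$ equals $G(u)$, where $G(u) \defeq -f'(e^u) + e^{-u} f'(e^{-u}) - f(e^{-u})$. Thus $\Lc_f^\condnce(\phi_\th;\pdata,\pslice;\eps) = \E_{\pdata(x)\pslice(v)}[G(u)]$ with $u = u(x,\eps v)$ and $y = x + \eps v$.

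First I would expand $G$ about $u = 0$, the value toward which $u$ concentrates as $\eps\to 0$. Direct differentiation gives $G(0) = -f(1)$, then $G'(u) = -e^u f''(e^u) - e^{-2u} f''(e^{-u})$ so that $G'(0) = -2f''(1)$, and differentiating once more yields $G''(0) = f''(1)$ after the two $f'''(1)$ contributions cancel; hence $G(u) = -f(1) - 2f''(1) u + \half f''(1) u^2 + O(u^3)$. Second, I would expand the log-ratio itself: since $y = x + \eps v$, a second-order Taylor expansion of $\log\phi_\th$ gives $u = -\eps\langle s, v\rangle - \frac{\eps^2}{2} v^\intercal H v + O(\eps^3)$, where $s \defeq \nabla_x\log\phi_\th(x)$ and $H \defeq \nabla_x^2\log\phi_\th(x)$. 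Substituting into the expansion of $G$ and keeping terms through order $\eps^2$ (using $u^2 = \eps^2\langle s,v\rangle^2 + O(\eps^3)$) produces $G(u) = -f(1) + 2f''(1)\eps\langle s,v\rangle + f''(1)\eps^2\bigl[v^\intercal H v + \half\langle s,v\rangle^2\bigr] + O(\eps^3)$.

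Finally I would take expectations, where the decisive step is that the $O(\eps)$ term vanishes because $\E_{\pslice}[v] = 0$ — this is precisely what makes the leading correction quadratic rather than linear in $\eps$. Using $\E_{\pslice}[vv^\intercal] = I_d$ gives $\E_{\pslice}[\langle s,v\rangle^2] = \|s\|^2$ and $\E_{\pslice}[v^\intercal H v] = \tr(H)$, so integrating over $x\sim\pdata$ turns the $\eps^2$ coefficient into $f''(1)\,\E_{\pdata}[\tr(\nabla_x^2\log\phi_\th) + \half\|\nabla_x\log\phi_\th\|^2] = f''(1)\Lc^{\sm}(\phi_\th;\pdata)$, which is the claimed expansion.

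The formal computation above is routine; the main obstacle is rigor in the remainder term. The hard part will be justifying that the combined $O(u^3)$ and $O(\eps^3)$ errors, once integrated against $\pdata(x)\pslice(v)$, are genuinely $o(\eps^2)$ uniformly rather than merely pointwise. This requires a dominated-convergence argument and hence regularity assumptions — smoothness of $f$ in a neighborhood of $1$, smoothness and controlled growth of $\log\phi_\th$ and its first two derivatives on $\supp(\pdata)$, and enough finite moments of $\pslice$ — so that the interchange of Taylor expansion and expectation is valid and the error is controlled; I would expect these to be the regularity conditions under which the theorem is actually stated.
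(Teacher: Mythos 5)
Your proposal is correct and follows essentially the same route as the paper: a second-order Taylor expansion of the integrand in $\eps$ around $\eps=0$, with the coefficients $-f(1)$, $-2f''(1)$, $f''(1)$ matching the paper's computation exactly, and the zero-mean property of $\pslice$ killing the $O(\eps)$ term. The only organizational difference is that the paper establishes the empirical expansion (Theorem~\ref{thm:condnce_asymp_emp}) by differentiating the objective directly in $\eps$ and then obtains the population statement as a corollary, whereas you expand the composite map via $G(u)$ and $u(\eps)$ separately and pass to the population expectation at the end; your closing remark on controlling the remainder uniformly is a point the paper's proof does not address.
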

This statement generalizes the result in \citep{Ceylan--Gutmann2018} for $f_{\log}$-CondNCE to $f$-CondNCE for any $f$.
Below, we explain why this statement may be misleading as the $f$-CondNCE estimator with $\eps\to 0$ does \emph{not} behave like the SM estimator.
To correctly understand the behavior, we need to consider the \emph{empirical} $f$-CondNCE objective function that defines the empirical estimator, instead of the population objective.
\begin{restatable}[Asymptotic behavior of empirical $f$-CondNCE for small $\eps$]{theorem}{thmcondnceasympemp}
\label{thm:condnce_asymp_emp}
The empirical $f$-CondNCE objective can be written as
\begin{align*}
{\Lc}_f^{\condnce}(\phi_\th;\pdatah,\hat{\pslice})
&= -f(1) \\
&\qquad+ 2f''(1)\E_{\pdatah(x)\hat{\pslice}(v)} [\nabla_x\log\phi_\th(x)^\intercal v] \eps\\ 
&\qquad+ f''(1){\Lc}^{\ssm}(\phi_\th;\pdatah,\hat{\pslice}) \eps^2+o(\eps^2).
\end{align*}
Here, we define the empirical sliced SM (SSM) objective~\citep{Song--Garg--Shi--Ermon2020}
\begin{align*}
&{\Lc}^{\ssm}(\phi_\th;\pdatah,\hat{\pslice})\\
&\defeq \E_{\pdatah(x)\hat{\pslice}(v)}\Bigl[v^\intercal\nabla_x^2\log\phi_\th(x) v + \half (v^\intercal\nabla_x\log\phi_\th(x))^2 \Bigr].
\end{align*}
\end{restatable}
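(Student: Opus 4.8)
The plan is to Taylor-expand the per-sample integrand of the empirical objective in $\eps$ to second order, exploiting the symmetric-channel simplification $\rho_\th(x,y)=\phi_\th(x)/\phi_\th(y)$, so that $\rho_\th(y,x)=\rho_\th(x,y)^{-1}$ and the entire summand depends on a single scalar. Concretely, with $y\defeq x+\eps v$ I would set $r\defeq \rho_\th(y,x)=\phi_\th(x+\eps v)/\phi_\th(x)$ and write the per-sample contribution as $h(r)\defeq -f'(1/r)+r f'(r)-f(r)$, since $\rho_\th(x,y)=1/r$. The empirical objective is then $\E_{\pdatah(x)\hat{\pslice}(v)}[h(r)]$, and everything reduces to composing two expansions: that of $h$ about $r=1$, and that of $r$ about $\eps=0$.

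First I would expand $h$ about $r=1$. Differentiating gives $h'(r)=r^{-2}f''(1/r)+r f''(r)$, and a second differentiation yields $h''(r)=-2r^{-3}f''(1/r)-r^{-4}f'''(1/r)+f''(r)+r f'''(r)$. Evaluating at $r=1$ produces $h(1)=-f(1)$, $h'(1)=2f''(1)$, and—after the two $f'''(1)$ contributions cancel—$h''(1)=-f''(1)$. This cancellation of the third-derivative terms is the key structural fact that makes the second-order coefficient depend only on $f''(1)$, matching the universal prefactor in the statement.

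Next I would expand $r$ in $\eps$ through the logarithm: writing $s\defeq \log\phi_\th(x+\eps v)-\log\phi_\th(x)=\eps\, a+\tfrac{\eps^2}{2}\,b+o(\eps^2)$ with $a\defeq v^\intercal\nabla_x\log\phi_\th(x)$ and $b\defeq v^\intercal\nabla_x^2\log\phi_\th(x)\,v$, and exponentiating, gives $r-1=\eps\, a+\tfrac{\eps^2}{2}(b+a^2)+o(\eps^2)$ and $(r-1)^2=\eps^2 a^2+o(\eps^2)$. Substituting into $h(r)=-f(1)+2f''(1)(r-1)-\tfrac{1}{2}f''(1)(r-1)^2+o((r-1)^2)$ and collecting by powers of $\eps$, the order-$\eps$ term is $2f''(1)\,a$, while the order-$\eps^2$ term is $f''(1)\big(b+a^2-\tfrac{1}{2}a^2\big)=f''(1)\big(b+\tfrac{1}{2} a^2\big)$. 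Recognizing $b+\tfrac{1}{2} a^2=v^\intercal\nabla_x^2\log\phi_\th(x)\,v+\tfrac{1}{2}(v^\intercal\nabla_x\log\phi_\th(x))^2$ as precisely the SSM integrand, I would take the expectation $\E_{\pdatah(x)\hat{\pslice}(v)}[\carddot]$ to recover exactly the claimed three-term expansion, with first-order term $2f''(1)\E_{\pdatah(x)\hat{\pslice}(v)}[\nabla_x\log\phi_\th(x)^\intercal v]\,\eps$ and second-order term $f''(1)\Lc^{\ssm}\eps^2$.

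The main obstacle is bookkeeping rather than analysis: I must compose the two expansions consistently to order $\eps^2$, keeping the quadratic-in-$(r-1)$ correction because it interferes with the $a^2$ piece arising from the second-order part of $r-1$, and verify the $f'''(1)$ cancellation in $h''(1)$. Controlling the $o(\eps^2)$ remainder is comparatively mild here because $\pdatah$ and $\hat{\pslice}$ are finitely supported, so it follows from a uniform third-order Taylor bound on $\log\phi_\th$ over the finite sample set—requiring only that $\log\phi_\th$ be $C^3$ near the data, with no integrability conditions, in contrast to the population statement. Finally, I would contrast this with Theorem~\ref{thm:condnce_asymp_pop}: there the order-$\eps$ term carries $\E_{\pslice}[v]=0$ and so vanishes, whereas under the empirical $\hat{\pslice}$ the sample mean of $v$ is generically nonzero, so the linear-in-$\eps$ term survives and dominates the $\eps^2$ SSM term as $\eps\to0$—the mechanism behind the claimed failure of CondNCE to behave like score matching.
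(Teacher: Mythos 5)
Your proposal is correct and follows essentially the same route as the paper's proof: both are second-order Taylor expansions of the empirical objective in $\eps$ about $\eps=0$, with the same key cancellation of the $f'''(1)$ terms (your $h''(1)=-f''(1)$ is exactly what the paper obtains by differentiating the objective twice in $\eps$ via the chain rule through $\log r$). The only difference is organizational—you compose an expansion of the integrand in $r$ about $1$ with an expansion of $r$ in $\eps$, whereas the paper differentiates the composite directly in $\eps$—and the resulting coefficients agree term by term.
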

\begin{remark}
Since we assume that $\pslice(v)$ has zero mean, Theorem~\ref{thm:condnce_asymp_pop} readily follows as a corollary of Theorem~\ref{thm:condnce_asymp_emp}, as the $O(\eps)$ term will converge to 0 in the population limit of $\pslice$.
In a finite-sample regime, however, the dominating term of the $f$-CondNCE objective becomes the $O(\eps)$ term, \ie as $\eps\to 0$, we have
\begin{align*}
\frac{1}{\eps} \frac{\hat{\Lc}_f^{\condnce}(\phi_\th;\pdatah,\hat{\pslice})+f(1)}{2f''(1)} \stackrel{}{\to}
\E_{\pdatah(x)} [\nabla_x\log\phi_\th(x)]^\intercal \E_{\hat{\pslice}(v)}[v].
\end{align*}
Thus, the $f$-CondNCE objective is dominated by this statistical noise term when $\eps\ll 1$ with fixed sample size of $v\sim\pslice$, and thus too small $\eps$ should be avoided in stark contrast to the proposed justification in \citep{Ceylan--Gutmann2018}.
We revisit this degrading behavior after the finite-sample guarantee of $f$-CondNCE in Remark~\ref{rem:condnce}.

It is worth noting, however, that $\E_{\hat{\pslice}}[v]$ gets more concentrated around $0$ as the number of slicing vectors increases. 
Therefore, one could consider a carefully chosen $\eps$ as a function of the number of slicing vectors and distribution-dependent quantities, so that $\frac{1}{\eps}\E_{\pdatah(x)\hat{\pslice}(v)} [\nabla_x\log\phi_\th(x)^\intercal v]$ still vanishes as the number of slicing vectors increases. 
In this way, the $f$-CondNCE estimator might be still consistent with small $\eps$, emulating the behavior of SSM.

\textbf{Simulation.}
To demonstrate this behavior, we considered a simple synthetic setup, where the data generating distribution is $\Nc(\mu,1)$ with $\mu=1$.
With a conditional noise distribution $\pi(y|x)=\Nc(y|x,\eps^2I)$ with varying $\eps$, we plot the derivatives of the empirical objective of the original CNCE with varying $K\in\{1,4,16,64\}$, where the sample size is $N=10^4$.
As shown in Figure~\ref{fig:condnce}, the empirical derivatives characterize the mean fairly closely when $\eps\ge 10^{-2}$ or when $\eps$ is small and $K$ is large.
This simple 1D Gaussian example clearly shows the undesirable behavior of the CNCE objective when $\eps$ is small. 
More in-depth study on the effect of $\eps$ and $K$ for high-dimensional problems is left as a future work.
\end{remark}

\begin{figure}[htb]
    \centering
    \includegraphics[width=\linewidth]{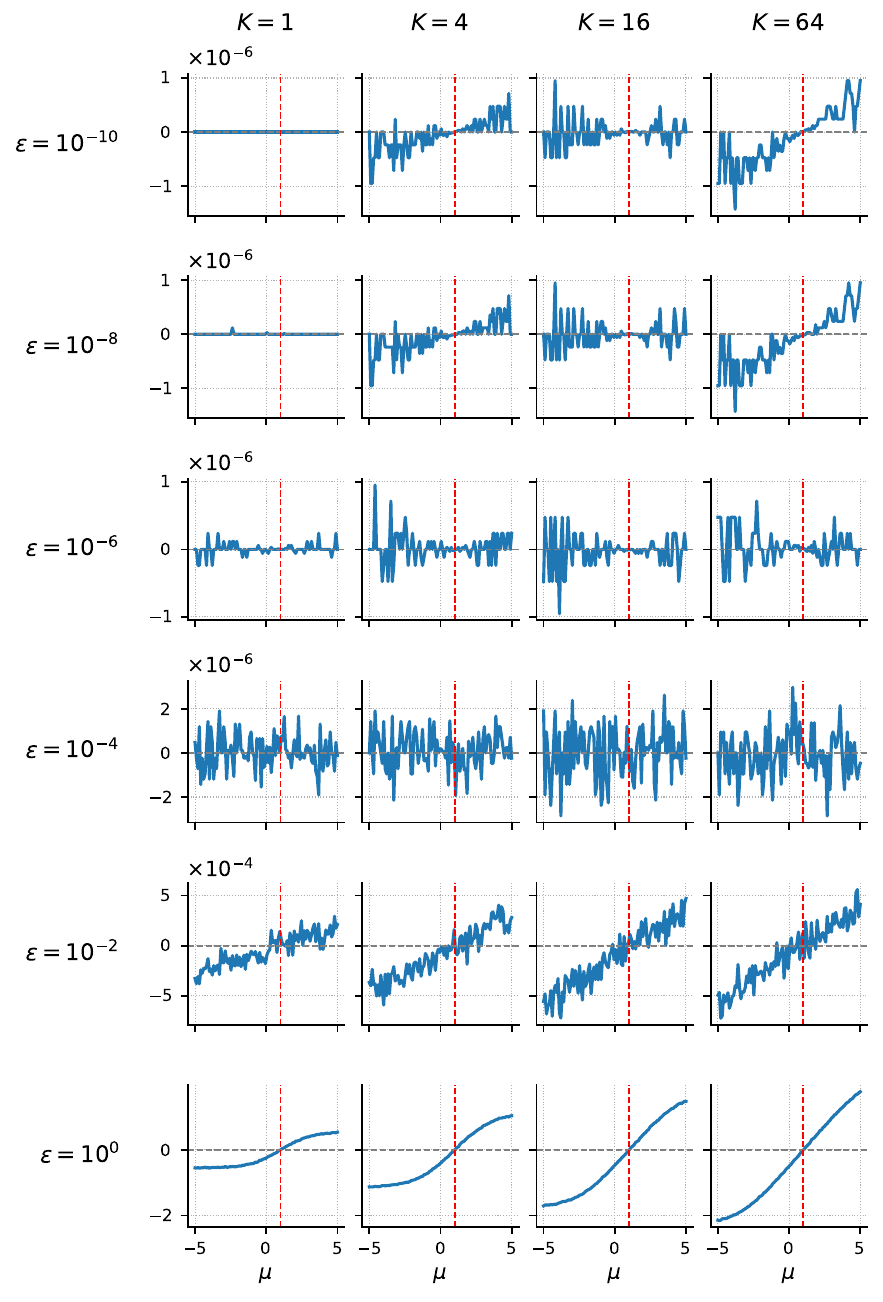}\vspace{-1.5em}
    \caption{Derivatives of the empirical CondNCE objective with varying $\eps\in\{10^{-10},\ldots,10^0\}$ and $K\in\{1,4,16,64\}$ for 1D Gaussian data with true mean $\mu=1.0$ (vertical dashed red lines) and a conditional noise distribution $\pi(y|x)=\Nc(y|x,\eps^2I)$.}
    \label{fig:condnce}
\end{figure}

\section{Finite-Sample Analysis}
\label{sec:analysis}
In this section, we provide finite-sample guarantees of regularized versions of the aforementioned NCE estimators,
specifically assuming an exponential family distribution model $\phi_\th(x)=\exp(\langle\th,\natstat(x)\rangle)$. 
Here, $\th\in\Real^p$ denotes the natural parameter, $\psi\suchthat \Xc\to\Real^p$ denotes the natural statistics, and $p$ denotes the number of parameters.
In what follows, we assume both well-specifiedness and identifiability, \ie there exists a \emph{unique} $\thstar\in\Th$ such that $\phi_{\thstar}(\cdot)\propto\pdata(\cdot)$.

Below, we establish the parametric error rate $O(n^{-1/2})$ of convergence for the regularized NCE estimators. 
The proofs adapt the analysis in \citep{Shah--Shah--Wornell2023} for GlobalGISO, which in turn built upon \citep{Negahban--Ravikumar--Wainwright--Yu2012,Vuffray--Misra--Lokhov--Chertkov2016,Vuffray--Misra--Lokhov2021,Shah--Shah--Wornell2021b}.
We note in passing that the non-regularized NCE estimators can also be analyzed, but we can only prove a suboptimal rate of $O(n^{-1/4})$ by following the existing analysis in \citep{Shah--Shah--Wornell2021b}.

Following \citep{Shah--Shah--Wornell2023}, we are specifically interested in the case where the statistics are bounded and so is the parameter space. 
We note that the bounded statistics may not be too restrictive, as in many practical scenarios the domain $\Xc$ may naturally be truncated during data acquisition~\citep{Liu--Kanamori--Williams2022}. 
\begin{assumption}[Bounded maximum norm of $\psi$]
\label{assum:nce_bdd_norm_psi}
$\sup_{x\in\Xc} \|\psi(x)\|_{\infty}\le \psimax$
for some $\psimax>0$.
\end{assumption}
\begin{assumption}[Bounded parameter space]
\label{assum:param_bdd}
For some constant $r>0$, $\sup_{\th\in\Th} \Rc(\th)\le r$.
\end{assumption}

We note that the gradient and Hessian of the $f$-NCE objective can be written as
\begin{align*}
\nabla\hat{\Lc}_f^\nce(\th)
&= \frac{1}{\nu} \E_{\pdatah}[\psi \ncederivfun{1}{f}{\data}(\rho_\th)] + \E_{\prefh}[\psi \ncederivfun{1}{f}{\noise}(\rho_\th)],\\
\nabla^2\hat{\Lc}_f^\nce(\th)
&= \frac{1}{\nu} \E_{\pdatah}[\psi\psi^\intercal \ncederivfun{2}{f}{\data}(\rho_\th)] 
+ \E_{\prefh}[\psi\psi^\intercal \ncederivfun{2}{f}{\noise}(\rho_\th)],
\end{align*}
where the functions $\ncederivfun{i}{f}{\rs}(\rho)$ 
for $i\in\{1,2\}$ and $\rs\in\{\data,\noise\}$ are defined in the leftmost column of Table~\ref{tab:nce_deriv_funcs}; see Lemma~\ref{lem:nce_derivatives}.

Our analysis relies on the boundedness of the model density ratio $\rho_\th\in (\rho_{\min},\rho_{\max})$. In each result, we clarify the definition of the worst-case density ratios  $(\rho_{\min},\rho_{\max})$.
These ratios affect the convergence rate through the following quantities:
\begin{align}
\label{eq:def_deriv_quantities}
\begin{aligned}
\ncederivinf{2}{f}{\rs} &\defeq \!\! \!\! \inf_{\rho\in (\rho_{\min},\rho_{\max})} |\ncederivfun{2}{f}{\rs}(\rho)| \quad \! \text{and}\! \quad\\
\ncederivsup{i}{f}{\rs} &\defeq \!\! \!\! \sup_{\rho\in (\rho_{\min},\rho_{\max})} |\ncederivfun{i}{f}{\rs}(\rho)| \quad \!\! \text{for}\!\! \quad i\in\{1,2\},
\end{aligned}
\end{align}
where $\rs\in\{\data,\noise\}$. We remark that these quantities differ for each estimator.
For the canonical choices of $f(\rho)$, i.e., log and asymmetric power, these quantities are explicitly given in Table~\ref{tab:nce_deriv_funcs}.

Let $\Rc\suchthat\Th\to\Real_{\ge 0}$ be a norm over $\Th$, and $\Rc^*\suchthat\Th\to\Real_{\ge 0}$ be its dual norm.
Define
\begin{align}
\onenormtotwonorm &\defeq \sup_{\th \in 4\Th\backslash\{0\}} \frac{\|\th\|_1}{\|\th\|_2},\\
\dualnormtomaxnorm &\defeq \sup_{\th\in \Real^k\backslash\{0\}}\frac{\Rc^*(\th)}{\|\th\|_{\max}},\\
\normtotwonorm &\defeq \sup_{\th \in \Th\backslash\{0\}} \frac{\Rc(\th)}{\|\th\|_2}. \label{eq:bound_norm_to_l2_norm}
\end{align}
Here $4\Th\defeq \{4\th\suchthat \th\in\Th\}$. These quantities capture the geometry of the norm $\Rc(\cdot)$ imposed on the parameter space $\Th$, and appear in the convergence rates.

\begin{table*}[t]
    \centering
    \small
    \caption{Definitions of $\ncederivfun{i}{f}{\rs}(\rho)$ 
for $i\in\{1,2\}$ and $\rs\in\{\data,\noise\}$ for example generator functions $f$.}\vspace{.5em}
    \scalebox{1.}{\begin{tabular}{l c c c c}
    \toprule
    Definitions & Log 
    & Asymmetric power 
    \\
    \midrule
    $f(\rho)$ 
    & $f_{\log}(\rho)$ 
    & $f_\a(\rho)$ 
    \\
    \midrule
    $\ncederivfun{1}{f}{\data}(\rho)\defeq-\rho f''(\rho)$ 
    & $-\frac{1}{\rho+1}$ 
    & $-\rho^{\a-1}$ 
    \\
    $\ncederivfun{1}{f}{\noise}(\rho)\defeq\rho^2 f''(\rho)$ 
    & $\frac{\rho}{\rho+1}$ 
    & $\rho^{\a}$ 
    \\
    $\ncederivfun{2}{f}{\data}(\rho)\defeq\rho \gfunc(\rho)$ 
    & $\frac{\rho}{(\rho+1)^2}$ 
    & $(1-\a)\rho^{\a-1}$ 
    \\
    $\ncederivfun{2}{f}{\noise}(\rho)\defeq\rho^2 (f''(\rho)-\gfunc(\rho))$ 
    & $\frac{\rho}{(\rho+1)^2}$ 
    & $\a\rho^{\a}$ 
    \\
    \midrule
    $(\ncederivsup{1}{f}{\data}, \ncederivsup{1}{f}{\noise})$ 
    & $(1,1)$ 
    & $(\rho_{\min}^{\a-1},\rho_{\max}^\a)$ 
    \\
    $(\ncederivsup{2}{f}{\data},\ncederivsup{2}{f}{\noise}$) 
    & $(1,1)$ 
    & $(|1-\a|\rho_{\min}^{\a-1},|\a|\rho_{\max}^{\a})$ 
    \\
    $(\ncederivinf{2}{f}{\data},\ncederivinf{2}{f}{\noise})$ 
    & $(\kappa,\kappa)$, where $\kappa\defeq \frac{\rho_{\min}}{(\rho_{\min}+1)^2}\wedge \frac{\rho_{\max}}{(\rho_{\max}+1)^2}$ 
    & $(|1-\a|\rho_{\max}^{\a-1},|\a|\rho_{\min}^{\a})$ 
    \\
    \bottomrule
    \end{tabular}}
    \label{tab:nce_deriv_funcs}
\end{table*}

\begin{restatable}[$f$-NCE: finite-sample guarantee]{theorem}{thmncerate}
\label{thm:nce_rate}
Pick a strictly convex function $f\suchthat\Real_+\to\Real$.
Define \[
(\rho_{\min},\rho_{\max})\defeq \Bigl(~\inf_{x\in\Xc,\thaug\in\Th\times\Real} \rho_{\thaug}(x), \sup_{x\in\Xc,\thaug\in\Th\times\Real} \rho_{\thaug}(x)\Bigr)
\]
and define the quantities in Eq.~\eqref{eq:def_deriv_quantities} accordingly.
For $\rs\in\{\data,\noise\}$, define
\[
\lambda_{\min,\rs}^\nce\defeq \lambda_{\min}(\E_{q_\rs}[\psi\psi^\intercal]).
\]
Let $\hat{\th}_{f,\ndata,\nref}^{\nce,\Rc}$ be such that
\[
\hat{\th}_{f,\ndata,\nref}^{\nce,\Rc}\in\arg\min_{\th\in\Th} \biggl\{\Lc_f^{\nce}(\th;\pdatah,\prefh) + \lambda_{\ndata,\nref}\Rc(\th)\biggr\}
\]
for some $\lambda_{\ndata,\nref}>0$.
Then, for any $\Delta>0$ and $\d\in(0,1)$, there exists a choice of $\lambda_{\ndata,\nref}$ such that $\|\hat{\th}_{f,\ndata,\nref}^{\nce,\Rc}-\thstar\|_2\le \Delta$ with probability $\ge 1-\d$, provided that for each $\rs\in\{\data,\noise\}$,
\begin{align*}
n_{\rs}
=\Omega\biggl(\max\biggl\{ 
& \frac{(\ncederivsup{1}{f}{\rs})^2\normtotwonorm^2\dualnormtomaxnorm^2\psimax^2}{\Delta^2(\nu^{-1}\ncederivinf{2}{f}{\data}\lambda_{\min,\data}^\nce + \ncederivinf{2}{f}{\noise}\lambda_{\min,\noise}^\nce)^2},\\
& \frac{\onenormtotwonorm^4\psimax^4}{(\lambda_{\min,\rs}^\nce)^2}\biggr\}
\log\frac{p^2}{\d}
\biggr).
\end{align*}
\end{restatable}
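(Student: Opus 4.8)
The plan is to follow the regularized $M$-estimation framework of \citet{Negahban--Ravikumar--Wainwright--Yu2012}, as instantiated for interaction-screening objectives by \citet{Shah--Shah--Wornell2023}. The key structural fact is that the $f$-NCE objective is convex in $\th$: the Hessian $\nabla^2\hat{\Lc}_f^\nce(\th)=\frac1\nu\E_{\pdatah}[\psi\psi^\intercal\ncederivfun{2}{f}{\data}(\rho_\th)]+\E_{\prefh}[\psi\psi^\intercal\ncederivfun{2}{f}{\noise}(\rho_\th)]$ displayed above is a nonnegative combination of the PSD matrices $\psi\psi^\intercal$, since strict convexity of $f$ forces $\ncederivfun{2}{f}{\rs}>0$. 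I then control the estimation error $u\defeq\hat{\th}_{f,\ndata,\nref}^{\nce,\Rc}-\thstar$ through a basic inequality resting on two pillars: a bound on the gradient at $\thstar$, and a restricted strong convexity (RSC) lower bound on the curvature.

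\textbf{Gradient bound.} By Fisher consistency (Proposition~\ref{prop:fisher}), $\thstar$ is a stationary point of the population objective, so $\nabla\hat{\Lc}_f^\nce(\thstar)$ equals the sum over $\rs\in\{\data,\noise\}$ of the centered empirical deviations $(\E_{\qh_\rs}-\E_{q_\rs})[\psi\,\ncederivfun{1}{f}{\rs}(\rho_{\thstar})]$. On the bounded-ratio interval each summand has coordinates bounded in magnitude by $\psimax\ncederivsup{1}{f}{\rs}$, so coordinatewise Hoeffding with a union bound over the $p$ coordinates gives $\|\nabla\hat{\Lc}_f^\nce(\thstar)\|_\infty\lesssim\sum_\rs\ncederivsup{1}{f}{\rs}\psimax\sqrt{\log(p/\d)/n_\rs}$ with probability at least $1-\d$. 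Converting to the dual norm via $\Rc^*(\cdot)\le\dualnormtomaxnorm\|\cdot\|_\infty$ fixes the regularization level $\lambda_{\ndata,\nref}\asymp\dualnormtomaxnorm\,\ncederivsup{1}{f}{\rs}\psimax\sqrt{\log(p/\d)/n_\rs}$, chosen so that $\lambda_{\ndata,\nref}\ge2\Rc^*(\nabla\hat{\Lc}_f^\nce(\thstar))$.

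\textbf{Restricted strong convexity.} Since $\ncederivfun{2}{f}{\rs}(\rho_\th(x))\ge\ncederivinf{2}{f}{\rs}$ pointwise on the ratio interval and $\psi\psi^\intercal\succeq0$, the empirical Hessian obeys $\nabla^2\hat{\Lc}_f^\nce(\th)\succeq\nu^{-1}\ncederivinf{2}{f}{\data}\E_{\pdatah}[\psi\psi^\intercal]+\ncederivinf{2}{f}{\noise}\E_{\prefh}[\psi\psi^\intercal]$ for every $\th$ with bounded ratio. It therefore suffices to show that each empirical second-moment matrix retains curvature on the error direction, i.e.\ $u^\intercal\E_{\qh_\rs}[\psi\psi^\intercal]u\ge\tfrac12\lambda_{\min,\rs}^\nce\|u\|_2^2$. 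Bounding the deviation on the restricted set (over which $\onenormtotwonorm$ is taken) via $|u^\intercal(\E_{\qh_\rs}-\E_{q_\rs})[\psi\psi^\intercal]u|\le\onenormtotwonorm^2\|u\|_2^2\,\|(\E_{\qh_\rs}-\E_{q_\rs})[\psi\psi^\intercal]\|_{\max}$ and controlling the max-entry deviation by $\lesssim\psimax^2\sqrt{\log(p^2/\d)/n_\rs}$ (Hoeffding plus a union bound over the $p^2$ entries), the requirement $\onenormtotwonorm^2\psimax^2\sqrt{\log(p^2/\d)/n_\rs}\le\tfrac12\lambda_{\min,\rs}^\nce$ produces exactly the second sample-complexity term $\onenormtotwonorm^4\psimax^4/(\lambda_{\min,\rs}^\nce)^2$. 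On this event the empirical Hessian stays above $\tfrac12\kappa$ with $\kappa\defeq\nu^{-1}\ncederivinf{2}{f}{\data}\lambda_{\min,\data}^\nce+\ncederivinf{2}{f}{\noise}\lambda_{\min,\noise}^\nce$.

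\textbf{Combining.} On the intersection of the two events, optimality of $\hat{\th}_{f,\ndata,\nref}^{\nce,\Rc}$ and the reverse triangle inequality give $\hat{\Lc}_f^\nce(\hat\th)-\hat{\Lc}_f^\nce(\thstar)\le\lambda_{\ndata,\nref}\Rc(u)$, while RSC together with the gradient bound give $\hat{\Lc}_f^\nce(\hat\th)-\hat{\Lc}_f^\nce(\thstar)\ge-\tfrac12\lambda_{\ndata,\nref}\Rc(u)+\tfrac14\kappa\|u\|_2^2$. Combining, $\kappa\|u\|_2^2\lesssim\lambda_{\ndata,\nref}\Rc(u)\le\lambda_{\ndata,\nref}\normtotwonorm\|u\|_2$, hence $\|u\|_2\lesssim\lambda_{\ndata,\nref}\normtotwonorm/\kappa$; substituting the value of $\lambda_{\ndata,\nref}$ and demanding the right-hand side be at most $\Delta$ recovers the first sample-complexity term. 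The step I expect to be the main obstacle is the RSC argument: because $\ncederivfun{2}{f}{\rs}(\rho_\th)$ depends nonlinearly on $\th$, the curvature lower bound must hold \emph{uniformly} over the error neighborhood, and this is exactly what boundedness of the density ratio $\rho_\th\in(\rho_{\min},\rho_{\max})$ buys — it keeps $\ncederivfun{2}{f}{\rs}$ bounded away from zero so the pointwise constant $\ncederivinf{2}{f}{\rs}$ applies for every $\th$, reducing the uniform-curvature requirement to a single matrix concentration for $\E_{\qh_\rs}[\psi\psi^\intercal]$.
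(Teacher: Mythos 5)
Your proposal is correct and follows essentially the same route as the paper: a coordinatewise Hoeffding bound on $\nabla\hat{\Lc}_f^\nce(\thstar)$ to set $\lambda_{\ndata,\nref}\ge 2\Rc^*(\nabla\hat{\Lc}_f^\nce(\thstar))$, restricted strong convexity via the pointwise lower bound $\ncederivfun{2}{f}{\rs}\ge\ncederivinf{2}{f}{\rs}$ (enabled by the bounded density ratio) plus max-entry concentration of $\E_{\qh_\rs}[\psi\psi^\intercal]$ with the $\onenormtotwonorm$ conversion, and then the Negahban--Ravikumar--Wainwright--Yu bound $\|u\|_2\lesssim\lambda_{\ndata,\nref}\normtotwonorm/\kappa$. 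The only cosmetic difference is that you unfold the basic-inequality argument by hand where the paper invokes \citep[Corollary~1]{Negahban--Ravikumar--Wainwright--Yu2012} as a black box.
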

\begin{remark}
To the best of our knowledge, this result is the first finite-sample convergence rate for $f$-NCE estimators.
We state the finite-sample statement with a minimal set of assumptions, along with the bounded statistics and parameter space assumptions. 
While achieving the parametric rate of convergence $O(n^{-1/2})$ is appealing, to have non-vacuous rates, however, we need all the quantities in the sample complexity expression to be within a range bounded away from 0 or $\infty$.
More concretely, if we further assume that the dual norm of the statistic $\sup_{x\in\Xc} \Rc^*(\psi(x))\le \tau$ is bounded for some constant $\tau>0$, it is easy to check that the worst-case density ratios are bounded as $(\rho_{\min},\rho_{\max}) \subset(e^{-r\tau},  e^{r\tau})$ for $f$-NCE, where $r$ is defined to be the diameter of $\Th$ measured in the norm $\Rc(\cdot)$; see Assumption~\ref{assum:param_bdd}. 
We note that the worst-case density ratios affect the quantities in Eq.~\eqref{eq:def_deriv_quantities} \emph{polynomially} for the canonical examples in Table~\ref{tab:nce_deriv_funcs}, which in turn affect the sample complexity polynomially.
Hence, the leading constant grows exponentially in $r$ and $d$ similar to \citep{Shah--Shah--Wornell2021b,Shah--Shah--Wornell2023}.
This remark remains valid for the following two statements for $\a$-CentNCE and $f$-CondNCE, as the worst-case density ratio bounds depend similarly on $r$ and $\tau$.
We also remark that the minimum eigenvalue conditions are typically assumed in the existing finite-sample analysis~\citep{Vuffray--Misra--Lokhov--Chertkov2016,Shah--Shah--Wornell2021b,Shah--Shah--Wornell2023}, while \citep{Shah--Shah--Wornell2021a} establishes an explicit lower bound on the minimum eigenvalue for node-wise-sparse Gaussian MRFs.
\end{remark}

\begin{restatable}[$\a$-CentNCE: finite-sample guarantee]{theorem}{thmcentncerate}
\label{thm:centnce_rate}
Pick $\a\in\Real$.
Define \[
(\rho_{\min},\rho_{\max})\defeq \Bigl(~\inf_{x\in\Xc,\th\in\Th} \rhot_{\th;\a}(x), \sup_{x\in\Xc,\th\in\Th} \rhot_{\th;\a}(x)\Bigr)
\]
and define the quantities in Eq.~\eqref{eq:def_deriv_quantities} for $f=f_\a$ accordingly.
Let $\rhot_{\thstar;\a}^\a(x) \defeq \frac{(\frac{\pdata(x)}{\pnoise(x)})^{\a}}{\E_{\pnoise}[(\frac{\pdata}{\pnoise})^{\a}]}$, and let
\begin{align*}
\lambda_{\min,\data}^{\centnce}
&\defeq \lambda_{\min}(\E_{\pdata}[(\psi - \E_{\pref}[\psi \rhot_{\thstar;\a}^{\a}])(\psi - \E_{\pref}[\psi \rhot_{\thstar;\a}^{\a}])^\intercal]),\\
\lambda_{\min,\noise}^{\centnce}
&\defeq\lambda_{\min}(\E_{\pref}[\psi\psi^\intercal\rhot_{\thstar;\a}^\a]-\E_{\pref}[\psi\rhot_{\thstar;\a}^\a]\E_{\pref}[\psi\rhot_{\thstar;\a}^\a]^\intercal).
\end{align*}
Let $\hat{\th}_{\a,\ndata}^{\centnce,\Rc}$ be such that
\[
\hat{\th}_{\a,\ndata}^{\centnce,\Rc}\in\arg\min_{\th\in\Th} \Bigl\{\Lc_\a^\centnce(\th;\pdatah,\pref) + \lambda_{\ndata}\Rc(\th)\Bigr\}
\]
for some $\lambda_{\ndata}>0$.
Define $\psi_{\max,\a}\defeq \psimax+\|\E_{\pref}[\psi\rhot_{\thstar;\a}^\a]\|_{\max}$.
Then, for any $\Delta>0$ and $\d\in(0,1)$, there exists a choice of $\lambda_{\ndata}$ such that $\|\hat{\th}_{f,\ndata}^{\centnce,\Rc}-\thstar\|_2\le \Delta$ with probability $\ge 1-\d$, provided that
\begin{align*}
\ndata
=\Omega\biggl(\max\biggl\{
&\frac{
(\ncederivsup{1}{f_\a}{\data})^2
\normtotwonorm^2\dualnormtomaxnorm^2
\psi_{\max,\a}^2}{\Delta^2
(\ncederivinf{2}{f_\a}{\data})^2
\{(1-\a)\lambda_{\min,\data}^\centnce
+\a \lambda_{\min,\noise}^\centnce\}^2},\\
& \frac{\onenormtotwonorm^4\psi_{\max,\a}^4}{(\lambda_{\min,\data}^\centnce)^2}
\biggr\}\log\frac{p^2}{\d}\biggr).
\end{align*}
\end{restatable}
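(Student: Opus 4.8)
The plan is to treat the regularized $\a$-CentNCE estimator $\hat{\th}_{\a,\ndata}^{\centnce,\Rc}$ as a penalized $M$-estimator and invoke the deterministic framework of \citep{Negahban--Ravikumar--Wainwright--Yu2012} in the form adapted by \citep{Shah--Shah--Wornell2023}, exactly as in the proof of Theorem~\ref{thm:nce_rate}. The three ingredients are: (i) a high-probability bound on the dual norm $\Rc^*(\nabla\Lc_\a^\centnce(\thstar;\pdatah,\pref))$ of the empirical gradient at the truth, which fixes the admissible regularization strength $\lambda_{\ndata}$; (ii) a restricted-strong-convexity (RSC) lower bound on the empirical Hessian in a neighborhood of the unique $\thstar$ (well-specifiedness and identifiability are assumed in this section); and (iii) the standard combination giving $\|\hat{\th}-\thstar\|_2 \lesssim \normtotwonorm\,\lambda_{\ndata}/\kappa$ for the RSC curvature $\kappa$. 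The one structural fact I would record first is that $\a$-CentNCE is $f_\a$-NCE evaluated at the $\a$-centered model, so that \emph{centering replaces the raw statistic $\psi(x)$ by the recentered statistic} $\tilde\psi_\th(x) \defeq \psi(x) - \E_{\pref}[\psi\,\rhot_{\th;\a}^\a]$; this explains why the effective boundedness constant is $\psi_{\max,\a} = \psimax + \|\E_{\pref}[\psi\,\rhot_{\thstar;\a}^\a]\|_{\max}$ and why the matrices defining $\lambda_{\min,\data}^\centnce,\lambda_{\min,\noise}^\centnce$ are written through the centered statistic.

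For the gradient step, I would differentiate $\Lc_\a^\centnce(\th) = \tfrac{1}{1-\a}\E_{\pdata}[\rho_\th^{\a-1}]\,\E_{\pref}[\rho_\th^\a]^{(1-\a)/\a}$, using $\nabla_\th\log\rho_\th = \psi$ and $\nabla_\th\log Z_\a(\th) = \E_{\pref}[\psi\,\rhot_{\th;\a}^\a]$, and show it equals $\E_{\pref}[\rho_\th^\a]^{(1-\a)/\a}\,\E_{\pdata}[\ncederivfun{1}{f_\a}{\data}(\rho_\th)\,\tilde\psi_\th]$. By Proposition~\ref{prop:generalized_giso_fisher} the population gradient vanishes at $\thstar$, so the empirical gradient at $\thstar$ is a centered average over the $\ndata$ data samples \emph{only} — the $\pref$-expectation is deterministic, since $\a$-CentNCE never samples the reference. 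Each summand has $\Rc^*$-norm at most $\ncederivsup{1}{f_\a}{\data}\,\dualnormtomaxnorm\,\psi_{\max,\a}$, so a vector Bernstein / bounded-differences argument with a union bound over the $p$ coordinates yields $\Rc^*(\nabla\Lc_\a^\centnce(\thstar;\pdatah,\pref)) = O(\ncederivsup{1}{f_\a}{\data}\dualnormtomaxnorm\psi_{\max,\a}\sqrt{\log(p/\d)/\ndata})$ with probability $\ge 1-\d/2$; setting $\lambda_{\ndata}$ to twice this bound supplies the numerator of the first sample-complexity term.

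For the RSC step, I would compute $\nabla^2\Lc_\a^\centnce(\thstar)$ and show, using the second-derivative functions of Table~\ref{tab:nce_deriv_funcs} (namely $\ncederivfun{2}{f_\a}{\data}(\rho) = (1-\a)\rho^{\a-1}$ and $\ncederivfun{2}{f_\a}{\noise}(\rho) = \a\rho^\a$), that its population form splits into a data-average outer-product piece carrying weight $(1-\a)$ and a reference-expectation piece carrying weight $\a$, the two matrices being exactly those whose minimum eigenvalues are $\lambda_{\min,\data}^\centnce$ and $\lambda_{\min,\noise}^\centnce$. Bounding $\rho_\th\in(\rho_{\min},\rho_{\max})$ uniformly via Assumptions~\ref{assum:nce_bdd_norm_psi}--\ref{assum:param_bdd} gives population curvature $\kappa_{\mathrm{pop}}\ge \ncederivinf{2}{f_\a}{\data}\{(1-\a)\lambda_{\min,\data}^\centnce + \a\lambda_{\min,\noise}^\centnce\}$. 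Crucially, since the reference expectations are exact, only the \emph{data}-average piece is random, so the empirical-Hessian deviation to be controlled involves the data-covariance alone; a matrix-concentration bound with summands bounded by $\psi_{\max,\a}^2$, uniform over a cone of $\ell_1/\ell_2$ aspect ratio $\onenormtotwonorm$, then requires $\ndata = \Omega(\onenormtotwonorm^4\psi_{\max,\a}^4/(\lambda_{\min,\data}^\centnce)^2\log(p^2/\d))$, which is precisely the second term in the max and explains why it features $\lambda_{\min,\data}^\centnce$ alone.

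Finally I would combine: the penalized-$M$-estimation lemma gives $\|\hat{\th}-\thstar\|_2 \le c\,\normtotwonorm\lambda_{\ndata}/\kappa$, and substituting the gradient bound together with $\kappa = \Theta(\ncederivinf{2}{f_\a}{\data}\{(1-\a)\lambda_{\min,\data}^\centnce+\a\lambda_{\min,\noise}^\centnce\})$ and solving $\|\hat{\th}-\thstar\|_2\le\Delta$ for $\ndata$ reproduces the first term; taking the maximum with the RSC requirement yields the claimed complexity. \textbf{The main obstacle} is the RSC step. Unlike $f$-NCE in Theorem~\ref{thm:nce_rate}, where the Hessian is a genuine sum of two \emph{independent} empirical averages over data and noise, here the centering vector $\E_{\pref}[\psi\,\rhot_{\th;\a}^\a]$ couples the data and reference expectations and depends on $\th$, so the Hessian is a single data-average of $\th$-dependent, recentered outer products. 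I expect the delicate work to be (a) verifying that its population form is exactly the $(1-\a)$/$\a$-weighted sum of the two stated covariance matrices, with no surviving cross term, and (b) establishing the uniform lower bound over the error ball while tracking the $\th$-dependence of the centering vector through the bounded density ratio, so that the effective boundedness is correctly governed by $\psi_{\max,\a}$ rather than $\psimax$.
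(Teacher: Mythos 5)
Your proposal is correct and follows essentially the same route as the paper's proof: coordinate-wise Hoeffding plus a union bound on the empirical gradient at $\thstar$ (whose summands are the recentered statistics $\rhot_{\thstar;\a}^{\a-1}(\psi-\E_{\pref}[\psi\rhot_{\thstar;\a}^\a])$, bounded by $\ncederivsup{1}{f_\a}{\data}\psi_{\max,\a}$), then restricted strong convexity via the intermediate value theorem and the Hessian decomposition into the $(1-\a)$-weighted data covariance and the $\a$-weighted (deterministic) reference covariance of the centered statistic, with Hoeffding applied only to the data piece, and finally the Negahban et al.\ corollary with a union bound. The two points you flag as delicate are exactly how the paper resolves them: the clean split with no cross term comes from $\nabla_\th^2\log\rhot_{\th;\a}=-\a\{\E_{\pref}[\psi\psi^\intercal\rhot_{\th;\a}^\a]-\E_{\pref}[\psi\rhot_{\th;\a}^\a]\E_{\pref}[\psi\rhot_{\th;\a}^\a]^\intercal\}$ (Lemma~\ref{lem:deriv_centered}), and the $\th$-dependence at the intermediate point is absorbed by the uniform worst-case density-ratio bounds.
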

\begin{remark}[Special cases]
For $\a=0$, this result generalizes the finite-sample analysis of GlobalGISO of \citep{Shah--Shah--Wornell2023} beyond when $\pnoise$ is the uniform distribution.
For $\a=1$, we establish the convergence rate of the MLE, which we believe to be the first result of this kind.
\end{remark}

For the CondNCE estimator, we consider $K=1$, i.e., we have $\{(x_i,y_i)\}_{i=1}^{\ndata}\sim \pdata(x)\pi(y|x)$ for simplicity.
\begin{restatable}[$f$-CondNCE: finite-sample guarantee]{theorem}{thmcondncerate}
\label{thm:condnce_rate}
Pick a strictly convex function $f\suchthat\Real_+\to\Real$.
Define \begin{align*}
\rho_{\min}
&\defeq \inf_{(x,y)\in\supp(\pdata(x)\pi(y|x)),\th\in\Th} \rho_{\th}(x,y),\\
\rho_{\max}
&\defeq 
\sup_{(x,y)\in\supp(\pdata(x)\pi(y|x)),\th\in\Th} \rho_{\th}(x,y).
\end{align*}
and define the quantities in Eq.~\eqref{eq:def_deriv_quantities} accordingly.
Let
\[
\lambda_{\min,\data}^\condnce
\defeq\lambda_{\min}(\E_{\pdata(x)\pi(y|x)}[(\psi(x)-\psi(y))(\psi(x)-\psi(y))^\intercal]).
\]
Let $\hat{\th}_{f,\ndata}^{\condnce,\Rc}$ be such that
\[
\hat{\th}_{f,\ndata}^{\condnce,\Rc}\in\arg\min_{\th\in\Th} \Bigl\{\Lc_f^{\condnce}(\th;\pdatah,\hat{\pi}) + \lambda_{\ndata}\Rc(\th)\Bigr\}
\]
for some $\lambda_{\ndata}>0$.
Then, for any $\Delta>0$ and $\d\in(0,1)$, there exists a choice of $\lambda_n$ such that $\|\hat{\th}_{f,\ndata}^{\condnce,\Rc}-\thstar\|_2\le \Delta$ with probability $\ge 1-\d$, provided that
\begin{align*}
\ndata
=\Omega\biggl(\max\biggl\{&
\frac{(\cncederivsup{1}{f}{\data}+\cncederivsup{1}{f}{\noise})^2\normtotwonorm^2\dualnormtomaxnorm^2\psimax^2}{\Delta^2(\cncederivinf{2}{f}{\data}+ \cncederivinf{2}{f}{\noise})^2(\lambda_{\min}^\condnce)^2},\\
&\frac{\onenormtotwonorm^4\psimax^4}{(\lambda_{\min}^\condnce)^2}\biggr\}\log\frac{p^2}{\d}
\biggr).
\end{align*}
Here, $\cncederivinf{2}{f}{\mathsf{r}}$ 
 and $\cncederivsup{i}{f}{\mathsf{r}}$ are defined similar to Eq.~\eqref{eq:def_deriv_quantities}, where the infimum and supremum are taken over $(\frac{\rho_{\min}}{\rho_{\max}},\frac{\rho_{\max}}{\rho_{\min}})$ in place of $({\rho_{\min}},{\rho_{\max}})$.
\end{restatable}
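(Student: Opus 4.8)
The plan is to reduce the $f$-CondNCE estimator to a single–sample–set regularized $M$-estimation problem over \emph{difference statistics}, and then run the same restricted–strong–convexity argument that underlies Theorem~\ref{thm:nce_rate}. For the exponential family $\phi_\th(x)=\exp(\langle\th,\psi(x)\rangle)$ with a symmetric channel, the model ratio collapses to $\rho_\th(x,y)=\exp(\langle\th,\dv\rangle)$ with $\dv\defeq\psi(x)-\psi(y)$, so the empirical objective becomes $\hat{\Lc}_f^{\condnce}(\th)=\frac{1}{\ndata}\sum_{i=1}^{\ndata} g(\langle\th,\dv_i\rangle)$, where $\dv_i\defeq\psi(x_i)-\psi(y_i)$ and $g(u)\defeq -f'(e^{u})+e^{-u}f'(e^{-u})-f(e^{-u})$ is a scalar link. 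By Lemma~\ref{lem:nce_derivatives} and the definitions in Eq.~\eqref{eq:def_deriv_quantities}, one finds $g'(u)=\ncederivfun{1}{f}{\data}(e^{u})-\ncederivfun{1}{f}{\noise}(e^{-u})$ and $g''(u)=\ncederivfun{2}{f}{\data}(e^{u})+\ncederivfun{2}{f}{\noise}(e^{-u})\ge 0$, so that $\nabla\hat{\Lc}=\frac1{\ndata}\sum_i \dv_i\, g'(\langle\th,\dv_i\rangle)$ and $\nabla^2\hat{\Lc}=\frac1{\ndata}\sum_i \dv_i\dv_i^\intercal\, g''(\langle\th,\dv_i\rangle)$. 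This exhibits $\hat{\Lc}_f^{\condnce}$ as a convex objective whose per-sample curvature is the \emph{sum} of the data- and noise-side second-derivative functions, which is exactly why the rate features $\cncederivinf{2}{f}{\data}+\cncederivinf{2}{f}{\noise}$ and $\cncederivsup{1}{f}{\data}+\cncederivsup{1}{f}{\noise}$ in place of the two-sample-set structure of $f$-NCE.

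With this reduction I would invoke the regularized $M$-estimation framework of \citep{Negahban--Ravikumar--Wainwright--Yu2012}, as adapted in \citep{Shah--Shah--Wornell2023}, which needs two ingredients. First, \emph{gradient control at the truth}: by Proposition~\ref{prop:fisher_condnce} the population objective is minimized at $\thstar$, and by convexity its gradient vanishes there, i.e.\ $\E_{\pdata(x)\pi(y|x)}[\dv\, g'(\langle\thstar,\dv\rangle)]=0$; since $\|\dv_i\|_\infty\le 2\psimax$ (Assumption~\ref{assum:nce_bdd_norm_psi}) and $|g'|\le \cncederivsup{1}{f}{\data}+\cncederivsup{1}{f}{\noise}$ on the relevant range, a bounded-increment, dual-norm concentration bound controls $\Rc^*(\nabla\hat{\Lc}_f^{\condnce}(\thstar))$ with high probability and fixes the choice of $\lambda_{\ndata}$ (this contributes the $\dualnormtomaxnorm$ and $\psimax$ factors). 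Second, \emph{restricted strong convexity}: lower-bounding $g''$ by $\cncederivinf{2}{f}{\data}+\cncederivinf{2}{f}{\noise}$ reduces the Hessian lower bound to controlling $\lambda_{\min}(\frac1{\ndata}\sum_i\dv_i\dv_i^\intercal)$, which concentrates around $\lambda_{\min,\data}^{\condnce}=\lambda_{\min}(\E[\dv\dv^\intercal])$ via a matrix-concentration argument that forces the second sample-complexity term $\onenormtotwonorm^4\psimax^4/(\lambda_{\min,\data}^{\condnce})^2$. Plugging the curvature $\kappa\gtrsim(\cncederivinf{2}{f}{\data}+\cncederivinf{2}{f}{\noise})\lambda_{\min,\data}^{\condnce}$ and the gradient bound into the standard basic-inequality argument yields $\|\hat{\th}_{f,\ndata}^{\condnce,\Rc}-\thstar\|_2\le\Delta$, and solving for $\ndata$ gives the stated complexity, with the $\normtotwonorm$ factor coming from the norm geometry in Eq.~\eqref{eq:bound_norm_to_l2_norm}.

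The step I expect to be the main obstacle is pinning down the range over which the derivative functions must be controlled and then making restricted strong convexity uniform over the error neighborhood. Unlike $f$-NCE, each summand here simultaneously evaluates $\ncederivfun{2}{f}{\data}$ at $e^{u}=\rho_\th(x,y)\in(\rho_{\min},\rho_{\max})$ and $\ncederivfun{2}{f}{\noise}$ at the reciprocal $e^{-u}=\rho_\th(y,x)$; because the support of $\pdata(x)\pi(y|x)$ need not be symmetric under swapping coordinates, these two arguments live in $(\rho_{\min},\rho_{\max})$ and $(1/\rho_{\max},1/\rho_{\min})$ respectively, and the smallest common interval conservatively containing both is $(\rho_{\min}/\rho_{\max},\rho_{\max}/\rho_{\min})$—precisely the range appearing in the theorem. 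The self-referential, single-sample structure also prevents decoupling a data sample set from an independent noise sample set as in the $f$-NCE proof, so the gradient- and eigenvalue-concentration steps must both be carried out directly on the i.i.d.\ pairs $\{\dv_i\}$, which is where most of the bookkeeping (and the $K=1$ restriction) is concentrated.
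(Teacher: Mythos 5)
Your proposal is correct and follows essentially the same route as the paper: compute the gradient and Hessian of the empirical objective in terms of the difference statistics $\psi(x)-\psi(y)$, show the gradient vanishes in expectation at $\thstar$ and concentrates via Hoeffding (giving the $\cncederivsup{1}{f}{\data}+\cncederivsup{1}{f}{\noise}$ and $\psimax$ factors), establish restricted strong convexity by lower-bounding the scalar curvature by $\cncederivinf{2}{f}{\data}+\cncederivinf{2}{f}{\noise}$ and concentrating the second-moment matrix of the differences entrywise (whence the $\onenormtotwonorm^4$ term), and combine via \citep[Corollary~1]{Negahban--Ravikumar--Wainwright--Yu2012}. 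Your identification of the enlarged range $(\rho_{\min}/\rho_{\max},\rho_{\max}/\rho_{\min})$ as arising from the simultaneous evaluation at $\rho_\th(x,y)$ and its reciprocal also matches the paper's treatment.
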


\begin{remark}[Behavior of $f$-CondNCE in a small-$\eps$ regime]
\label{rem:condnce}
As alluded to in Sec.~\ref{sec:condnce_sm},
the undesirable behavior of $f$-CondNCE with small $\eps$ can be also seen from the sample complexity, since the minimum eigenvalue $\lambda_{\min,\data}^\condnce\approx \eps^2 \lambda_{\min}(\E_{\pdata(x)}[\nabla_x\psi(x)\nabla_x\psi(x)^\intercal])\to 0$ as $\eps\to 0$.
In Theorem~\ref{thm:condnce_asymp} in Appendix, we establish that the asymptotic covariance of the estimator is $\check{\Vc}_f^{\condnce}\defeq\check{\Ic}_f^{-1}\check{\Cc}_f\check{\Ic}_f^{-1}$, where
\begin{align*}
\check{\Ic}_f&\defeq
\E_{q_{\data,\pi}(x,y)}[\rho_{\thstar}^2 f''(\rho_{\thstar}) (\psi(x)-\psi(y))(\psi(x)-\psi(y))^\intercal],\\
\check{\Cc}_f &\defeq
\E_{q_{\data,\pi}(x,y)}[\condncederivfun{1}{f}(\rho_\thstar)^2 (\psi(x)-\psi(y))(\psi(x)-\psi(y))^\intercal],
\end{align*} 
where we let $q_{\data,\pi}(x,y)\defeq \pdata(x)\pi(y|x)$.
For a channel $y\sim\pi(y|x)$ defined as $y=x+\eps v$ as in Sec.~\ref{sec:condnce_sm}, 
it is easy to check that $\lim_{\eps\to0}\frac{1}{\eps^2}\check{\Ic}_f=\E_{\pdata(x)}[\nabla_x\psi(x)\nabla_x\psi(x)^\intercal]=\lim_{\eps\to0}\frac{1}{\eps^2}\check{\Cc}_f$.
Hence, in the small-$\eps$ regime, the asymptotic covariance of the $f$-CondNCE also behaves as $\check{\Vc}_f^{\condnce}\approx \frac{1}{\eps^2} \E_{\pdata(x)}[\nabla_x\psi(x)\nabla_x\psi(x)^\intercal]$, and hence blows up as $\eps\to 0$.
These observations are consistent to Theorem~\ref{thm:condnce_asymp_emp}.
\end{remark}

\textbf{Proof Sketch.}
Our finite-sample analysis of the regularized NCE estimators follows closely that of \citet{Shah--Shah--Wornell2023}, which relies on the seminal result of \citep{Negahban--Ravikumar--Wainwright--Yu2012} for regularized M-estimators: 
\begin{restatable}[{\citealp[Corollary~1]{Negahban--Ravikumar--Wainwright--Yu2012}}]{theorem}{thmnegahban}
\label{thm:negahbhan_cor}
Let $z_1,\ldots,z_N$ be \iid samples drawn from a distribution $p(z)$.
Let $h_\th(z)$ be a convex and differentiable function parameterized by $\th\in\Th$.
Let $\hat{\Lc}_n(\th)\defeq \frac{1}{n} \sum_{i=1}^n h_\th(z_i)$
denote the empirical objective function.
Define 
\begin{align}
\hat{\th}_n\in\arg\min_\th \Bigl\{\hat{\Lc}_n(\th)+\lambda_n \Rc(\th)\Bigr\},
\label{eq:est_negahban}
\end{align}
where $\lambda_n$ is a regularization penalty and $\Rc\suchthat \Th\to\Real_{\ge0}$ is a norm over $\Th$. 
Let $\thstar\in\arg\min_{\th}\E_{p(z)}[h_\th(z)]$. 
Assume that
\begin{enumerate}
\item The regularization penalty $\lambda_n$ satisfies
$\lambda_n\ge 2\Rc^*(\nabla_\th\hat{\Lc}_n(\thstar)),$
where $\Rc^*\suchthat\Th^*\to\Real_{\ge 0}$ is a dual norm of $\Rc$ over the dual space $\Th^*$;
\item The empirical objective $\th\mapsto\hat{\Lc}_n(\th)$ satisfies a \emph{restricted strong convexity} condition at $\th=\thstar$ with curvature $\kappa>0$, \ie
$\breg{\hat{\Lc}_n(\th)}(\th,\thstar) \ge \kappa \|\th-\thstar\|_2^2$.
\end{enumerate}
Then, the estimator $\hat{\th}_n$ in Eq.~\eqref{eq:est_negahban} satisfies
\[
\|\hat{\th}_n - \thstar\|_2 \le 3\frac{\lambda_n}{\kappa}\normtotwonorm.
\]
\end{restatable}
\vspace{-.5em}
To ensure the first condition with $\lambda_n$ sufficiently small, we show that, with high probability, the gradient of the empirical objective is sufficiently small, using Hoeffding's inequality under Assumption~\ref{assum:nce_bdd_norm_psi}.
For the second condition, we show that the lowest eigenvalue of the Hessian of the empirical objective is lower bounded, again by Hoeffding's inequality invoking Assumption~\ref{assum:nce_bdd_norm_psi} and the positivity of the minimum eigenvalues of some second moment matrices.
Combining the two high-probability events by a union bound completes the proof.

\textbf{Simulation.} We include a preliminary simulation result of some NCE estimators in Appendix~\ref{app:sec:exps}.
We leave a more thorough empirical investigation on the estimators in this paper for high-dimensional problems as a future work.\vspace{-.5em}

\section{Discussion and Conclusion}
\label{sec:discussion}

\textbf{Beyond Bounded Exponential Families.}
An intriguing question is whether we can relax the boundedness assumption on $\psi(x)$, making our estimators applicable beyond bounded (or truncated) exponential families.
Here, we highlight what we need to modify in the proofs to extend the validity beyond this assumption, using $f$-NCE estimators as an example.
As sketched above, the proof of Theorem~\ref{thm:nce_rate} consists of two parts: (1) the concentration of the gradient of the empirical objective around 0, at the true parameter $\thstar$ (Proposition~\ref{prop:nce_zero_grad_concentration}) and (2) the restricted strong convexity (anti-concentration of the Hessian) of the empirical objective, around the true parameter $\thstar$ (Proposition~\ref{prop:nce_restricted_strong_convexity}).
Invoking the uniform bound via the worst-case density ratios, we apply Hoeffding's inequality using the boundedness of the max-norm of $\psi(x)$. For unbounded sufficient statistics, we need a technique to handle the concentration behaviors, without worst-case density ratios bounded away from 0 and $\infty$. For example, if the exponential family distribution is sub-Gaussian and the sufficient statistics are polynomials, one could use the sub-Weibull concentration bounds.

\textbf{Local Versions of NCE-based Estimators.}
So far, we take a \emph{global} approach to learning the parameter $\th$ by treating it as a single object.
In the context of exponential families, this is beneficial when exploiting a global structure on $\th$ such as a bounded maximum norm, a bounded Frobenius norm, or a bounded nuclear norm when $\th$ is matrix-shaped~\citep{Shah--Shah--Wornell2023}. However, for exponential families corresponding to 
a node-wise sparse Markov random fields (MRFs), the structure to be exploited is inherently \emph{local}.
Specifically, in node-wise-sparse MRFs, the conditional distribution of each node given all the other nodes can be expressed by number of parameters which scale with the maximum-degree of the MRF, which is assumed to be much smaller than the dimension. 
In such scenarios, it is convenient to learn the conditional distribution for each node rather than learning the joint distribution over all nodes. There exists a long line of work on this approach, \eg see \citep{Besag1975,Vuffray--Misra--Lokhov--Chertkov2016,Vuffray--Misra--Lokhov2021,Shah--Shah--Wornell2021a,Ren--Misra--Vuffray--Lokhov2021}, a representative of which is the pseudo likelihood estimator of \citet{Besag1975}.
Maybe not very surprisingly at this point, if we apply the NCE framework in a \emph{local} manner, it provides a unifying view on all of the aforementioned works. We defer a detailed discussion to Appendix~\ref{app:sec:local_nce}.

\textbf{Optimization Complexity.}
So far, we have focused on the statistical properties of the proposed estimators. Now, we make a few comments regarding the optimization complexity as concluding remarks.
The first-order important property regarding optimization is the convexity of the objective functions with respect to the natural parameter $\th$. 
In Appendix~\ref{app:sec:convexity}, we characterize a sufficient condition for the convexity of $f$-NCE, $\a$-CentNCE, as well as $f$-CondNCE. Specifically, we show that $f_{\log}$ and $f_{\a}$ for $\a\in[0,1]$ result in convex objectives. Somewhat surprisingly, a counterexample of convex $f$ which cannot  guarantee convexity of the objective function is $f_\a(\rho)$ for $\a\not\in[0,1]$. 

In the optimization community, a recent line of work~\citep{Liu--Rosenfeld--Ravikumar--Risteski2021,Lee--Pabbaraju--Sevekari--Risteski2023} studied the optimization landscape of the original NCE objective and showed that the landscape can be arbitrarily flat even for a scalar Gaussian mean estimation. 
This is mainly due to the unbounded and light-tailed  nature of Gaussian distributions.
Under the boundedness assumption, we prove in Appendix~\ref{app:smooth} that the empirical $f$-NCE objective function, for example, is smooth with probability 1. Then, from \citep[Theorem 1]{Agarwal--Negahban--Wainwright2010}, and the restricted strong convexity (Proposition~\ref{prop:nce_restricted_strong_convexity}), a projected gradient descent algorithm has a globally geometric rate of convergence.
A recent work~\citep{Jiang--Qin--Wu--Chen--Yang--Zhang2023} analyzed the optimization landscape of MC-MLE and proposed an optimization algorithm with efficient optimization complexity guarantee together with a strong empirical result, missing the connection to the original work~\citep{Geyer1994} and its statistical properties analyzed in \citep{Riou-Durand--Chopin2018}.
Building on top of our work and \citep{Jiang--Qin--Wu--Chen--Yang--Zhang2023} could be an exciting future direction at the intersection of statistical and optimization complexity for learning unnormalized distributions.

\textbf{Conclusion.} We hope that this work offers a unifying perspective on both existing estimators and those yet to be discovered, and that it contributes to a more systematic understanding of the trade-off between statistical and optimization complexity in the context of efficient learning with unnormalized distributions. As emphasized throughout the paper, further investigation is warranted to better understand the empirical behavior of different estimators in high-dimensional settings.
\section*{Acknowledgements}
We appreciate the insightful discussions with Devavrat Shah.
This work was supported in part by the MIT-IBM Watson AI Lab under Agreement No. W1771646, and by AFRL and by the Department of the Air Force Artificial Intelligence Accelerator under Cooperative Agreement Number FA8750-19-2-1000. The views and conclusions contained in this document are those of the authors and should not be interpreted as representing the official policies, either expressed or implied, of the Department of the Air Force or the U.S. Government. The U.S. Government is authorized to reproduce and distribute reprints for Government purposes notwithstanding any copyright notation herein.

\section*{Impact Statement}
This paper presents work whose goal is to advance the field of Machine Learning. There are many potential societal consequences of our work, none which we feel must be specifically highlighted here.

\bibliography{ref}
\bibliographystyle{icml2025}

\newpage
\appendix
\onecolumn

\addtocontents{toc}{\protect\StartAppendixEntries}
\listofatoc

\section{Glossary}
\newcommand{\phit}{\phitilde}
For a reference, we provide a summary of notations in Table~\ref{tab:glossary}.

\begin{table}[ht]
\centering
\caption{Summary of notations.}
\vspace{.5em}
\begin{tabular}{r l l}
\toprule
Notation & Definition & Description \\
\midrule
$\Xc$ & $\subset\Real^d$ & domain of $x$\\
$\Th$ & $\subset\Real^p$ & domain of $\th$\\
$\rho_\th(x)$ & $\frac{\phi_\th(x)}{\nu\pref(x)}$ & (scaled) density ratio\\
$\breg{h}(z,z')$ & $h(z)-h(z')-\nabla_z h(z')^\intercal(z-z') $ & Bregman divergence of $h\suchthat \Real^D\to \Real$\\
\midrule
$\th_{f}^\nce(\pdata,\pnoise)
$ & $\in \displaystyle\arg\min_{\th\in\Th} \Lc_f^{\nce}(\phi_\th;\pdata,\pref)$ & $f$-NCE estimator (population) \\
$\th_{\a}^\centnce(\pdata,\pnoise)
$ & $\in \displaystyle\arg\min_{\th\in\Th} \Lc_\a^{\centnce}(\phi_\th;\pdata,\pref)$ & $\a$-CentNCE estimator (population)\\
$\th_{f}^\condnce(\pdata,\pi)
$ & $\in \displaystyle\arg\min_{\th\in\Th} \Lc_f^{\condnce}(\phi_\th;\pdata,\pi)$ & $f$-CondNCE estimator (population) \\
\midrule
$\th_{f}^\nce(\pdatah,\prefh)
$ & $\in \displaystyle\arg\min_{\th\in\Th} \Lc_f^{\nce}(\phi_\th;\pdatah,\prefh)$ & $f$-NCE estimator (empirical) \\
$\th_{\a}^\centnce(\pdatah,\pnoise)
$ & $\in \displaystyle\arg\min_{\th\in\Th} \Lc_\a^{\centnce}(\phi_\th;\pdatah,\pref)$ & $\a$-CentNCE estimator (empirical)\\
$\th_{f}^\condnce(\pdatah,\hat{\pi})
$ & $\in \displaystyle\arg\min_{\th\in\Th} \Lc_f^{\condnce}(\phi_\th;\pdatah,\hat{\pi})$ & $f$-CondNCE estimator (empirical) \\
\midrule
$\Rc(\cdot)$ & & a norm over $\Th$\\
$\Rc^*(\cdot)$ & & a dual norm over $\Th^*$\\
$\rho_{\min}$ & 
& minimum density ratio\\
$\rho_{\max}$ & 
& maximum density ratio\\
\bottomrule
\end{tabular}
\label{tab:glossary}
\end{table}

\section{Basic Properties}
In what follows, we use Euler's notation and Lagrange's notation for derivatives.
First, we remark the derivatives of the Bregman divergence with respect to the second argument:
\begin{align*}
\breg{f}(x,y) &= f(x)-f(y)-f'(y)(x-y),\\
\partial_y \breg{f}(x,y) &= (y-x)f''(y),\\
\partial_{yy} \breg{f}(x,y) &= f''(y)+yf'''(y) - xf'''(y),\\
\partial_{yy} \breg{f}(x,y)|_{x=y} &= f''(y).
\end{align*}
Further, since we consider exponential family distributions, we have
\begin{align*}
\partial_{\th_i}\rho_\th &=\rho_\th\psi_i
\quad\text{and}\quad
\partial_{\th_i\th_j}\rho_\th 
= \rho_\th\psi_i\psi_j.
\end{align*}

\begin{lemma}
For a three-times differentiable function $f$, 
let $\gfunc(\rho)=-(\rho f'''(\rho) + f''(\rho))$.
\begin{align*}
\partial_{\th_i\th_j}\breg{f}(\rho^*,\rho_\th) 
&= \psi_i\psi_j\rho_\th\{(\rho_\th f'''(\rho_\th)+f''(\rho_\th))(\rho_\th-\rho^*) + \rho_\th f''(\rho_\th)\} \\
&= \psi_i\psi_j\rho_\th( \rho_\th (f''(\rho_\th)-\gfunc\rho_\th)) + \rho^*\gfunc\rho_\th)).
\end{align*}
\end{lemma}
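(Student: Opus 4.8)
The plan is to treat the left-hand side as the scalar composition $\th\mapsto \breg{f}(\rho^*,\rho_\th)$, in which $\rho^*$ is held fixed and the dependence on $\th$ enters only through $\rho_\th$, and then apply the chain rule twice using the four derivative identities already recorded above: the partials of the Bregman divergence in its second argument, $\partial_y\breg{f}(x,y)=(y-x)f''(y)$ and $\partial_{yy}\breg{f}(x,y)=f''(y)+yf'''(y)-xf'''(y)$, together with the exponential-family identities $\partial_{\th_i}\rho_\th=\rho_\th\psi_i$ and $\partial_{\th_i\th_j}\rho_\th=\rho_\th\psi_i\psi_j$.

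First I would compute the gradient. Since $\psi_i$ does not depend on $\th$, the chain rule gives $\partial_{\th_i}\breg{f}(\rho^*,\rho_\th)=\partial_y\breg{f}(\rho^*,\rho_\th)\,\partial_{\th_i}\rho_\th=(\rho_\th-\rho^*)f''(\rho_\th)\,\rho_\th\psi_i$. Next, differentiating once more in $\th_j$, I would invoke the second-order chain rule, which produces \emph{two} contributions: one from differentiating the outer factor $\partial_y\breg{f}$ and one from differentiating $\partial_{\th_i}\rho_\th$. Explicitly, $\partial_{\th_i\th_j}\breg{f}(\rho^*,\rho_\th)=\partial_{yy}\breg{f}(\rho^*,\rho_\th)\,(\partial_{\th_i}\rho_\th)(\partial_{\th_j}\rho_\th)+\partial_y\breg{f}(\rho^*,\rho_\th)\,\partial_{\th_i\th_j}\rho_\th$. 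Substituting the four identities turns this into $[f''(\rho_\th)+(\rho_\th-\rho^*)f'''(\rho_\th)]\rho_\th^2\psi_i\psi_j+(\rho_\th-\rho^*)f''(\rho_\th)\rho_\th\psi_i\psi_j$, and factoring out $\rho_\th\psi_i\psi_j$ while collecting the $(\rho_\th-\rho^*)$ terms recovers exactly the first displayed expression $\rho_\th\psi_i\psi_j\{(\rho_\th f'''(\rho_\th)+f''(\rho_\th))(\rho_\th-\rho^*)+\rho_\th f''(\rho_\th)\}$.

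Finally, to reach the second displayed form I would substitute the definition $\gfunc(\rho)=-(\rho f'''(\rho)+f''(\rho))$, so that $\rho_\th f'''(\rho_\th)+f''(\rho_\th)=-\gfunc(\rho_\th)$; the bracketed factor then reads $-\gfunc(\rho_\th)(\rho_\th-\rho^*)+\rho_\th f''(\rho_\th)=\rho_\th(f''(\rho_\th)-\gfunc(\rho_\th))+\rho^*\gfunc(\rho_\th)$, which is the claimed rewriting. The calculation is entirely mechanical, so there is no genuine obstacle; the one point that requires care is not to drop the second term of the second-order chain rule (the one proportional to $\partial_{\th_i\th_j}\rho_\th$), since omitting it would leave the final $\rho_\th f''(\rho_\th)$ summand missing and produce an incorrect Hessian.
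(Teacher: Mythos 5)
Your proposal is correct and is exactly the argument the paper intends: the lemma is stated immediately after the paper records the identities $\partial_y \breg{f}(x,y)=(y-x)f''(y)$, $\partial_{yy}\breg{f}(x,y)=f''(y)+yf'''(y)-xf'''(y)$, $\partial_{\th_i}\rho_\th=\rho_\th\psi_i$, and $\partial_{\th_i\th_j}\rho_\th=\rho_\th\psi_i\psi_j$, with the second-order chain-rule combination (including the term proportional to $\partial_{\th_i\th_j}\rho_\th$ that you rightly flag) left implicit. Your algebra, including the rewriting via $\rho_\th f'''(\rho_\th)+f''(\rho_\th)=-\gfunc(\rho_\th)$, checks out.
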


\subsection{\texorpdfstring{$f$}{f}-NCE}
Recall
\begin{align*}
\hat{\Lc}_f^\nce(\th)
&\defeq \Lc_f^{\nce}(\phi_\th;{\pdatah},{\prefh})
=-\frac{1}{\nu} \E_{{\pdatah}}[f'(\rho_\th)] + \E_{{\prefh}}[ \rho_\th f'(\rho_\th) - f(\rho_\th)].
\end{align*}

\subsubsection{Invariance}
\label{app:sec:invariance}
We define an equivalent class of generator functions $f$ that yield the same NCE objective.
For a function $f_o$, let $\Fc^{\nce}(f_o)\defeq \{f\suchthat \Lc_f^{\nce}\sim \Lc_{f_o}^{\nce}\}$,
where the notation $\sim$ denotes that the two objective functions are equivalent up to constants, \ie there exist $A,B\in\Real$ such that $\Lc_f^{\nce}(\phi_\th;\pdata,\pref)\equiv A\Lc_{f_o}^{\nce}(\phi_\th;\pdata,\pref) + B$.

\begin{lemma}
If $f\in\Fc^{\nce}(f_o)$, $(\rho\mapsto af(\rho)+b\rho+c) \in  \Fc^{\nce}(f_o)$ for any $a,b,c\in \Real$.    
\end{lemma}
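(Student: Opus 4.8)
The plan is to exploit the fact that the map $f \mapsto \Lc_f^{\nce}$ is affine-linear in $f$, together with the closure of affine maps under composition. Concretely, I would first establish the single-step claim that replacing $f$ by $g(\rho) \defeq af(\rho)+b\rho+c$ changes the objective only by an overall factor of $a$ plus a $\th$-independent constant, i.e.\ $\Lc_g^{\nce} \equiv a\,\Lc_f^{\nce} + B_0$ with $B_0$ depending only on $(a,b,c,\nu)$. Granting this, the lemma follows immediately: since $f\in\Fc^{\nce}(f_o)$ means $\Lc_f^{\nce}\equiv A\,\Lc_{f_o}^{\nce}+B$, substitution gives $\Lc_g^{\nce}\equiv aA\,\Lc_{f_o}^{\nce}+(aB+B_0)$, again an affine function of $\Lc_{f_o}^{\nce}$, whence $g\in\Fc^{\nce}(f_o)$. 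The only structural input is that composing two affine relations yields an affine relation.

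The core of the argument is a direct substitution into Eq.~\eqref{eq:nce_obj}. Using $g'(\rho)=af'(\rho)+b$, I would expand
\[
\Lc_g^{\nce} = -\frac{1}{\nu}\E_{\pdata}[af'(\rho_\th)+b] + \E_{\pref}[\rho_\th(af'(\rho_\th)+b) - (af(\rho_\th)+b\rho_\th+c)].
\]
The term linear in $\rho$ is the decisive one: the contribution $b\rho_\th$ arising from $\E_{\pref}[\rho_\th g'(\rho_\th)]$ is cancelled exactly by the $-b\rho_\th$ coming from $-\E_{\pref}[g(\rho_\th)]$, so the slope $b$ leaves no $\th$-dependence and survives only as the constant $-b/\nu$ from the data term. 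Collecting the remaining pieces yields $\Lc_g^{\nce}=a\,\Lc_f^{\nce}-b/\nu-c$, so $B_0=-b/\nu-c$.

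Structurally, this cancellation is nothing but the invariance of the Bregman divergence under addition of affine functions to its generator: working from the intermediate form Eq.~\eqref{eq:nce_obj_intermed}, one has $\Delta_{af+\ell}=a\,\breg{f}$ for any affine $\ell$, and the leftover $-\E_{\pref}[\ell(\pdata/(\nu\pref))]=-b/\nu-c$ (using $\E_{\pref}[\pdata/\pref]=1$) reproduces the same constant. I expect no real obstacle; the only care needed is bookkeeping of constants and noting the degenerate case $a=0$, where $g$ is affine and $\Lc_g^{\nce}$ collapses to a pure constant, which still satisfies the definition of $\sim$ with $A'=0$. I would therefore present the direct computation as the main line and flag the Bregman-divergence viewpoint as the conceptual reason for the cancellation.
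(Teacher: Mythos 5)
Your proof is correct. The paper in fact states this lemma without a proof, remarking only in the main text that the invariance is ``directly inherited'' from the Bregman divergence; your direct substitution of $g(\rho)=af(\rho)+b\rho+c$ into Eq.~\eqref{eq:nce_obj}, yielding $\Lc_g^{\nce}=a\,\Lc_f^{\nce}-b/\nu-c$ and then composing affine relations, together with the Bregman-divergence reading via Eq.~\eqref{eq:nce_obj_intermed}, is exactly the intended argument.
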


\subsubsection{Derivatives}
\begin{lemma}[NCE: derivatives]
\label{lem:nce_derivatives}
\begin{align*}
\nabla_\th\hat{\Lc}_f^\nce(\th)
&= \frac{1}{\nu} \E_{\pdatah}[-\rho_\th f''(\rho_\th)\nabla_\th\log\rho_\th] + \E_{\prefh}[\rho_\th^2 f''(\rho_\th)\nabla_\th\log \rho_\th],\\
\nabla_\th^2\hat{\Lc}_f^\nce(\th)
&= \frac{1}{\nu}\E_{\pdatah}[(-\rho_\th f''(\rho_\th)-\rho_\th^2 f'''(\rho_\th))\nabla_\th\log\rho_\th \nabla_\th^\intercal\log\rho_\th
-\rho_\th f''(\rho_\th)\nabla_\th^2\log\rho_\th]\\
&\quad 
+\E_{\prefh}[(2\rho_\th^2 f''(\rho_\th) +\rho_\th^3f'''(\rho_\th))\nabla_\th\log\rho_\th \nabla_\th^\intercal\log\rho_\th
+\rho_\th^2 f''(\rho_\th)\nabla_\th^2\log\rho_\th].
\end{align*}
In particular, we have
\begin{align*}
\nabla_\th\Lc_f^\nce(\thstar) &= \E[\nabla_\th\hat{\Lc}_f^\nce(\thstar)] = 0,\\
\nabla_\th^2 \Lc_f^{\nce}(\thstar)
&= \frac{1}{\nu}\E_{\pdata}\bigl[\rho_\thstar {f''(\rho_\thstar)} \nabla_\th\log\rho_\thstar\nabla_\th^\intercal\log\rho_\thstar\bigr].
\end{align*}

For an exponential family model $\phi_\th(x)=\exp(\langle\th,\psi(x)\rangle)$,
we have
\begin{align*}
\nabla_\th\hat{\Lc}_f^\nce(\th)
&= \frac{1}{\nu} \E_{\pdatah}[\psi\ncederivfun{1}{f}{\data}(\rho_\th)] + \E_{\prefh}[\psi\ncederivfun{1}{f}{\noise}(\rho_\th)],\\
\nabla_\th^2\hat{\Lc}_f^\nce(\th)
&= \frac{1}{\nu}\E_{\pdatah}[\psi\psi^\intercal\ncederivfun{2}{f}{\data}(\rho_\th)]
+\E_{\prefh}[\psi\psi^\intercal \ncederivfun{2}{f}{\noise}(\rho_\th)],
\end{align*}
where 
\begin{align*}
\ncederivfun{1}{f}{\data}(\rho)&=-\rho f''(\rho),\\
\ncederivfun{1}{f}{\noise}(\rho)&=\rho^2 f''(\rho),\\ 
\ncederivfun{2}{f}{\data}(\rho)&=\rho \gfunc(\rho),\\ 
\ncederivfun{2}{f}{\noise}(\rho)&=\rho^2 (f''(\rho)-\gfunc(\rho))
\end{align*}
In particular, if $\pdata(x)\equiv \phi_{\thstar}(x)$ for some $\thstar$,
\begin{align*}
\nabla_\th\Lc_f^\nce(\thstar) &= \E[\nabla_\th\hat{\Lc}_f^\nce(\thstar)] = 0,\\
\nabla_\th^2 \Lc_f^\nce(\thstar)
&= \E[\nabla_\th^2\hat{\Lc}_f^\nce(\thstar)]
= \frac{1}{\nu} \E_{\pdata}[\psi\psi^\intercal f''(\rho_{\thstar})]
= \E_{\pref}[\psi\psi^\intercal \rho_{\thstar}f''(\rho_{\thstar})].
\end{align*}
\end{lemma}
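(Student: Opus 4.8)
The plan is to obtain both derivatives by direct differentiation of the empirical objective
\[
\hat{\Lc}_f^\nce(\th) = -\tfrac{1}{\nu}\E_{\pdatah}[f'(\rho_\th)] + \E_{\prefh}[\rho_\th f'(\rho_\th) - f(\rho_\th)],
\]
using the single chain-rule identity $\nabla_\th\rho_\th = \rho_\th\nabla_\th\log\rho_\th$. This identity holds because $\log\rho_\th = \log\phi_\th - \log(\nu\pref)$, so $\nabla_\th\log\rho_\th = \nabla_\th\log\phi_\th$ and hence $\nabla_\th\rho_\th = \rho_\th\nabla_\th\log\rho_\th$. Since the empirical expectations are finite averages over the samples, differentiation commutes with $\E_{\pdatah}$ and $\E_{\prefh}$, so I can differentiate the integrand pointwise.

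For the gradient I would differentiate the two terms separately. The data term gives $\nabla_\th[-\tfrac1\nu f'(\rho_\th)] = -\tfrac1\nu f''(\rho_\th)\nabla_\th\rho_\th = -\tfrac1\nu\rho_\th f''(\rho_\th)\nabla_\th\log\rho_\th$. For the reference term the key is a cancellation: $\nabla_\th[\rho_\th f'(\rho_\th)-f(\rho_\th)] = (f'(\rho_\th)+\rho_\th f''(\rho_\th)-f'(\rho_\th))\nabla_\th\rho_\th = \rho_\th f''(\rho_\th)\nabla_\th\rho_\th = \rho_\th^2 f''(\rho_\th)\nabla_\th\log\rho_\th$. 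Collecting the two contributions yields the stated gradient.

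For the Hessian I would differentiate the gradient, written as $\tfrac1\nu\E_{\pdatah}[a(\rho_\th)\nabla_\th\log\rho_\th]+\E_{\prefh}[b(\rho_\th)\nabla_\th\log\rho_\th]$ with $a(\rho)=-\rho f''(\rho)$ and $b(\rho)=\rho^2 f''(\rho)$. The product rule gives $\nabla_\th[a(\rho_\th)\nabla_\th\log\rho_\th] = a'(\rho_\th)\nabla_\th\rho_\th\,\nabla_\th^\intercal\log\rho_\th + a(\rho_\th)\nabla_\th^2\log\rho_\th$, and substituting $\nabla_\th\rho_\th=\rho_\th\nabla_\th\log\rho_\th$ turns the first summand into $a'(\rho_\th)\rho_\th\,\nabla_\th\log\rho_\th\nabla_\th^\intercal\log\rho_\th$. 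Then I compute $a'(\rho)\rho = (-f''(\rho)-\rho f'''(\rho))\rho = -\rho f''(\rho)-\rho^2 f'''(\rho)$ and $b'(\rho)\rho = (2\rho f''(\rho)+\rho^2 f'''(\rho))\rho = 2\rho^2 f''(\rho)+\rho^3 f'''(\rho)$, which are exactly the coefficients of the outer-product terms in the claim, while $a(\rho_\th)=-\rho_\th f''(\rho_\th)$ and $b(\rho_\th)=\rho_\th^2 f''(\rho_\th)$ are the coefficients of the $\nabla_\th^2\log\rho_\th$ terms.

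Finally, for the identities at $\thstar$ I would use well-specifiedness to write $\pdata(x) = \nu\pref(x)\rho_\thstar(x)$, which converts $\tfrac1\nu\E_{\pdata}[\,\cdot\,]$ into $\E_{\pref}[\rho_\thstar\,\cdot\,]$. In the gradient this maps the data term to $-\E_{\pref}[\rho_\thstar^2 f''(\rho_\thstar)\nabla_\th\log\rho_\thstar]$, which cancels the reference term and gives $\nabla_\th\Lc_f^\nce(\thstar)=0$. In the Hessian the two $\nabla_\th^2\log\rho_\thstar$ contributions become $\mp\rho_\thstar^2 f''(\rho_\thstar)$ and cancel, while the outer-product coefficients combine as $\rho_\thstar(-\rho_\thstar f''-\rho_\thstar^2 f''')+(2\rho_\thstar^2 f''+\rho_\thstar^3 f''') = \rho_\thstar^2 f''(\rho_\thstar)$, leaving $\E_{\pref}[\rho_\thstar^2 f''(\rho_\thstar)\nabla_\th\log\rho_\thstar\nabla_\th^\intercal\log\rho_\thstar] = \tfrac1\nu\E_{\pdata}[\rho_\thstar f''(\rho_\thstar)\nabla_\th\log\rho_\thstar\nabla_\th^\intercal\log\rho_\thstar]$. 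The exponential-family statements then follow by substituting $\nabla_\th\log\rho_\th=\psi$ and $\nabla_\th^2\log\rho_\th=0$ and reading off $\ncederivfun{1}{f}{\data}=-\rho f''$, $\ncederivfun{1}{f}{\noise}=\rho^2 f''$, $\ncederivfun{2}{f}{\data}=\rho\gfunc(\rho)$, and $\ncederivfun{2}{f}{\noise}=\rho^2(f''(\rho)-\gfunc(\rho))$ with $\gfunc(\rho)=-(\rho f'''(\rho)+f''(\rho))$. The computation is entirely routine; the only steps needing care are the bookkeeping of the matrix (outer-product versus $\nabla_\th^2\log\rho_\th$) terms in the Hessian and the change-of-measure substitution $\tfrac1\nu\E_{\pdata}=\E_{\pref}[\rho_\thstar\,\cdot\,]$ that produces the cancellations at $\thstar$.
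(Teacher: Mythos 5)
Your proof is correct and takes essentially the same route as the paper: direct chain-rule differentiation of the reduced objective in Eq.~\eqref{eq:nce_obj} via $\nabla_\th\rho_\th=\rho_\th\nabla_\th\log\rho_\th$, followed by the change of measure $\tfrac1\nu\E_{\pdata}[\,\cdot\,]=\E_{\pref}[\rho_{\thstar}\,\cdot\,]$ at the well-specified optimum (the paper states the lemma without an explicit proof, recording only the Bregman-divergence and $\partial_{\th_i}\rho_\th=\rho_\th\psi_i$ building blocks for the same computation). One caveat worth noting: your derivation yields the exponential-family Hessian at $\thstar$ as $\tfrac1\nu\E_{\pdata}[\psi\psi^\intercal\rho_{\thstar} f''(\rho_{\thstar})]=\E_{\pref}[\psi\psi^\intercal\rho_{\thstar}^2 f''(\rho_{\thstar})]$, which agrees with the general expression $\tfrac1\nu\E_{\pdata}[\rho_{\thstar} f''(\rho_{\thstar})\nabla_\th\log\rho_{\thstar}\nabla_\th^\intercal\log\rho_{\thstar}]$ stated earlier in the same lemma and with a direct check for $f_1(\rho)=\rho\log\rho$; the lemma's final display as printed drops a factor of $\rho_{\thstar}$, so your version is the correct one.
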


\subsection{\texorpdfstring{$\a$}{alpha}-CentNCE}
Recall that
\[
\rt_{\th;\a}(x) = \frac{r_\th(x)}{(\E_{\pnoise}[r_\th^\a(x)])^{\frac{1}{\a}}}
\]
and
\[
\tilde{\Lc}_\a(\th)\defeq \tilde{\Lc}_\a(\th;\pdata,\pref)
\defeq\frac{1}{1-\a}\E_{\pdata}[\rt_{\th;\a}^{\a-1}(x)]
=\frac{1}{1-\a}\E_{\pdata}[r_\th^{\a-1}(x)](\E_{\pnoise}[r_\th^\a(x)])^{\frac{1-\a}{\a}}.
\]

\subsubsection{Derivatives}
It is easy to check that 
\begin{lemma}
\label{lem:deriv_centered}
\begin{align*}
\nabla_\th\log\rt_{\th;\a} 
&= \psi-\E_{\pref}[\psi \rt_{\th;\a}^\a],\\
\nabla_\th^2\log\rt_{\th;\a} 
&= -\a\{\E_{\pref}[\psi\psi^\intercal \rt_{\th;\a}^\a] - \E_{\pref}[\psi \rt_{\th;\a}^\a]\E_{\pref}[\psi \rt_{\th;\a}^\a]^\intercal\}.
\end{align*}

\end{lemma}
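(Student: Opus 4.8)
The plan is to establish both identities by straightforward logarithmic differentiation, leaning on the exponential-family structure $\phi_\th(x)=\exp(\langle\th,\psi(x)\rangle)$. Writing $W(\th)\defeq\E_{\pnoise}[r_\th^\a]$ so that $\rt_{\th;\a}=r_\th/W(\th)^{1/\a}$, I would start from the additive decomposition $\log\rt_{\th;\a}(x)=\log r_\th(x)-\frac{1}{\a}\log W(\th)$. The first term is affine in $\th$ up to an $x$-dependent constant, so $\nabla_\th\log r_\th=\psi$ (equivalently $\partial_{\th_i}r_\th=r_\th\psi_i$, as recorded above), and consequently $\nabla_\th r_\th^\a=\a r_\th^\a\psi$.

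For the gradient, I would differentiate the normalizer via $\nabla_\th\log W(\th)=\nabla_\th W(\th)/W(\th)$ with $\nabla_\th W(\th)=\E_{\pnoise}[\nabla_\th r_\th^\a]=\a\E_{\pnoise}[r_\th^\a\psi]$, interchanging gradient and expectation. This gives $-\frac{1}{\a}\nabla_\th\log W(\th)=-\E_{\pnoise}[r_\th^\a\psi]/W(\th)$. The key simplification is the self-referential identity $r_\th^\a/W(\th)=\rt_{\th;\a}^\a$ (which also explains the normalization $\E_{\pnoise}[\rt_{\th;\a}^\a]=1$ noted in the main text), yielding exactly $\nabla_\th\log\rt_{\th;\a}=\psi-\E_{\pref}[\psi\rt_{\th;\a}^\a]$.

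For the Hessian, since $\psi(x)$ is independent of $\th$, I would differentiate the first identity to get $\nabla_\th^2\log\rt_{\th;\a}=-\nabla_\th\E_{\pref}[\psi\rt_{\th;\a}^\a]$. It remains to compute $\nabla_\th\rt_{\th;\a}^\a$, which I would obtain by the quotient rule applied to $r_\th^\a/W(\th)$: combining $\nabla_\th r_\th^\a=\a r_\th^\a\psi$, $\nabla_\th W(\th)=\a\E_{\pnoise}[r_\th^\a\psi]$, and $r_\th^\a/W(\th)=\rt_{\th;\a}^\a$ gives $\nabla_\th\rt_{\th;\a}^\a=\a\rt_{\th;\a}^\a(\psi-\E_{\pref}[\psi\rt_{\th;\a}^\a])$. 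Multiplying by $\psi$, taking $\E_{\pref}$, and negating then produces the centered second-moment form $-\a\{\E_{\pref}[\psi\psi^\intercal\rt_{\th;\a}^\a]-\E_{\pref}[\psi\rt_{\th;\a}^\a]\E_{\pref}[\psi\rt_{\th;\a}^\a]^\intercal\}$, as claimed.

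There is no deep obstacle; the content is purely bookkeeping. The one step requiring care is recognizing the self-referential collapse $r_\th^\a/W(\th)=\rt_{\th;\a}^\a$ at each differentiation, so that both derivatives close in terms of the centered ratio rather than the raw $r_\th$; this is precisely what turns the Hessian into a $\pref$-weighted covariance of $\psi$. I would also note that differentiating under $\E_{\pnoise}$ is legitimate under Assumption~\ref{assum:nce_bdd_norm_psi} (bounded $\psi$), and that the boundary case $\a=0$ is treated separately, either via the limit $Z_0(\th)=\lim_{\a\downarrow0}Z_\a(\th)$ or by direct differentiation of $\log\rt_{\th;0}=\log r_\th-\E_{\pnoise}[\log r_\th]$, which gives $\nabla_\th\log\rt_{\th;0}=\psi-\E_{\pref}[\psi]$ and a vanishing Hessian — consistent with setting $\a=0$ in the formulas.
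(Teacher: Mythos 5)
Your computation is correct and is exactly the "easy to check" calculation the paper implicitly relies on (the lemma is stated without proof): logarithmic differentiation of $\log\rt_{\th;\a}=\log r_\th-\frac{1}{\a}\log\E_{\pref}[r_\th^\a]$, using $\nabla_\th\log r_\th=\psi$ and the collapse $r_\th^\a/\E_{\pref}[r_\th^\a]=\rt_{\th;\a}^\a$, then differentiating once more. Your separate handling of $\a=0$ and the check that it agrees with the $\a\to 0$ limit of the general formulas is a sensible addition consistent with the paper's Corollary~\ref{cor:deriv_giso}.
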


\subsubsection{An Alternative Interpretation of GlobalGISO}

Consider an unnormalized model $\{\phi_\th(x)\suchthat \th\in\Th\}$.
For a data distribution $\pdata(x)$, to which we have sample access,
assume that there exists $\th^*\in\Th$ such that $\phi_{\th^*}(x)\propto \pdata(x)$.
Let $\pref(x)$ be a reference distribution which makes $\E_{\pref}[\log \phi_\th(x)]$ exist for any $\th\in\Th$.
We define a ``centered'' unnormalized model
\[
\phitilde_\th(x) \defeq \frac{\phi_\th(x)}{e^{\E_{\pref}[\log \phi_\th(x)]}}
\]
and denote its partition function as $\tilde{Z}(\th)\defeq \int \phitilde_\th(x)\diff x$.
We remark that
\begin{align}
\E_{\pref}[\log \phitilde_\th(x)]
=\E_{\pref}[\log\phi_\th(x)] - \E_{\pref}[\log\phi_\th(x)] = 0.
\label{eq:centered}
\end{align}

\newcommand{\giso}{\mathsf{giso}}
We then define an objective for distribution learning as
\[
\Lc_{\giso}(\th)\defeq \E_{\pdata}\Bigl[\frac{\pref(x)}{\phitilde_\th(x)}\Bigr].
\]

If $\phi_\th(x)=\exp(\langle\th,\psi(x)\rangle$ is an exponential family distribution over a compact support $\Xc$ and $\pref(x)$ is the uniform distribution over $\Xc$, then it boils down to the objective function studied by \citep{Shah--Shah--Wornell2021b}.

\paragraph{Fisher Consistency}
To understand the property of the objective,
we introduce another unnormalized model
\[
\xi_{\th_1,\th_2}(x)\defeq \frac{\phitilde_{\th_1}(x)}{\phitilde_{\th_2}(x)} \pref(x),
\]
and denote its partition function and the normalized distribution as
\[
{Z}(\th_1,\th_2)\defeq \int \xi_{\th_1,\th_2}(x)\diff x
\quad\text{and}\quad
q_{\th_1,\th_2}(x)\defeq \frac{\xi_{\th_1,\th_2}(x)}{{Z}(\th_1,\th_2)}.
\]
We can then show that
\begin{theorem}
\label{thm:ssw2021}
\[
\log\Lc_{\giso}(\th) = D(\pref\|q_{\th^*,\th}) - \log\tilde{Z}(\th^*).
\]
\end{theorem}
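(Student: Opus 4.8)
The plan is to reduce the identity to two short computations, both of which ultimately rest on the centering property in Eq.~\eqref{eq:centered}. The overall strategy is to first express $\Lc_{\giso}(\th)$ as a ratio of partition functions, and then separately identify the logarithm of one of those partition functions with the claimed KL divergence.

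First I would rewrite the data distribution in terms of the centered model. Since $\phi_{\th^*}(x)\propto\pdata(x)$ and $\phitilde_{\th^*}$ is merely a positive rescaling of $\phi_{\th^*}$, we still have $\phitilde_{\th^*}(x)\propto\pdata(x)$; because $\phitilde_{\th^*}$ integrates to $\tilde{Z}(\th^*)$ while $\pdata$ is a normalized density, the proportionality constant is pinned down and $\pdata(x)=\phitilde_{\th^*}(x)/\tilde{Z}(\th^*)$. Substituting this into $\Lc_{\giso}(\th)=\E_{\pdata}[\pref(x)/\phitilde_\th(x)]$, the integrand becomes $\frac{1}{\tilde{Z}(\th^*)}\cdot\frac{\phitilde_{\th^*}(x)}{\phitilde_\th(x)}\pref(x)$, and the second factor is exactly $\xi_{\th^*,\th}(x)$. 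Integrating therefore gives $\Lc_{\giso}(\th)=Z(\th^*,\th)/\tilde{Z}(\th^*)$, so that $\log\Lc_{\giso}(\th)=\log Z(\th^*,\th)-\log\tilde{Z}(\th^*)$. It then remains only to show $\log Z(\th^*,\th)=D(\pref\|q_{\th^*,\th})$.

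For that, I would expand the KL divergence using the definition $q_{\th^*,\th}=\xi_{\th^*,\th}/Z(\th^*,\th)$, which yields $\pref(x)/q_{\th^*,\th}(x)=Z(\th^*,\th)\,\phitilde_\th(x)/\phitilde_{\th^*}(x)$. Taking the logarithm and the $\pref$-expectation gives $D(\pref\|q_{\th^*,\th})=\log Z(\th^*,\th)+\E_{\pref}[\log\phitilde_\th]-\E_{\pref}[\log\phitilde_{\th^*}]$, and both expectation terms vanish by the centering identity Eq.~\eqref{eq:centered}. Combining with the previous display completes the proof.

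The main obstacle is not computational depth but honest bookkeeping: one must correctly route the normalization of $\pdata$ through $\tilde{Z}(\th^*)$ (this is precisely where the well-specified assumption $\phi_{\th^*}\propto\pdata$ enters), and one must invoke the assumption on $\pref$ guaranteeing that $\E_{\pref}[\log\phi_\th]$ exists, which is what renders both the centering terms and the KL integral well-defined. The real content is that the $\a=0$ centering was engineered so that the log-normalizers $\E_{\pref}[\log\phitilde]$ are identically zero, and this is exactly what collapses the KL expansion down to $\log Z(\th^*,\th)$.
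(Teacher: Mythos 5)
Your proposal is correct and follows essentially the same route as the paper: it first establishes $\Lc_{\giso}(\th)=Z(\th^*,\th)/\tilde{Z}(\th^*)$ by writing $\pdata=\phitilde_{\th^*}/\tilde{Z}(\th^*)$, and then shows $D(\pref\,\|\,q_{\th^*,\th})=\log Z(\th^*,\th)$ by expanding the KL and killing the two log-model terms via the centering identity in Eq.~\eqref{eq:centered}. These are precisely the two lemmas the paper uses, so there is nothing to add.
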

As an immediate corollary, we can prove the Fisher consistency of the objective function.
\begin{corollary}[Fisher consistency]
Let $\thstar\in\arg\min_\th\Lc_{\giso}(\th)$.
Then, $\phi_{\thstar}(x)\propto \pdata(x)$ for $x\in\supp(\pref)$.
\end{corollary}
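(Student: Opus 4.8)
The plan is to exploit Theorem~\ref{thm:ssw2021}, which expresses $\log\Lc_{\giso}(\th)$ as a KL divergence plus a $\th$-independent constant, and then to use the nonnegativity and strictness (Gibbs' inequality) of the KL divergence to pin down any minimizer. Fix the true parameter $\th^*\in\Th$ guaranteed by well-specifiedness, so that $\phi_{\th^*}\propto\pdata$. Since $t\mapsto\log t$ is strictly increasing, $\thstar$ minimizes $\Lc_{\giso}$ if and only if it minimizes $\log\Lc_{\giso}$, and Theorem~\ref{thm:ssw2021} gives
\[
\log\Lc_{\giso}(\th) = D(\pref\|q_{\th^*,\th}) - \log\tilde{Z}(\th^*),
\]
where the last term does not depend on $\th$. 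Hence minimizing $\Lc_{\giso}$ over $\th$ is equivalent to minimizing $\th\mapsto D(\pref\|q_{\th^*,\th})$.

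First I would identify the minimum value of the KL term and show it is attained. Substituting $\th=\th^*$ into the definition of $\xi_{\th_1,\th_2}$ gives $\xi_{\th^*,\th^*}(x)=\pref(x)$, so $Z(\th^*,\th^*)=1$ and $q_{\th^*,\th^*}=\pref$, whence $D(\pref\|q_{\th^*,\th^*})=0$. Because $D(\pref\|\cdot)\ge 0$ always and $\th^*\in\Th$ is feasible, the value $0$ is the global minimum of the KL term over $\Th$. Consequently any minimizer $\thstar$ of $\Lc_{\giso}$ must satisfy $D(\pref\|q_{\th^*,\thstar})=0$.

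Next I would decode this equality back into a statement about $\phi_{\thstar}$. By Gibbs' inequality, $D(\pref\|q_{\th^*,\thstar})=0$ forces $q_{\th^*,\thstar}(x)=\pref(x)$ for $x\in\supp(\pref)$. Unfolding $q_{\th^*,\thstar}(x)=\frac{1}{Z(\th^*,\thstar)}\frac{\phitilde_{\th^*}(x)}{\phitilde_{\thstar}(x)}\pref(x)$ and cancelling the strictly positive factor $\pref(x)$ on $\supp(\pref)$ yields $\phitilde_{\th^*}(x)=Z(\th^*,\thstar)\,\phitilde_{\thstar}(x)$, that is, $\phitilde_{\thstar}\propto\phitilde_{\th^*}$ on $\supp(\pref)$. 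Since the centering map $\phi_\th\mapsto\phitilde_\th$ only divides by a positive $\th$-dependent scalar $e^{\E_{\pref}[\log\phi_\th]}$, this proportionality lifts to $\phi_{\thstar}\propto\phi_{\th^*}$; combined with $\phi_{\th^*}\propto\pdata$, we conclude $\phi_{\thstar}(x)\propto\pdata(x)$ for $x\in\supp(\pref)$, as claimed.

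The main obstacle I anticipate is purely the bookkeeping around supports and attainment of the minimum rather than any hard analytic step: one must confirm that $\th^*$ is genuinely feasible so that the infimum of the KL term equals $0$ and is achieved, and one must track that the equality $q_{\th^*,\thstar}=\pref$ delivered by Gibbs' inequality holds only $\pref$-almost everywhere, which is exactly why the conclusion is phrased "for $x\in\supp(\pref)$" and why the cancellation of $\pref(x)$ is legitimate only there. Notably, no strict convexity of $\Lc_{\giso}$ in $\th$ is needed, and identifiability of $\th^*$ is not required, since the conclusion asserts only proportionality to $\pdata$.
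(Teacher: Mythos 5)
Your proposal is correct and follows exactly the route the paper intends when it calls this an ``immediate corollary'': minimizing $\Lc_{\giso}$ is equivalent to minimizing $\th\mapsto D(\pref\|q_{\th^*,\th})$ by Theorem~\ref{thm:ssw2021}, the value $0$ is attained at $\th=\th^*$ since $q_{\th^*,\th^*}=\pref$, and the equality case of the KL divergence forces $\phitilde_{\thstar}\propto\phitilde_{\th^*}$ on $\supp(\pref)$, hence $\phi_{\thstar}\propto\pdata$ there. Your additional care about $\pref$-almost-everywhere equality and the lifting through the positive centering scalar is sound bookkeeping that the paper leaves implicit.
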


The proof of Theorem~\ref{thm:ssw2021} readily follows from the following lemmas.
\begin{lemma}
\[
\Lc_{\giso}(\th)=\frac{Z(\th^*,\th)}{\tilde{Z}(\th^*)}.
\]
\end{lemma}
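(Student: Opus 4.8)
The plan is to reduce the statement to the elementary observation that, under well-specification, the normalized centered model at $\th^*$ coincides exactly with $\pdata$, and then to substitute this into the objective and recognize the resulting integral as $Z(\th^*,\th)$.

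First I would record that the centering here (the $\a=0$ case) only rescales the model by a positive constant: by definition $\phitilde_{\th^*}(x)=\phi_{\th^*}(x)\,e^{-\E_{\pref}[\log\phi_{\th^*}(x)]}$, so $\phitilde_{\th^*}(x)\propto\phi_{\th^*}(x)\propto\pdata(x)$, where the last proportionality is the well-specification hypothesis. Since $\tilde{Z}(\th^*)=\int\phitilde_{\th^*}(x)\diff x$ is precisely the normalizing constant of $\phitilde_{\th^*}$, and $\pdata$ is the (unique) probability density proportional to it, this pins down the equality
\[
\pdata(x)=\frac{\phitilde_{\th^*}(x)}{\tilde{Z}(\th^*)}.
\]

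With this identity in hand, I would substitute directly into the definition of the objective:
\[
\Lc_{\giso}(\th)
=\E_{\pdata}\Bigl[\frac{\pref(x)}{\phitilde_\th(x)}\Bigr]
=\int \frac{\phitilde_{\th^*}(x)}{\tilde{Z}(\th^*)}\cdot\frac{\pref(x)}{\phitilde_\th(x)}\diff x
=\frac{1}{\tilde{Z}(\th^*)}\int \frac{\phitilde_{\th^*}(x)}{\phitilde_\th(x)}\pref(x)\diff x.
\]
The integrand is exactly $\xi_{\th^*,\th}(x)$, so the remaining integral equals $Z(\th^*,\th)$, which yields the claimed $\Lc_{\giso}(\th)=Z(\th^*,\th)/\tilde{Z}(\th^*)$.

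The computation is short, and the points that require care are bookkeeping rather than conceptual. I would make sure the centering constant $e^{\E_{\pref}[\log\phi_{\th^*}(x)]}$ reconciles precisely with $\tilde{Z}(\th^*)$ so that $\pdata=\phitilde_{\th^*}/\tilde{Z}(\th^*)$ is a genuine equality and not merely a proportionality, and I would invoke the standing assumption that $\E_{\pref}[\log\phi_\th(x)]$ exists (together with the support condition $\supp(\pdata)\subset\supp(\pref)$) to guarantee that all integrals above are well defined and finite. No restricted-convexity or concentration machinery is needed here; this lemma is purely a change-of-measure identity, and the subsequent Theorem~\ref{thm:ssw2021} then follows by writing $\frac{Z(\th^*,\th)}{\tilde{Z}(\th^*)}$ in terms of the normalized density $q_{\th^*,\th}$ and comparing against $\pref$.
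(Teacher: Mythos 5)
Your proposal is correct and follows essentially the same route as the paper's own proof: substitute $\pdata(x)=\phitilde_{\th^*}(x)/\tilde{Z}(\th^*)$ (justified by well-specification) into the definition of $\Lc_{\giso}$, pull out the constant $1/\tilde{Z}(\th^*)$, and recognize the remaining integrand as $\xi_{\th^*,\th}$ so that the integral is $Z(\th^*,\th)$. Your extra remark explaining why the proportionality $\phitilde_{\th^*}\propto\pdata$ upgrades to the exact equality $\pdata=\phitilde_{\th^*}/\tilde{Z}(\th^*)$ is a detail the paper leaves implicit, but the argument is the same.
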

\begin{proof}
Consider
\begin{align*}
\Lc_{\giso}(\th)
&\defeq \int \pdata(x)\frac{\pref(x)}{\phitilde_\th(x)}\diff x\\
&=\int \frac{\phitilde_{\th^*}(x)}{\tilde{Z}(\th^*)} \frac{\pref(x)}{\phitilde_\th(x)}\diff x\\
&= \frac{1}{\tilde{Z}(\th^*)} \int\frac{\phitilde_{\th^*}(x)}{\phitilde_\th(x)} \pref(x)\diff x\\
&= \frac{Z(\th^*,\th)}{\tilde{Z}(\th^*)}.\qedhere
\end{align*}
\end{proof}

\begin{lemma}
For any $\th_1,\th_2\in\Th$,
\[
D(\pref \| q_{\th_1,\th_2}) 
= \log Z(\th_1,\th_2).
\]
\end{lemma}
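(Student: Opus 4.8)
The plan is to expand the relative entropy straight from its definition and let the reference density cancel against itself. Writing
\[
D(\pref\|q_{\th_1,\th_2}) = \E_{\pref}\Bigl[\log\tfrac{\pref(x)}{q_{\th_1,\th_2}(x)}\Bigr],
\]
I would substitute the definition $q_{\th_1,\th_2}(x) = \xi_{\th_1,\th_2}(x)/Z(\th_1,\th_2)$ together with $\xi_{\th_1,\th_2}(x) = \frac{\phitilde_{\th_1}(x)}{\phitilde_{\th_2}(x)}\pref(x)$. The factor of $\pref(x)$ sitting inside $\xi_{\th_1,\th_2}$ cancels the $\pref(x)$ in the numerator of the log-ratio, so that pointwise
\[
\log\frac{\pref(x)}{q_{\th_1,\th_2}(x)} = \log Z(\th_1,\th_2) + \log\phitilde_{\th_2}(x) - \log\phitilde_{\th_1}(x).
\]

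Taking the expectation under $\pref$ and using linearity, the constant $\log Z(\th_1,\th_2)$ passes through untouched, and it remains only to eliminate the two terms $\E_{\pref}[\log\phitilde_{\th_2}(x)]$ and $\E_{\pref}[\log\phitilde_{\th_1}(x)]$. This is exactly where the centering construction does its work: by the defining property of $\phitilde_\th$ recorded in Eq.~\eqref{eq:centered}, we have $\E_{\pref}[\log\phitilde_\th(x)] = 0$ for every $\th\in\Th$, since $\log\phitilde_\th = \log\phi_\th - \E_{\pref}[\log\phi_\th]$ subtracts off precisely its own $\pref$-mean. Hence both expectation terms vanish, leaving $D(\pref\|q_{\th_1,\th_2}) = \log Z(\th_1,\th_2)$, as claimed.

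There is no deep obstacle here; the computation is essentially bookkeeping, and the conceptual content is the recognition that the $\th$-dependent normalization hidden in each $\phitilde_\th$ is engineered precisely so these log-terms integrate to zero against $\pref$. The one point requiring care is well-definedness: I would note at the outset that the standing assumption that $\E_{\pref}[\log\phi_\th(x)]$ exists for all $\th\in\Th$ guarantees $\log\phitilde_\th$ is $\pref$-integrable, so both the centering identity and the split of the expectation into three finite pieces are legitimate, and that $Z(\th_1,\th_2)$ is taken to be a finite (hence genuine) normalizing constant so that $q_{\th_1,\th_2}$ is a bona fide density and the relative entropy is well-posed.
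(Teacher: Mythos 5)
Your proof is correct and follows exactly the paper's own argument: expand the KL divergence from its definition, cancel $\pref(x)$ against the copy inside $\xi_{\th_1,\th_2}$, and kill the remaining $\E_{\pref}[\log\phitilde_{\th_i}]$ terms via the centering identity in Eq.~\eqref{eq:centered}. The added remarks on integrability and finiteness of $Z(\th_1,\th_2)$ are sensible housekeeping the paper leaves implicit, but the substance of the two proofs is identical.
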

\begin{proof}
Consider
\begin{align*}
D(\pref \| q_{\th_1,\th_2}) 
&= \E_{\pref}\Bigl[\log \frac{\pref(x)}{q_{\th_1,\th_2}(x)}\Bigr]\\
&= \E_{\pref}\Bigl[\log Z(\th_1,\th_2) + \log\frac{\phit_{\th_2}(x)}{\phit_{\th_1}(x)}\Bigr]\\
&= \log Z(\th_1,\th_2).
\end{align*}
Here, in the last equality, we use the fact that $\log\phitilde_\th(x)$ is centered under $\pref(x)$, as alluded to earlier in Eq.~\eqref{eq:centered}.
\end{proof}

\subsubsection{Proof of Theorem~\ref{thm:centnce_subsumes}}
\thmcentncesubsumes*
\begin{proof}
When $\a\to 1$, the centering becomes the standard normalization, \ie
\begin{align*}
\phitilde_{\th;1}(x)
&\defeq 
\lim_{\a\to1} \frac{\phi_\th(x)}{(\E_{\pref}[(\frac{\phi_\th(x)}{\pref(x)})^\a])^{1/\a}}
=\frac{\phi_\th(x)}{\E_{\pref}[\frac{\phi_\th(x)}{\pref(x)}]},%
\end{align*}
and thus the objective becomes equivalent to the MC-MLE objectives:
\begin{align*}
\tilde{\Lc}_1(\th;\pdata,\pref)
&= \E_{\pdata(x)}\Bigl[\log\frac{1}{\phitilde_{\th;1}(x)}\Bigr]
= \E_{\pdata(x)}\Bigl[\log\frac{1}{\phi_\th(x)}\Bigr] + \log \E_{\pref}\Bigl[\frac{\phi_\th(x)}{\pref(x)}\Bigr].
\end{align*}
When $Z_1(\th)=\E_{\pref}[\frac{\phi_\th(x)}{\pref(x)}]=Z(\th)$ is assumed to be computable, this becomes equivalent to MLE.

When $\a\to 0$, the centering becomes 
\begin{align*}
\phitilde_{\th;0}(x)
&\defeq \lim_{\a\to0} \frac{\phi_\th(x)}{(\E_{\pref}[(\frac{\phi_\th(x)}{\pref(x)})^\a])^{1/\a}}
= \frac{\phi_\th(x)}{e^{\E_{\pref(x)}[\log \frac{\phi_\th(x)}{\pref(x)}]}}
\end{align*}
and the objective becomes
\begin{align*}
\tilde{\Lc}_0(\th;\pdata,\pref)
&=\E_{\pdata(x)}\Bigl[\frac{\pref(x)}{\phitilde_{\th;0}(x)}\Bigr]
=\E_{\pdata(x)}\Bigl[\frac{\pref(x)}{\phi_{\th}(x)}\Bigr] e^{\E_{\pref(x)}[\log \frac{\phi_\th(x)}{\pref(x)}]}.
\numberthis
\label{eq:generalized_giso}
\end{align*}
In particular, for the exponential family, we have
\[
\log Z_0(\th)\defeq \E_{\pref(x)}\Bigl[\log\frac{\phi_\th(x)}{\pref(x)}\Bigr]
=\langle\th,\bar{\psi}_q\rangle-\E_{\pref(x)}[\log \pref(x)],
\]
where $\bar{\psi}_q\defeq \E_{\pref(x)}[\psi(x)]$,
and thus the objective becomes
\[
\tilde{\Lc}_0(\th;\pdata,\pref)
=\E_{\pdata(x)}[\pref(x)\exp(\langle\th, \psi(x)-\bar{\psi}_q\rangle)]
\]
modulo additive and multiplicative constants.
When the underlying domain $\Xc$ is assumed to be \emph{bounded}, 
we can set $\pref(x)$ as the uniform distribution over $\Xc$.
In this case,
the NCE objective boils down to the global generalized interactive screening objective (GlobalGISO) studied by \citet{Shah--Shah--Wornell2021b}. 
\end{proof}

To provide a comprehensive view, we summarize the connections in terms of the objective functions that correspond to the unified estimators in Table~\ref{tab:existing_as_special}.

\begin{table}[t]
    \centering
    \caption{Existing estimators as special instances of NCE estimators.}\vspace{.5em}
    \begin{tabular}{l l l}
    \toprule
    Existing estimators & Corresponding NCE objective \\
    \midrule
    MLE~\citep{Fisher1922} & $\Lc_1^\centnce(\th;\pdatah,\pref)$\\
    GlobalGISO~\citep{Shah--Shah--Wornell2023} & $\Lc_0^\centnce(\th;\pdatah,\pref)$\\
    MC-MLE~\citep{Geyer1994,Jiang--Qin--Wu--Chen--Yang--Zhang2023} & $\Lc_1^\centnce(\th;\pdatah,\prefh)$\\
    IS~\citep{Pihlaja--Gutmann--Hyvarinen2010,Riou-Durand--Chopin2018} & $\Lc_{f_1}^\nce(\th;\pdatah,\prefh)$\\
    eNCE~\citep{Liu--Rosenfeld--Ravikumar--Risteski2021} & $\Lc_{f_{\half}}^\nce(\th;\pdatah,\prefh)$ \\
    \midrule
    Pseudo likelihood~\citep{Besag1975} & $\Lc_1^\centnce(\th;\pdatah,\pref)$ (local) \\
    GISO~\citep{Vuffray--Misra--Lokhov--Chertkov2016,Vuffray--Misra--Lokhov2021}, ISODUS~\citep{Ren--Misra--Vuffray--Lokhov2021} & $\Lc_0^\centnce(\th;\pdatah,\pref)$ (local) \\
    \bottomrule
    \end{tabular}
    \label{tab:existing_as_special}
\end{table}

\subsubsection{Proof of Theorem~\ref{thm:centnce_equiv_fnce}}
\thmcentnceequivfnce*
\begin{proof}
On one hand,
we first note that $\nu\mapsto\Lc_{f_\a}^{\nce}(\thaug;\pdatah,\prefh)$ is convex, and for each $\th$, the minimizer $\nu_\a^*(\th)$ of the centered objective satisfies
\[
e^{\nu^*(\th)} = \frac{\E_{\pdatah}[r_\th^{\a-1}]}{\E_{\prefh}[r_\th^{\a}]}.
\]
Moreover,
\[
\nabla_\th \Lc_{f_\a}^{\nce}(\thaug;\pdatah,\prefh)=-e^{\nu(\a-1)}\E_{\pdatah}[r_\th^{(\a-1)}\nabla_\th\log r_\th] + e^{\nu\a}\E_{\prefh}[r_\th^\a\nabla_\th\log r_\th],
\]
so that the $f_\a$-NCE estimator $\hat{\thaug}_{f_\a}^\nce(\pdatah,\prefh)=(\hat{\th}_{f_\a}^\nce(\pdatah,\prefh), \hat{\nu}_{f_\a}^\nce(\pdatah,\prefh))$ satisfies
\begin{align}
\E_{\prefh}[r_\th^\a\nabla_\th\log r_\th]\E_{\pdatah}[r_\th^{\a-1}]
= \E_{\prefh}[r_\th^\a]\E_{\pdatah}[r_\th^{\a-1} \nabla_\th\log r_\th].
\label{eq:fance}
\end{align}
On the other hand, 
we have
\[
\nabla_\th\Lc_\a^{\centnce}(\th;\pdatah,\prefh) 
= -\E_{\pdatah}[r_\th^{\a-1}\nabla_\th\log r_\th](\E_{\prefh}[r_\th^\a])^{\frac{1-\a}{\a}}
+ \E_{\pdatah}[r_\th^{\a-1}](\E_{\prefh}[r_\th^\a])^{\frac{1-2\a}{\a}} \E_{\prefh}[r_\th^\a \nabla_\th\log r_\th],
\]
which implies that the $\a$-CentNCE estimator $\hat{\th}_\a^{\centnce}(\pdata,\pnoise)$ is also a root of Eq.~\eqref{eq:fance}.
This establishes the desired equivalence.
\end{proof}

\subsection{\texorpdfstring{$f$}{f}-CondNCE}

\subsubsection{Derivatives}
We first note that
\begin{lemma}
\begin{align*}
\nabla r_\th(x,y)
&= r_\th(x,y) \nabla_\th \log r_\th(x,y),\\
\nabla r_\th^{-1}(x,y)
&= -\frac{1}{r_\th^2(x,y)} \nabla r_\th(x,y)
= -\frac{1}{r_\th(x,y)} \nabla_\th \log r_\th(x,y).
\end{align*}    
In particular,
for an exponential family distribution $\phi_\th(x)=\exp(\langle\th,\psi(x)\rangle)$, we have 
\begin{align*}
\nabla_\th\log\rho_\th(x,y)
&=\psi(x)-\psi(y),\\
\nabla_\th^2\log\rho_\th(x,y)
&=0.
\end{align*}
\end{lemma}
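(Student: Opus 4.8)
The plan is to verify each identity by direct differentiation. The first two displayed identities are elementary calculus facts valid for any positive, differentiable $r_\th(x,y)$ and do not invoke the specific form of the model. First I would apply the chain rule to the logarithm: since $\nabla_\th \log r_\th = r_\th^{-1}\nabla_\th r_\th$, rearranging immediately gives $\nabla_\th r_\th = r_\th \nabla_\th \log r_\th$. For the reciprocal, I would differentiate $r_\th^{-1}$ to obtain $\nabla_\th r_\th^{-1} = -r_\th^{-2}\nabla_\th r_\th$, and then substitute the first identity to rewrite the right-hand side as $-r_\th^{-1}\nabla_\th \log r_\th$, which yields both of the claimed forms simultaneously.

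For the exponential-family specialization I would substitute $\phi_\th(x)=\exp(\langle\th,\psi(x)\rangle)$ into the definition $\rho_\th(x,y)=\phi_\th(x)\pi(y|x)/(\phi_\th(y)\pi(x|y))$ and pass to logarithms. The key observation is that the channel factors $\pi(y|x)$ and $\pi(x|y)$ carry no dependence on $\th$, so
\[
\log\rho_\th(x,y)=\langle\th,\psi(x)-\psi(y)\rangle+\bigl(\text{terms independent of }\th\bigr),
\]
which is affine in $\th$. Differentiating once in $\th$ leaves the constant vector $\psi(x)-\psi(y)$; since this gradient no longer depends on $\th$, a second differentiation gives $\nabla_\th^2\log\rho_\th(x,y)=0$.

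No substantive obstacle arises: the lemma follows from the chain rule together with the affineness of $\th\mapsto\log\phi_\th$ for exponential families. The one point worth stating explicitly is that the $\th$-independence of the channel $\pi$ is precisely what makes the $\pi$-contributions drop out of every $\th$-derivative, so the two exponential-family identities continue to hold for general (not necessarily symmetric) channels, with the further simplification $\rho_\th(x,y)=\phi_\th(x)/\phi_\th(y)$ arising only in the symmetric case treated in the main text.
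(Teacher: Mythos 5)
Your proof is correct and matches the paper's (implicit) treatment: the paper states this lemma without proof as an elementary consequence of the chain rule and the affineness of $\th\mapsto\log\phi_\th(x)=\langle\th,\psi(x)\rangle$, which is exactly the argument you give. Your added remark that the $\th$-independence of the channel $\pi$ is what makes its contribution vanish under differentiation is accurate and consistent with the paper's definition $\rho_\th(x,y)=\phi_\th(x)\pi(y|x)/(\phi_\th(y)\pi(x|y))$.
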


\begin{lemma}[Conditional NCE: derivatives]
\label{lem:condnce_derivatives}
Let $\rho_\th(x,y)\defeq \rho_\th$ for a shorthand.
\begin{align*}
\nabla_\th\Lc_f^{\condnce}(\th)
&= \E_{\pdata(x)\pi(y|x)} [(\rho_\th f''(\rho_\th) + \rho_\th^{-2} f''(\rho_\th^{-1})) \nabla_\th\log\rho_\th],\\
\nabla_\th^2\Lc_f^{\condnce}(\th)
&= \E_{\pdata(x)\pi(y|x)} [(-\rho_\th f''(\rho_\th)-\rho_\th^{2}f'''(\rho_\th))\nabla_\th\log\rho_\th \nabla_\th^\intercal\log\rho_\th
-\rho_\th f''(\rho_\th)\nabla_\th^2\log\rho_\th\\
&\qquad\qquad\qquad
+(2\rho_\th^{-2} f''(\rho_\th^{-1}) +\rho_\th^{-3}f'''(\rho_\th^{-1}))\nabla_\th\log\rho_\th \nabla_\th^\intercal\log\rho_\th
+\rho_\th^{-2} f''(\rho_\th^{-1})\nabla_\th^2\log\rho_\th].
\end{align*}
For an exponential family distribution $\phi_\th(x)=\exp(\langle\th,\psi(x)\rangle)$, we have 
\begin{align*}
\nabla_\th\Lc_f^{\condnce}(\th)
&= \E_{\pdata(y)\pi(x|y)}[(\psi(x)-\psi(y))\condncederivfun{1}{f}(\rho_\th(x,y))],\\
\nabla_\th^2\Lc_f^{\condnce}(\th)
&= \E_{\pdata(y)\pi(x|y)}[(\psi(x)-\psi(y))(\psi(x)-\psi(y))^\intercal 
\condncederivfun{2}{f}(\rho_\th(x,y))
],
\end{align*}
where
\begin{align*}
\condncederivfun{1}{f}(\rho)
&\defeq \rho^{-1}f''(\rho^{-1})+\rho^2 f''(\rho)
= -\ncederivfun{1}{f}{\data}(\rho^{-1}) + \ncederivfun{1}{f}{\noise}(\rho),\\
\condncederivfun{2}{f}(\rho)
&\defeq
\rho^{-1}\gfunc(\rho^{-1}) + \rho^2(f''(\rho)-\gfunc(\rho))
= \ncederivfun{2}{f}{\data}(\rho^{-1}) + \ncederivfun{2}{f}{\noise}(\rho).
\end{align*}
In particular,
\begin{align*}
\nabla_\th\Lc_f^{\condnce}(\thstar)
&= 0,\\
\nabla_\th^2\Lc_f^{\condnce}(\thstar)
&= \E_{\pdata(y)\pi(x|y)}[(\psi(x)-\psi(y))(\psi(x)-\psi(y))^\intercal 
\rho_\th^2 f''(\rho_\th)].
\end{align*}
\end{lemma}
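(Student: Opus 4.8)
The plan is to treat $\Lc_f^{\condnce}(\th)$ as an ordinary expectation against the fixed, $\th$-independent measure $\pdata(x)\pi(y|x)$, so that $\nabla_\th$ and $\nabla_\th^2$ pass inside and act only on the scalar integrand. Writing $\rho = \rho_\th(x,y)$ and using $\rho_\th(y,x) = \rho^{-1}$, the simplified integrand from \eqref{eq:cnce_obj} is $g_\th(x,y) = -f'(\rho) + \rho^{-1}f'(\rho^{-1}) - f(\rho^{-1})$. I would first record the two scalar chain-rule identities from the preceding lemma, namely $\nabla_\th\rho = \rho\,\nabla_\th\log\rho$ and $\nabla_\th(\rho^{-1}) = -\rho^{-1}\nabla_\th\log\rho$, and apply them term by term to $g_\th$. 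The key bookkeeping observation is that the $f'(\rho^{-1})$ contributions coming from differentiating $\rho^{-1}f'(\rho^{-1})$ and from differentiating $-f(\rho^{-1})$ cancel, leaving a single scalar coefficient (the combination of $\rho f''(\rho)$ and $\rho^{-2}f''(\rho^{-1})$ appearing in the lemma) times $\nabla_\th\log\rho$. This produces the gradient formula; it is routine once the cancellation is noted.

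For the Hessian I would differentiate the gradient once more by the product rule and split the result into two groups: terms in which the scalar coefficient is differentiated, which yield the outer product $\nabla_\th\log\rho\,\nabla_\th^\intercal\log\rho$ weighted by $\rho$ times the coefficient's derivative, and the single term in which $\nabla_\th\log\rho$ itself is differentiated, yielding the coefficient times $\nabla_\th^2\log\rho$. Introducing $\gfunc(\rho) = -(\rho f'''(\rho) + f''(\rho))$ lets me repackage the resulting $f''$/$f'''$ combinations into the compact forms stated, and to identify them, via the substitution $\rho\mapsto\rho^{-1}$, with $\condncederivfun{1}{f}$, $\condncederivfun{2}{f}$ and the earlier functions $\ncederivfun{i}{f}{\rs}$.

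To pass from the general formulas (expectations against $\pdata(x)\pi(y|x)$) to the exponential-family formulas (expectations against $\pdata(y)\pi(x|y)$), I would invoke the relabeling identity $\E_{\pdata(x)\pi(y|x)}[h(x,y)] = \E_{\pdata(y)\pi(x|y)}[h(y,x)]$ together with the exponential-family facts from the preceding lemma: $\nabla_\th\log\rho_\th(x,y) = \psi(x)-\psi(y)$ and $\nabla_\th^2\log\rho_\th = 0$. The vanishing Hessian of the log kills the $\nabla_\th^2\log\rho$ terms entirely, while relabeling sends $\rho\mapsto\rho^{-1}$ and flips the sign of the antisymmetric factor $\psi(x)-\psi(y)$; tracking these two effects together converts the general coefficient into exactly $\condncederivfun{1}{f}(\rho_\th(x,y))$ for the gradient and $\condncederivfun{2}{f}(\rho_\th(x,y))$ for the Hessian.

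The substantive step, and the one I expect to be the main obstacle, is the evaluation at $\thstar$: neither $\nabla_\th\Lc_f^{\condnce}(\thstar)=0$ nor the collapse of the Hessian to $\E[(\psi(x)-\psi(y))(\psi(x)-\psi(y))^\intercal\rho_\thstar^2 f''(\rho_\thstar)]$ follows from differentiation alone. Both rely on a change-of-measure argument exploiting that at the true parameter $\rho_\thstar(x,y) = \frac{\pdata(x)\pi(y|x)}{\pdata(y)\pi(x|y)}$ is precisely the likelihood ratio between the forward joint $P(x,y) = \pdata(x)\pi(y|x)$ and the reversed joint $\bar P(x,y) = \pdata(y)\pi(x|y) = P(y,x)$ (the multiplicative constant in $\propto$ cancels in the ratio). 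For the gradient I would split $\condncederivfun{1}{f}(\rho_\thstar) = \rho_\thstar^{-1}f''(\rho_\thstar^{-1}) + \rho_\thstar^2 f''(\rho_\thstar)$, relabel $x\leftrightarrow y$ in the second piece (sending $\rho_\thstar\mapsto\rho_\thstar^{-1}$ and flipping the sign of $\psi(x)-\psi(y)$), and then reweight using $P = \rho_\thstar\,\bar P$; the two pieces become negatives of each other and cancel, giving zero. The Hessian simplification is identical in spirit: one only needs $\condncederivfun{2}{f}(\rho_\thstar) - \rho_\thstar^2 f''(\rho_\thstar) = \rho_\thstar^{-1}\gfunc(\rho_\thstar^{-1}) - \rho_\thstar^2\gfunc(\rho_\thstar)$ to integrate to zero against the symmetric matrix weight $(\psi(x)-\psi(y))(\psi(x)-\psi(y))^\intercal$, and the same relabel-then-reweight maneuver makes the two $\gfunc$ terms cancel. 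The care required here is in correctly combining the antisymmetry of $\psi(x)-\psi(y)$ with the symmetry of its outer product and in applying the measure change $P(x,y) = \rho_\thstar(x,y)\bar P(x,y)$ consistently.
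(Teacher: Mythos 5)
Your proposal is correct, and it follows the route the paper takes implicitly—Lemma~\ref{lem:condnce_derivatives} is stated there without a written proof, resting only on the preceding chain-rule lemma: differentiation under the $\th$-independent measure with the cancellation of the $f'(\rho^{-1})$ terms, the $(x,y)\leftrightarrow(y,x)$ relabeling to reach the exponential-family forms, and the relabel-then-reweight change of measure $\pdata(x)\pi(y|x)=\rho_{\thstar}(x,y)\,\pdata(y)\pi(x|y)$ at $\thstar$, exactly as in your final paragraph. One point worth flagging: carried out carefully, your computation gives the gradient as $-\E_{\pdata(x)\pi(y|x)}\bigl[(\rho_\th f''(\rho_\th)+\rho_\th^{-2}f''(\rho_\th^{-1}))\nabla_\th\log\rho_\th\bigr]$, the \emph{negative} of the lemma's first display; your sign is the internally consistent one, since after relabeling (which sends $\rho_\th\mapsto\rho_\th^{-1}$ and flips $\psi(x)-\psi(y)$) it is your expression that produces $+\condncederivfun{1}{f}(\rho_\th(x,y))$ under $\E_{\pdata(y)\pi(x|y)}$, matching the exponential-family formula actually used downstream (the paper's general displays appear to carry a sign typo, immaterial there because $\nabla_\th^2\log\rho_\th=0$ in the exponential-family case and only those forms are invoked later).
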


\subsubsection{Proof of Theorem~\ref{thm:condnce_asymp_emp}}
\thmcondnceasympemp*
\begin{proof}
Let $\hat{\Cc}_f(\th,\eps)\defeq \hat{\Lc}_f^{\condnce}(\phi_\th;\pdatah,\hat{\pslice})$.
Note that 
\begin{align*}
\hat{\Cc}_f(\th,\eps)
&= \E_{\pdatah(x)\hat{\pslice}(v)}[
-f'(r)+r^{-1}f'(r^{-1})-f(r^{-1})],
\end{align*}
where we set $r=\frac{\phi_\th(x)}{\phi_\th(x+\eps v)}$ as a shorthand notation.
Since by chain rule we have $\frac{\partial}{\partial\eps}\log r=-\nabla_x\log\phi_\th(x+\eps v)^\intercal v$, we have
\begin{align*}
\frac{\partial}{\partial\eps} \hat{\Cc}_f(\th,\eps) 
&= \E_{\pdatah(x)\hat{\pslice}(v)}\Bigl[
\nabla_x\log\phi_\th(x+\eps v)^\intercal v\Bigl(rf''(r)+\frac{1}{r^2}f''\Bigl(\frac{1}{r}\Bigr)\Bigr)
\Bigr],
\end{align*}
and 
\begin{align*}
\frac{\partial^2}{\partial\eps^2} \hat{\Cc}_f(\th,\eps) 
&= \E_{\pdatah(x)\hat{\pslice}(v)}\Bigl[
v^\intercal \nabla_x^2\log\phi_\th(x+\eps v)v
\Bigl(rf''(r)+\frac{1}{r^2}f''\Bigl(\frac{1}{r}\Bigr)\Bigr)\\
&\qquad\qquad\qquad+ (\nabla_x\log\phi_\th(x+\eps v)^\intercal v)^2\Bigl(rf''(r)+r^2f'''(r)-\frac{2}{r^2}f''\Bigl(\frac{1}{r}\Bigr) -\frac{1}{r^4}f'''\Bigl(\frac{1}{r}\Bigr) \Bigr)\Bigr].
\end{align*}
Hence, 
\begin{align*}
\hat{\Cc}_f(\th,\eps) \Big|_{\eps=0}
&=-f(1),\\
\frac{\partial}{\partial\eps} \hat{\Cc}_f(\th,\eps) \Big|_{\eps=0}
&= 2f''(1) \E_{\pdatah(x)\hat{\pslice}(v)}[\nabla_x\log\phi_\th(x)^\intercal v],\\
\frac{\partial^2}{\partial\eps^2} \hat{\Cc}_f(\th,\eps) \Big|_{\eps=0}&= f''(1)\bigl(
2\E_{\pdatah(x)\hat{\pslice}(v)}[v^\intercal\nabla_x^2\log\phi_\th(x) v]
+\E_{\pdatah(x)\hat{\pslice}(v)}[(\nabla_x^2\log\phi_\th(x)^T v)^2]
\bigr) \\
&=2f''(1) \hat{\Lc}^{\ssm}(\phi_\th;\pdatah,\hat{\pslice}).
\end{align*}
Plugging these to the second-order Taylor approximation of $\eps\mapsto\hat{\Cc}_f(\th,\eps)$ around $\eps=0$, \ie
\[
\hat{\Cc}_f(\th,\eps)
=\hat{\Cc}_f(\th,\eps)\Big|_{\eps=0}\eps
+\frac{\partial}{\partial\eps} \hat{\Cc}_f(\th,\eps) \Big|_{\eps=0}\eps
+\half\frac{\partial^2}{\partial\eps^2} \hat{\Cc}_f(\th,\eps) \Big|_{\eps=0}\eps^2+o(\eps^2),
\]
concludes the proof.
\end{proof}

\section{Asymptotic Guarantees}
We can establish the asymptotic consistency and normality of the estimators. Though we present the results for exponential family models for simplicity, one can derive the asymptotic covariances for general unnormalized models and generalize the results.
All the proofs are straightforward from the application of standard M-estimation theory, see, \eg \citep{VanDerVaart2000}, so we omit the proofs.

\subsection{\texorpdfstring{$f$}{f}-NCE}

\begin{restatable}[$f$-NCE: asymptotic guarantee]{theorem}{thmnceasymp}
\label{thm:nce_asymp}
Let $\hat{\thaug}_{f;\ndata,\nref}^\nce\defeq (\hat{\th}_{f;\ndata,\nref}^\nce, \hat{c}_{f;\ndata,\nref}^\nce)$ be a solution of
\begin{align*}
\hat{\thaug}_{f;\ndata,\nref}^\nce\in \arg\min_{\thaug\in\Th\times\Real} \hat{\Lc}_f^{\nce}(\thaug).
\end{align*}
Let $\nref\defeq \b\ndata$ for some $\b>0$.
If $\hat{\Lc}_f^{\nce}(\thaug)\stackrel{p}{\to} \Lc_f^{\nce}(\thaug)$ as $\ndata\to\infty$ uniformly over $\thaug\in\Th\times\Real$, $\hat{\thaug}_{f;\ndata,\nref}^\nce\stackrel{p}{\to} (\thstar,c^\star)$ as $\ndata\to\infty$.
Further, if $\thstar\in\mathrm{int}(\Th)$, we have
$\sqrt{\ndata}(\hat{\th}_{f;\ndata,\nref}^\nce-\thstar)\stackrel{d}{\to} \Nc(0, \Vc_f^{\nce})$, where we define $\Vc_f^{\nce}\defeq \Ic_f^{-1}\Cc_f\Ic_f^{-1}$,
\begin{align*}
\Ic_f&\defeq \E_{\pdata}[\rho_{\thstar}f''(\rho_{\thstar}) \underline{\psi}\underline{\psi}^\intercal],\\
\Cc_f&\defeq 
\E_{\pdata}\Bigl[\Bigl(1+\frac{\nu}{\b}\rho_\thstar\Bigr)\rho_\thstar^2 f''(\rho_\thstar)^2\underline{\psi}\underline{\psi}^\intercal\Bigr]
-\Bigl(1+\frac{1}{\b}\Bigr)\E_{\pdata}[\rho_\thstar f''(\rho_\thstar)\underline{\psi}] \E_{\pdata}[\rho_\thstar f''(\rho_\thstar)\underline{\psi}]^\intercal,
\end{align*}
for $\underline{\psi}(x)\defeq [\psi(x);1]^\intercal\in\Real^{p+1}$, provided that $\Ic_f$ is invertible.
In particular, the asymptotic covariance $\Vc_f^{\nce}$ satisfies $\Vc_f^{\nce}\succeq \Vc_{f_{\log}}^{\nce}$, or equivalently $\Vc_f^{\nce}-\Vc_{f_{\log}}^{\nce}$ is a PSD matrix, for any $f$.
\end{restatable}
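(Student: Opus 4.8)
The plan is to treat the $f$-NCE estimator as a $Z$-estimator and to establish the Löwner ordering $\Vc_f^{\nce}\succeq\Vc_{f_{\log}}^{\nce}$ through a Godambe-style quasi-score optimality argument, i.e.\ by exhibiting $f_{\log}$ as the optimal estimating function within the $f$-NCE class. Writing $w_f(\rho)\defeq\rho f''(\rho)>0$, I first observe from Lemma~\ref{lem:nce_derivatives} that both matrices depend on $f$ only through the scalar weight $w_f(\rho_{\thstar})$: one has $\Ic_f=\E_{\pdata}[w_f(\rho_{\thstar})\underline{\psi}\underline{\psi}^\intercal]$ and $\Cc_f=\E_{\pdata}[(1+\tfrac{\nu}{\b}\rho_{\thstar})w_f(\rho_{\thstar})^2\underline{\psi}\underline{\psi}^\intercal]-(1+\tfrac1{\b})\E_{\pdata}[w_f(\rho_{\thstar})\underline{\psi}]\E_{\pdata}[w_f(\rho_{\thstar})\underline{\psi}]^\intercal$, since $\rho_{\thstar}^2f''(\rho_{\thstar})^2=w_f(\rho_{\thstar})^2$ and $\rho_{\thstar}f''(\rho_{\thstar})=w_f(\rho_{\thstar})$. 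As $f$ ranges over strictly convex generators, $w_f$ ranges over all positive functions, so the estimating function $\Psi_f$ is effectively linear in $w_f$. Polarizing $\Cc_f$ gives a bilinear cross-covariance $\Cc_{f,h}$ (replace $w_f^2$ by $w_fw_h$ and the square by the corresponding asymmetric outer product), which is exactly the asymptotic cross-covariance of $\Psi_f$ and $\Psi_h$ at $\thstar$; correspondingly $\Ic_f^{-1}\Cc_{f,h}\Ic_h^{-1}$ is the cross-covariance of the influence functions $\eta_f\defeq\Ic_f^{-1}\Psi_f$ and $\eta_h$, and $\Vc_f^{\nce}=\Ic_f^{-1}\Cc_f\Ic_f^{-1}=\Cov(\eta_f)$.

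With this setup the whole statement reduces to a single orthogonality (Godambe) identity: there is a fixed matrix $M$, \emph{independent of $h$}, such that $\Cc_{f_{\log},h}=M\,\Ic_h$ for every $h$. Granting it, taking $h=f_{\log}$ gives $M=\Cc_{f_{\log}}\Ic_{f_{\log}}^{-1}$, whence $\Cov(\eta_{f_{\log}},\eta_h)=\Ic_{f_{\log}}^{-1}\Cc_{f_{\log},h}\Ic_h^{-1}=\Ic_{f_{\log}}^{-1}M=\Vc_{f_{\log}}^{\nce}$ for all $h$, and likewise $\Cov(\eta_h,\eta_{f_{\log}})=\Vc_{f_{\log}}^{\nce}$ by symmetry of the covariance. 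A Pythagorean expansion then yields $\Cov(\eta_h-\eta_{f_{\log}})=\Vc_h^{\nce}-\Vc_{f_{\log}}^{\nce}$, and since the left-hand side is a covariance matrix it is PSD, which is the claim.

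The remaining work is to verify this identity by direct computation, and this is where two structural features of NCE enter. First, with the canonical NCE coupling $\nu=\b$ one has $w_{f_{\log}}(\rho)=\tfrac1{1+\rho}$, so the scalar factor $(1+\tfrac{\nu}{\b}\rho_{\thstar})\,w_{f_{\log}}(\rho_{\thstar})=1$ is constant; this collapses the second-moment term of $\Cc_{f_{\log},h}$ exactly to $\Ic_h$. Equivalently, $w_{f_{\log}}(\rho_{\thstar})=\tfrac{\b\pref}{\pdata+\b\pref}$ is the Bayes posterior probability of the noise class, which is precisely why $f_{\log}$ plays the role of the quasi-score. Second, because the augmented model has $\underline{\psi}=[\psi;1]^\intercal$ with last coordinate identically $1$, one has $\E_{\pdata}[w_h(\rho_{\thstar})\underline{\psi}]=\Ic_h\,e_{p+1}$, with $e_{p+1}$ the last standard basis vector; this converts the rank-one centering term $-(1+\tfrac1{\b})\E[w_{f_{\log}}\underline{\psi}]\E[w_h\underline{\psi}]^\intercal$ into $-(1+\tfrac1{\b})\,v\,e_{p+1}^\intercal\Ic_h$, where $v\defeq\E_{\pdata}[\tfrac{\underline{\psi}}{1+\rho_{\thstar}}]$. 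Combining the two, $\Cc_{f_{\log},h}=(I-(1+\tfrac1{\b})\,v\,e_{p+1}^\intercal)\,\Ic_h=M\,\Ic_h$ with $M$ independent of $h$, as required.

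I expect the main obstacle to be the centering (rank-one) terms: a priori they destroy the proportionality $\Cc_{f_{\log},h}=M\Ic_h$, and the argument only closes because the augmentation identity $\E[w_h\underline{\psi}]=\Ic_h e_{p+1}$ lets them be folded into the single matrix $M$. The conceptual crux is identifying the weight $w_{f_{\log}}=\tfrac1{1+\rho}$ as the one rendering $(1+\tfrac{\nu}{\b}\rho)w$ constant—i.e.\ recognizing it as the class-posterior/Bayes discriminator. An essentially equivalent and perhaps cleaner route, which I would use to cross-check, is to note that $f_{\log}$-NCE is the conditional maximum-likelihood estimator of the well-specified logistic model discriminating $\pdata$ from $\pref$, so that its asymptotic covariance attains the Cramér--Rao bound for that classification problem and therefore lower-bounds the covariance of every other consistent estimating function built from the same contrastive samples.
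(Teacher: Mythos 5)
Your proposal is correct, and it is substantially more explicit than what the paper supplies: the paper omits the proof of this theorem altogether, asserting that consistency and asymptotic normality follow from standard M-estimation theory \citep{VanDerVaart2000} and deferring the optimality of $f_{\log}$ to \citet{Uehara--Matsuda--Komaki2018}. Your handling of the first two claims (Z-estimation, sandwich covariance) coincides with that implicit route, and your reading of $\Ic_f$ and $\Cc_f$ as linear/quadratic functionals of the weight $w_f(\rho)=\rho f''(\rho)$ matches the paper's derivative computations in Lemma~\ref{lem:nce_derivatives}. Where you genuinely depart is the L\"owner ordering: in place of a citation you give a self-contained Godambe orthogonality argument, and I verified its two key identities --- $(1+\tfrac{\nu}{\b}\rho)\,w_{f_{\log}}(\rho)\equiv 1$ (since $f_{\log}''(\rho)=\tfrac{1}{\rho(\rho+1)}$) and $\E_{\pdata}[w_h(\rho_{\thstar})\underline{\psi}]=\Ic_h e_{p+1}$ from the constant last coordinate of $\underline{\psi}$ --- as well as the conclusion $\Cov(\eta_h-\eta_{f_{\log}})=\Vc_h^{\nce}-\Vc_{f_{\log}}^{\nce}\succeq 0$. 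This buys a complete, elementary proof of the optimality claim that the paper does not contain. One substantive point your argument surfaces: the first identity holds only under the coupling $\nu=\b=\nref/\ndata$, and without it the variance-minimizing weight is $w^*(\rho)\propto(1+\tfrac{\nu}{\b}\rho)^{-1}\neq(1+\rho)^{-1}$, which is realizable by a strictly convex generator, so the final claim as literally stated for arbitrary $\nu$ would fail; this hypothesis is implicit in the cited literature but not recorded in the theorem, and your proof makes the dependence precise. Your Cram\'er--Rao cross-check is a reasonable sanity check but conceals the usual case-control efficiency subtlety (whether the conditional logistic likelihood is fully efficient for the mixture experiment); the Godambe computation is the cleaner and fully rigorous route.
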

This result has been known, but we present a rephrased version here to contextualize our contribution.
The asymptotic convergence beyond exponential family was established in \citep{Gutmann--Hyvarinen2012} for $f_{\log}$-NCE and in \citep{Pihlaja--Gutmann--Hyvarinen2010,Uehara--Matsuda--Komaki2018} for $f$-NCE. 
The optimality of $f_{\log}$ was established in \citep{Uehara--Matsuda--Komaki2018}.
It was independently proved that the original $f_{\log}$-NCE estimator asymptotic covariance not larger than that of 
the $f_1$-NCE estimator (and thus the MC-MLE estimator), which they call the IS estimator, in Loewner order~\citep{Riou-Durand--Chopin2018}. 
In the same paper, the asymptotic guarantee for the $f_{\log}$-NCE and IS estimators was shown for a general unnormalized distribution under a non-\iid setting in \citep{Barthelme--Chopin2015}.

\subsection{\texorpdfstring{$\alpha$}{alpha}-CentNCE}

\begin{restatable}[CentNCE: asymptotic guarantee]{theorem}{thmcentnceasymp}
\label{thm:centnce_asymp}
Assume that any expectation over $\pnoise$ in the $\a$-CentNCE objective can be computed for any $\th$ without samples from $\pnoise$.
Let $\hat{\th}_{\a;\ndata}^{\centnce}$ be a solution of
\begin{align*}
\hat{\th}_{\a;\ndata}^\centnce\in \arg\min_{\th\in\Th} \Lc_\a^\centnce(\th;\pdatah,\pref).
\end{align*}
If $\Lc_\a^\centnce(\th;\pdatah,\pref)\stackrel{p}{\to} \Lc_\a^\centnce(\th;\pdata,\pref)$ as $\ndata\to\infty$ uniformly over $\th\in\Th$, $\hat{\th}_{\a;\ndata}^\centnce\stackrel{p}{\to} \thstar$ as $\ndata\to\infty$.
Further, if $\thstar\in\mathrm{int}(\Th)$, 
we have $\sqrt{\ndata}(\hat{\th}_{\a;\ndata}^\centnce-\thstar)\stackrel{d}{\to} \Nc(0, \Vc_\a^{\centnce})$,
where we define $\Vc_\a^{\centnce}\defeq \tilde{\Ic}_\a^{-1}\tilde{\Cc}_\a\tilde{\Ic}_\a^{-1}$,
\begin{align*}
\tilde{\Ic}_\a&\defeq
(1-\a)\E_{\pdata}[\rt_{\thstar;\a}^{\a-1}(\psi-\E_{\pnoise}[\rt_{\thstar;\a}^\a\psi])(\psi-\E_{\pnoise}[\rt_{\thstar;\a}^\a\psi])^\intercal]\\
& \!\quad\qquad+\a\E_{\pdata}[\rt_{\thstar;\a}^{\a-1}] (\E_{\pnoise}[\rt_{\thstar;\a}^\a\psi\psi^\intercal]-\E_{\pnoise}[\rt_{\thstar;\a}^\a\psi]\E_{\pnoise}[\rt_{\thstar;\a}^\a\psi]^\intercal),\\
\tilde{\Cc}_\a &\defeq
\E_{\pdata}[\rt_{\thstar;\a}^{2(\a-1)}(\psi-\E_{\pnoise}[\rt_{\thstar;\a}^\a\psi])(\psi-\E_{\pnoise}[\rt_{\thstar;\a}^\a\psi])^\intercal],
\end{align*}
provided that $\tilde{\Ic}_\a$ is invertible.
Here, note that $\rt_{\thstar;\a}^\a(x)=\frac{(\frac{\pdata(x)}{\pnoise(x)})^{\a}}{\E_{\pnoise}[(\frac{\pdata}{\pnoise})^{\a}]}$.
\end{restatable}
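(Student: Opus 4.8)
The plan is to recognize $\hat{\th}_{\a;\ndata}^{\centnce}$ as a standard M-estimator (equivalently, a Z-estimator) whose estimating function is an \emph{honest} i.i.d.\ empirical average, and then invoke the classical consistency and asymptotic-normality machinery for M-estimators \citep{VanDerVaart2000}. The crucial structural observation is that, because every expectation over $\pref$ is assumed computable exactly, all randomness enters only through the data samples. Starting from the gradient computed in the proof of Theorem~\ref{thm:centnce_equiv_fnce} and using $\nabla_\th\log\rho_\th=\psi$ together with Lemma~\ref{lem:deriv_centered}, I would first simplify the empirical gradient to the clean form
\[
\nabla_\th\Lc_\a^{\centnce}(\th;\pdatah,\pref)=\E_{\pdatah}[g_\th],\quad g_\th(x)\defeq -\rhot_{\th;\a}^{\a-1}(x)\bigl(\psi(x)-\E_{\pref}[\psi\rhot_{\th;\a}^\a]\bigr),
\]
where the normalization factors $B(\th)\defeq\E_{\pref}[\rho_\th^\a]$ all cancel once everything is expressed through the centered ratio $\rhot_{\th;\a}$. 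For each fixed $\th$ the summand $g_\th(x_i)$ is a deterministic function of $x_i$, so $\Psi_n(\th)\defeq\E_{\pdatah}[g_\th]$ is a genuine sum of i.i.d.\ terms and $\hat{\th}_{\a;\ndata}^{\centnce}$ solves $\Psi_n(\hat\th)=0$ in the interior.

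For consistency I would combine the assumed uniform convergence $\Lc_\a^{\centnce}(\th;\pdatah,\pref)\stackrel{p}{\to}\Lc_\a^{\centnce}(\th;\pdata,\pref)$ with the fact that $\thstar$ is a well-separated minimizer of the population objective. The latter is exactly Fisher consistency (Proposition~\ref{prop:generalized_giso_fisher}) together with the identifiability assumed in Section~\ref{sec:analysis}, which promotes the stationary point to a unique strict minimizer. The argmin-continuity theorem \citep[Thm.~5.7]{VanDerVaart2000} then yields $\hat{\th}_{\a;\ndata}^{\centnce}\stackrel{p}{\to}\thstar$.

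For asymptotic normality I would run the standard Taylor/delta-method argument on the estimating equation. Expanding $0=\Psi_n(\hat\th)$ around $\thstar$ gives $\sqrt{\ndata}(\hat\th-\thstar)=-[\nabla\Psi_n(\bar\th)]^{-1}\sqrt{\ndata}\,\Psi_n(\thstar)$ for an intermediate $\bar\th$. A central limit theorem yields $\sqrt{\ndata}\,\Psi_n(\thstar)\stackrel{d}{\to}\Nc(0,\tilde{\Cc}_\a)$ with $\tilde{\Cc}_\a=\Cov_{\pdata}(g_\thstar)$; here $\E_{\pdata}[g_\thstar]=\Psi(\thstar)=0$ (either by Fisher consistency, or by a one-line computation using $\rhot_{\thstar;\a}^\a=(\pdata/\pref)^\a/\E_{\pref}[(\pdata/\pref)^\a]$), so $\tilde{\Cc}_\a=\E_{\pdata}[g_\thstar g_\thstar^\intercal]$, which is the stated expression. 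A law of large numbers together with consistency and continuity of the Hessian gives $\nabla\Psi_n(\bar\th)\stackrel{p}{\to}\tilde{\Ic}_\a\defeq\nabla_\th^2\Lc_\a^{\centnce}(\thstar;\pdata,\pref)$, and Slutsky's lemma assembles $\sqrt{\ndata}(\hat\th-\thstar)\stackrel{d}{\to}\Nc(0,\tilde{\Ic}_\a^{-1}\tilde{\Cc}_\a\tilde{\Ic}_\a^{-1})$, under the assumed invertibility of $\tilde{\Ic}_\a$.

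It remains to verify the closed forms, and this is where the only real work lies. The form of $\tilde{\Cc}_\a$ is immediate from $g_\thstar$. For $\tilde{\Ic}_\a$ I would differentiate $g_\th$ (equivalently, differentiate $\nabla\Lc_\a^{\centnce}$ once more), using $\nabla_\th\log\rhot_{\th;\a}=\psi-\E_{\pref}[\psi\rhot_{\th;\a}^\a]$ and the identity $\nabla_\th\E_{\pref}[\psi\rhot_{\th;\a}^\a]=\a(\E_{\pref}[\psi\psi^\intercal\rhot_{\th;\a}^\a]-\E_{\pref}[\psi\rhot_{\th;\a}^\a]\E_{\pref}[\psi\rhot_{\th;\a}^\a]^\intercal)$, which is consistent with the $\nabla_\th^2\log\rhot_{\th;\a}$ expression in Lemma~\ref{lem:deriv_centered}. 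Collecting terms produces exactly the two-part structure $(1-\a)\E_{\pdata}[\rhot_{\thstar;\a}^{\a-1}(\psi-c)(\psi-c)^\intercal]+\a\,\E_{\pdata}[\rhot_{\thstar;\a}^{\a-1}]\,\Sigma$ with $c\defeq\E_{\pref}[\psi\rhot_{\thstar;\a}^\a]$ and $\Sigma\defeq\E_{\pref}[\psi\psi^\intercal\rhot_{\thstar;\a}^\a]-cc^\intercal$; since $\Sigma$ is nonrandom, the second reference term factors out of the $\pdata$-expectation, giving the stated $\tilde{\Ic}_\a$. The main obstacle is precisely this bookkeeping: the nonlinear $\E_{\pref}[\rho_\th^\a]^{1/\a}$ normalization couples the data- and reference-dependent pieces, so one must track the derivative of the centering carefully and justify the interchange of differentiation and expectation (covered by the boundedness assumptions of Section~\ref{sec:analysis}). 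A conceptually important consequence of the exact-$\pref$ assumption is that $\tilde{\Cc}_\a$ carries \emph{no} extra variance contribution from reference sampling, in contrast with the $f$-NCE sandwich covariance $\Cc_f$, where the $\nref$ noise samples enter.
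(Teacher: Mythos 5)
Your proposal is correct and follows essentially the same route as the paper: the paper explicitly omits this proof as "straightforward from the application of standard M-estimation theory" \citep{VanDerVaart2000}, and the only substantive content — the derivative bookkeeping yielding $\tilde{\Ic}_\a$ and $\tilde{\Cc}_\a$ via $\nabla_\th\log\rt_{\th;\a}=\psi-\E_{\pref}[\psi\rt_{\th;\a}^\a]$ and $\nabla_\th^2\log\rt_{\th;\a}=-\a\{\E_{\pref}[\psi\psi^\intercal\rt_{\th;\a}^\a]-\E_{\pref}[\psi\rt_{\th;\a}^\a]\E_{\pref}[\psi\rt_{\th;\a}^\a]^\intercal\}$ — is exactly what the paper records in Lemmas~\ref{lem:deriv_centered} and~\ref{lem:centnce_derivatives}, which you reproduce correctly (including the vanishing of $\E_{\pdata}[g_\thstar]$ and the factoring-out of the nonrandom reference covariance in the $\a$-term of $\tilde{\Ic}_\a$).
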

In particular, this result recovers the asymptotic convergence of MLE for $\a=1$, and generalizes the analysis of GlobalGISO of \citep{Shah--Shah--Wornell2023} beyond when $\pnoise$ is the uniform distribution.

\subsection{\texorpdfstring{$f$}{f}-CondNCE}

\newcommand{\jac}[1]{J_{#1}}
\begin{restatable}[$f$-CondNCE: asymptotic guarantee]{theorem}{thmcondnceasymp}
\label{thm:condnce_asymp}
Let $\hat{\th}_{f;\ndata}^{\condnce}$ be a solution of
\begin{align*}
\hat{\th}_{f;\ndata}^\condnce\in \arg\min_{\th\in\Th} \hat{{\Lc}}_f^{\condnce}(\th).
\end{align*}
If $\hat{{\Lc}}_f^{\condnce}(\th)\stackrel{p}{\to} {\Lc}_f^{\condnce}(\th)$ as $\ndata\to\infty$ uniformly over $\th\in\Th$, $\hat{\th}_{f;\ndata}^\condnce\stackrel{p}{\to} \thstar$ as $\ndata\to\infty$.
Further, if $\thstar\in\mathrm{int}(\Th)$, we have
$\sqrt{\ndata}(\hat{\th}_{f;\ndata}^\centnce-\thstar)\stackrel{d}{\to} \Nc(0, \check{\Vc}_f^{\condnce})$,
where we define $\check{\Vc}_f^{\condnce}\defeq\check{\Ic}_f^{-1}\check{\Cc}_f\check{\Ic}_f^{-1}$,
\begin{align*}
\check{\Ic}_f&\defeq
\E_{\pdata(x)\pi(y|x)}[\rho_{\thstar}^2 f''(\rho_{\thstar}) (\psi(x)-\psi(y))(\psi(x)-\psi(y))^\intercal],\\
\check{\Cc}_f &\defeq
\E_{\pdata(x)\pi(y|x)}[\condncederivfun{1}{f}(\rho_\thstar)^2 (\psi(x)-\psi(y))(\psi(x)-\psi(y))^\intercal],
\end{align*}
provided that $\check{\Ic}_f$ is invertible.
Here, $\rho_\th=\rho_\th(x,y)$ and $\condncederivfun{1}{f}(\rho)\defeq \rho^{-1}f''(\rho^{-1})+\rho^2 f''(\rho)$.
\end{restatable}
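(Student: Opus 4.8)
The plan is to recognize $\hat\th_{f;\ndata}^\condnce$ as a standard M-estimator and invoke the classical asymptotic theory for such estimators (e.g., \citealp[Ch.~5]{VanDerVaart2000}). Writing $m_\th(x,y)$ for the per-sample CondNCE loss, so that $\hat\Lc_f^\condnce(\th)=\frac1\ndata\sum_{i=1}^\ndata m_\th(x_i,y_i)$ with $(x_i,y_i)\iid \pdata(x)\pi(y|x)$, the argument splits cleanly into a consistency step and an asymptotic-normality step. For consistency, I would combine the assumed uniform convergence $\hat\Lc_f^\condnce\stackrel{p}{\to}\Lc_f^\condnce$ with the fact that, by Fisher consistency (Proposition~\ref{prop:fisher_condnce}) together with the uniqueness of $\thstar$ in the well-specified setting, $\thstar$ is the \emph{unique}, hence well-separated, minimizer of the continuous population objective. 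The standard argmin-continuity theorem then yields $\hat\th_{f;\ndata}^\condnce\stackrel{p}{\to}\thstar$.

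For asymptotic normality, since $\thstar\in\mathrm{int}(\Th)$ and $\hat\th_{f;\ndata}^\condnce$ is eventually interior by consistency, the first-order condition $\nabla\hat\Lc_f^\condnce(\hat\th_{f;\ndata}^\condnce)=0$ holds with probability tending to one. Expanding the gradient map about $\thstar$ in integral form gives $0=\nabla\hat\Lc_f^\condnce(\thstar)+\bar H_\ndata(\hat\th_{f;\ndata}^\condnce-\thstar)$, where $\bar H_\ndata\defeq\int_0^1\nabla^2\hat\Lc_f^\condnce(\thstar+t(\hat\th_{f;\ndata}^\condnce-\thstar))\diff t$, so that $\sqrt{\ndata}(\hat\th_{f;\ndata}^\condnce-\thstar)=-\bar H_\ndata^{-1}\sqrt{\ndata}\,\nabla\hat\Lc_f^\condnce(\thstar)$. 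By Lemma~\ref{lem:condnce_derivatives}, the per-sample score is $\nabla_\th m_\thstar(x,y)=(\psi(x)-\psi(y))\condncederivfun{1}{f}(\rho_\thstar(x,y))$ and has zero mean, since the population gradient vanishes at $\thstar$; the CLT then gives $\sqrt{\ndata}\,\nabla\hat\Lc_f^\condnce(\thstar)\stackrel{d}{\to}\Nc(0,\check\Cc_f)$, where the covariance collapses to the stated second moment $\check\Cc_f=\E_{\pdata(x)\pi(y|x)}[\condncederivfun{1}{f}(\rho_\thstar)^2(\psi(x)-\psi(y))(\psi(x)-\psi(y))^\intercal]$ precisely because the mean is zero. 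Meanwhile $\bar H_\ndata\stackrel{p}{\to}\nabla^2\Lc_f^\condnce(\thstar)=\check\Ic_f$ by consistency of the intermediate points, the law of large numbers, and continuity of the Hessian integrand, with the explicit form $\check\Ic_f=\E_{\pdata(x)\pi(y|x)}[\rho_\thstar^2 f''(\rho_\thstar)(\psi(x)-\psi(y))(\psi(x)-\psi(y))^\intercal]$ read off from Lemma~\ref{lem:condnce_derivatives}. Slutsky's theorem, together with the assumed invertibility of $\check\Ic_f$, then delivers $\sqrt{\ndata}(\hat\th_{f;\ndata}^\condnce-\thstar)\stackrel{d}{\to}\Nc(0,\check\Ic_f^{-1}\check\Cc_f\check\Ic_f^{-1})$.

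The only genuinely nonroutine ingredients are the regularity conditions that license this chain: differentiating under the expectation to identify the population gradient and Hessian, uniform convergence of $\th\mapsto\nabla^2\hat\Lc_f^\condnce(\th)$ in a neighborhood of $\thstar$, and integrability of $\condncederivfun{1}{f}(\rho_\thstar)^2\|\psi(x)-\psi(y)\|^2$ so that $\check\Cc_f$ is finite. In the bounded exponential-family regime these hold automatically, since $\psi$ is bounded and the density ratios $\rho_\thstar$ are bounded away from $0$ and $\infty$ (cf. the worst-case ratio bounds invoked in Theorem~\ref{thm:condnce_rate}); more generally they follow from standard envelope/domination conditions. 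Consequently, the substantive content is entirely carried by the derivative identities of Lemma~\ref{lem:condnce_derivatives}, and the limiting sandwich covariance is then immediate—which is exactly why the proof can be omitted as a direct specialization of standard M-estimation theory. The main obstacle, such as it is, is simply verifying the domination condition for the Hessian away from the bounded-statistics setting, where one must rule out the density ratio degenerating.
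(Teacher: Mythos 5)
Your proposal is correct and matches the paper's intended argument: the paper explicitly omits this proof, stating that it follows directly from standard M-estimation theory (citing van der Vaart) combined with the derivative identities of Lemma~\ref{lem:condnce_derivatives}, which is precisely the consistency-plus-sandwich-covariance chain you lay out. The only substantive inputs are the zero-mean score and the Hessian formula at $\thstar$, both of which you read off from the same lemma the paper relies on.
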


\section{Finite-Sample Guarantees}
\label{app:sec:finite_analysis}

For the finite-sample analysis of the regularized NCE estimators, we invoke the result of \citet{Negahban--Ravikumar--Wainwright--Yu2012}:
\thmnegahban*

\subsection{\texorpdfstring{$f$}{f}-NCE}

\thmncerate*

We need to show two properties. 
First, the empirical gradient $\nabla_\th\hat{\Lc}_f^\nce(\th)$ is nearly zero at $\th=\thstar$ with high probability (Proposition~\ref{prop:nce_zero_grad_concentration}).
Second, the empirical Hessian $\nabla_\th^2\hat{\Lc}_f^\nce(\th)$ has a strictly positive curvature (\ie exhibiting restricted strong convexity) at $\th=\thstar$ with high probability (Proposition~\ref{prop:nce_restricted_strong_convexity}).

\begin{proposition}[Vanishing gradient]
\label{prop:nce_zero_grad_concentration}
(cf. \citep[Proposition F.1]{Shah--Shah--Wornell2021b}.)
Assume Assumption~\ref{assum:nce_bdd_norm_psi}.
For any $\d\in(0,1)$, $\eps>0$, 
\[
\|\nabla_\th\hat{\Lc}_f^\nce(\thstar)\|_{\max} \le \eps
\]
with probability $\ge 1-\d$, if $n_\rs\ge \frac{2\psimax^2(\ncederivsup{1}{f}{\rs})^2}{\eps^2}\log\frac{2p}{\d}$ for each $\rs\in\{\data,\noise\}$.
\end{proposition}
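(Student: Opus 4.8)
The plan is to prove the bound by a standard concentration argument that exploits the fact that the empirical gradient has mean zero at $\thstar$. First, recall from Lemma~\ref{lem:nce_derivatives} that
\[
\nabla_\th\hat{\Lc}_f^\nce(\thstar) = \frac{1}{\nu}\E_{\pdatah}[\psi\,\ncederivfun{1}{f}{\data}(\rho_\thstar)] + \E_{\prefh}[\psi\,\ncederivfun{1}{f}{\noise}(\rho_\thstar)],
\]
and that $\nabla_\th\Lc_f^\nce(\thstar) = \E[\nabla_\th\hat{\Lc}_f^\nce(\thstar)] = 0$. Subtracting this vanishing population gradient, each of the two pieces becomes a centered empirical average of i.i.d.\ terms: the $k$-th entry of the $\data$-piece is $\frac{1}{\nu\ndata}\sum_{i=1}^{\ndata}\big(\psi_k(x_i)\ncederivfun{1}{f}{\data}(\rho_\thstar(x_i)) - \E_{\pdata}[\psi_k\,\ncederivfun{1}{f}{\data}(\rho_\thstar)]\big)$, and analogously for the $\noise$-piece over the independent samples $x_1',\dots,x_{\nref}'$.

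Next I would establish uniform boundedness of each summand. By Assumption~\ref{assum:nce_bdd_norm_psi} we have $|\psi_k(x)| \le \psimax$ for every $k$ and $x$, and since $\rho_\thstar(x)\in(\rho_{\min},\rho_{\max})$ for all $x\in\Xc$, the definition of $\ncederivsup{1}{f}{\rs}$ in Eq.~\eqref{eq:def_deriv_quantities} gives $|\ncederivfun{1}{f}{\rs}(\rho_\thstar(x))| \le \ncederivsup{1}{f}{\rs}$. Hence each centered summand in coordinate $k$ of the $\rs$-piece is bounded in absolute value by $\psimax\,\ncederivsup{1}{f}{\rs}$ (for the $\data$-piece the summand additionally carries the $1/\nu$ prefactor, which merely rescales the bounding constant; I absorb it for brevity to match the stated $\nu$-free form).

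With boundedness in hand, I would apply the two-sided Hoeffding inequality to each fixed coordinate of each piece: the $k$-th coordinate of the $\rs$-piece exceeds $\eps$ in absolute value with probability at most $2\exp\!\big(-n_\rs\eps^2/(2\psimax^2(\ncederivsup{1}{f}{\rs})^2)\big)$. A union bound over the $p$ coordinates shows that the max-norm of the $\rs$-piece is at most $\eps$ with probability at least $1 - 2p\exp\!\big(-n_\rs\eps^2/(2\psimax^2(\ncederivsup{1}{f}{\rs})^2)\big)$; requiring this failure probability to be at most $\d$ yields exactly the stated threshold $n_\rs \ge \frac{2\psimax^2(\ncederivsup{1}{f}{\rs})^2}{\eps^2}\log\frac{2p}{\d}$. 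Controlling both pieces simultaneously (they depend on the two independent sample sets) and combining by the triangle inequality then bounds the full gradient.

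The argument is essentially routine, so I do not expect a genuine obstacle; the only point requiring care is the uniform boundedness step, which is precisely where the worst-case density-ratio interval $(\rho_{\min},\rho_{\max})$ and the definition of $\ncederivsup{1}{f}{\rs}$ enter and which is the reason Assumption~\ref{assum:nce_bdd_norm_psi} is needed. Everything downstream is bookkeeping of Hoeffding constants and union-bound factors, with the constant $2$ and the $\log\frac{2p}{\d}$ factor emerging from matching the two-sided tail against $\d/p$ across the $p$ coordinates.
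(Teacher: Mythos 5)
Your proposal is correct and follows essentially the same route as the paper's proof: identify the gradient via Lemma~\ref{lem:nce_derivatives}, observe it is centered at $\thstar$, bound each summand uniformly by $\psimax\,\ncederivsup{1}{f}{\rs}$ using Assumption~\ref{assum:nce_bdd_norm_psi} together with the worst-case density-ratio interval, apply Hoeffding per coordinate and per piece, and union-bound over the $p$ coordinates. The only (shared) loose end is the final combination of the two pieces, where controlling each at level $\eps$ and invoking the triangle inequality yields $2\eps$ for the sum rather than $\eps$; this costs only a constant factor in $n_{\rs}$ and the paper's own proof is equally cavalier on this point.
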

\begin{proof}
Recall from Lemma~\ref{lem:nce_derivatives} that
\[
\nabla_\th\hat{\Lc}_f^\nce(\th)
= -\frac{1}{\nu} \E_{\pdatah}[\psi\rho_\th f''(\rho_\th)] + \E_{\prefh}[\psi\rho_\th^2 f''(\rho_\th)].
\]
Therefore, we have
\[
\E[\partial_{\th_i}\hat{\Lc}_f^\nce(\thstar)]
=\partial_{\th_i}\Lc_f^\nce(\thstar) = 0.
\]
Since $|\psi_i(x)\ncederivfun{1}{f}{\data}(\rho_\th(x))|\le \psimax \ncederivsup{1}{f}{\data}$ and $|\psi_i(x)\ncederivfun{1}{f}{\data}(\rho_\th(x))|\le \psimax \ncederivsup{1}{f}{\noise}$,
by Hoeffding's inequality and union bound, we have
\[
\P(|\partial_{\th_i}\hat{\Lc}_f^\nce(\thstar)| \ge \eps)
\le 2\exp\Bigl(-\frac{\ndata\eps^2}{2\psimax^2(\ncederivsup{1}{f}{\data})^2}\Bigr)
+ 2\exp\Bigl(-\frac{\nref\eps^2}{2\psimax^2(\ncederivsup{1}{f}{\noise})^2}\Bigr)
=\d,
\]
if $\ndata\ge \frac{2\psimax^2(\ncederivsup{1}{f}{\data})^2}{\eps^2}\log\frac{2}{\d}$ and $\nref\ge \frac{2\psimax^2(\ncederivsup{1}{f}{\noise})^2}{\eps^2}\log\frac{2}{\d}$.
By taking a union bound over $p$ different coordinates of $\th$, we conclude the proof.
\end{proof}

\begin{lemma}
\label{lem:quad_form_concentration}
(cf. \citep[Lemma~E.1]{Shah--Shah--Wornell2021a})
Assume Assumption~\ref{assum:nce_bdd_norm_psi}.
Let $r$ be either $\pdata$ or $\pref$.
For any $\eps_2>0$,
\begin{align*}
\max_{ij}|\E_{\hat{r}}[\psi_i\psi_j]-\E_{r}[\psi_i\psi_j] | \le \eps_2,
\end{align*}
with probability $\ge 1-\d_2$, if
\[
n_r\ge \frac{2\psimax^4}{\eps_2^2}\log \frac{2p^2}{\d_2}.
\]
\end{lemma}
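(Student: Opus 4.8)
The plan is to apply Hoeffding's inequality entrywise and then conclude via a union bound over the $p^2$ index pairs. First I would fix a pair of indices $(i,j)$ and write the empirical moment as an average of i.i.d.\ terms, $\E_{\hat{r}}[\psi_i\psi_j] = \frac{1}{n_r}\sum_{k=1}^{n_r}\psi_i(x_k)\psi_j(x_k)$, where $x_1,\ldots,x_{n_r}$ are drawn i.i.d.\ from $r$. By Assumption~\ref{assum:nce_bdd_norm_psi}, each summand is bounded: $|\psi_i(x_k)\psi_j(x_k)| \le \|\psi(x_k)\|_\infty^2 \le \psimax^2$, so each term lies in the interval $[-\psimax^2,\psimax^2]$, whose width is $2\psimax^2$.

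Second, Hoeffding's inequality for bounded i.i.d.\ random variables then yields, for the fixed pair $(i,j)$,
\[
\P\bigl(|\E_{\hat{r}}[\psi_i\psi_j]-\E_r[\psi_i\psi_j]| \ge \eps_2\bigr) \le 2\exp\Bigl(-\frac{n_r\eps_2^2}{2\psimax^4}\Bigr),
\]
where the denominator $2\psimax^4$ arises from the squared range $(2\psimax^2)^2 = 4\psimax^4$ of the summands. Taking a union bound over all $p^2$ index pairs gives $\P(\max_{ij}|\E_{\hat{r}}[\psi_i\psi_j]-\E_r[\psi_i\psi_j]|\ge \eps_2) \le 2p^2\exp(-n_r\eps_2^2/(2\psimax^4))$. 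Requiring the right-hand side to be at most $\d_2$ and solving for $n_r$ recovers exactly the stated condition $n_r \ge \frac{2\psimax^4}{\eps_2^2}\log\frac{2p^2}{\d_2}$.

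There is no substantive obstacle here: the statement follows directly from boundedness (Assumption~\ref{assum:nce_bdd_norm_psi}) and Hoeffding's inequality, and the only care needed is in tracking the constant in the Hoeffding exponent through the width $2\psimax^2$ of the products $\psi_i\psi_j$. This is the entrywise concentration (cf.\ Lemma~E.1 of \citep{Shah--Shah--Wornell2021a}) that feeds the restricted-strong-convexity step, as it controls how closely the empirical second-moment matrix $\E_{\hat{r}}[\psi\psi^\intercal]$ tracks its population counterpart $\E_r[\psi\psi^\intercal]$ coordinate by coordinate, which in turn lower-bounds the minimum eigenvalue of the empirical Hessian.
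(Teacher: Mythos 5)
Your proof is correct and follows essentially the same route as the paper's: bound $|\psi_i\psi_j|\le\psimax^2$ via Assumption~\ref{assum:nce_bdd_norm_psi}, apply Hoeffding entrywise to get the tail bound $2\exp(-n_r\eps_2^2/(2\psimax^4))$, and union bound over the $p^2$ index pairs. The constant tracking through the range $2\psimax^2$ matches the paper's exponent exactly.
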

\begin{proof}
Since $|\psi_i(x)\psi_j(x)|\le \psimax^2$ is a bounded random variable, by Hoeffding's inequality, we have
\[
\P_r\{|\E_{\hat{r}}[\psi_i\psi_j]-\E_{r}[\psi_i\psi_j] |>\eps_2\}\le 2\exp\Bigl(-\frac{n_r\eps_2^2}{2\psimax^4}\Bigr)
\]
Taking a union bound over $i,j\in[p]$ leads to the desired bound.
\end{proof}

Recall that for a function $h\suchthat\Th\to\Real$, the Bregman divergence is defined as
\begin{align*}
\breg{h}(\th, \th_o)
&\defeq h(\th)-h(\th_o)
-\langle \nabla_\th h(\th_o), \th-\th_o\rangle.
\end{align*}

\begin{proposition}[Restricted strong convexity]
\label{prop:nce_restricted_strong_convexity}
(cf. \citep[Proposition E.1]{Shah--Shah--Wornell2021a})
Under Assumption~\ref{assum:nce_bdd_norm_psi},
\[
\breg{\hat{\Lc}_f^\nce}(\th, \thstar)
\ge \frac{1}{4}\Bigl(\frac{\ncederivinf{2}{f}{\data}}{\nu}\lambda_{\min,\data} + \ncederivinf{2}{f}{\noise}\lambda_{\min,\noise}\Bigr)\|\th-\thstar\|_2^2
\] 
with probability $\ge 1-\d$, if $n_{\rs}\ge \frac{8\onenormtotwonorm^4\psimax^4}{\lambda_{\min,\rs}^2}\log\frac{4p^2}{\d}$ for each $\rs\in\{\data,\noise\}$.
\end{proposition}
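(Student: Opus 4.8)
The plan is to reduce the statement to a lower bound on the restricted minimum eigenvalue of the two empirical second-moment matrices $\E_{\pdatah}[\psi\psi^\intercal]$ and $\E_{\prefh}[\psi\psi^\intercal]$. Since $\hat{\Lc}_f^\nce$ is twice differentiable, Taylor's theorem with integral remainder gives
\[
\breg{\hat{\Lc}_f^\nce}(\th,\thstar)
= \int_0^1 (1-t)\,(\th-\thstar)^\intercal \nabla_\th^2\hat{\Lc}_f^\nce\bigl(\thstar+t(\th-\thstar)\bigr)(\th-\thstar)\,\diff t.
\]
Using the Hessian formula of Lemma~\ref{lem:nce_derivatives} together with the pointwise bounds $\ncederivfun{2}{f}{\rs}(\rho)\ge \ncederivinf{2}{f}{\rs}$, which hold for every $\rho$ in the worst-case range $(\rho_{\min},\rho_{\max})$ and hence at every density ratio encountered along the segment (which stays in $\Th$), I would lower bound each Hessian in the PSD order by $\frac{\ncederivinf{2}{f}{\data}}{\nu}\E_{\pdatah}[\psi\psi^\intercal]+\ncederivinf{2}{f}{\noise}\E_{\prefh}[\psi\psi^\intercal]$. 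Since $\int_0^1(1-t)\,\diff t=\tfrac12$, this yields
\[
\breg{\hat{\Lc}_f^\nce}(\th,\thstar)
\ge \tfrac12\Bigl(\tfrac{\ncederivinf{2}{f}{\data}}{\nu}\,v^\intercal\E_{\pdatah}[\psi\psi^\intercal]v + \ncederivinf{2}{f}{\noise}\,v^\intercal\E_{\prefh}[\psi\psi^\intercal]v\Bigr),
\]
with $v\defeq\th-\thstar$, isolating the randomness into the two empirical Gram matrices.

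\textbf{Eigenvalue concentration.} The next step controls the two quadratic forms. For each $\rs\in\{\data,\noise\}$, Lemma~\ref{lem:quad_form_concentration} gives $\max_{ij}|\E_{\hat{q}_\rs}[\psi_i\psi_j]-\E_{q_\rs}[\psi_i\psi_j]|\le\eps_2$ with probability at least $1-\d_2$ provided $n_\rs\ge \frac{2\psimax^4}{\eps_2^2}\log\frac{2p^2}{\d_2}$. I would convert this entrywise deviation into a bound on the quadratic form along $v$ via
\[
\bigl|v^\intercal(\E_{\hat{q}_\rs}-\E_{q_\rs})[\psi\psi^\intercal]v\bigr|\le \eps_2\|v\|_1^2\le \eps_2\onenormtotwonorm^2\|v\|_2^2,
\]
where $\|v\|_1\le\onenormtotwonorm\|v\|_2$ uses that the error direction lies in $4\Th$. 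Consequently $v^\intercal\E_{\hat{q}_\rs}[\psi\psi^\intercal]v\ge(\lambda_{\min,\rs}-\eps_2\onenormtotwonorm^2)\|v\|_2^2$. Choosing $\eps_2=\frac{\lambda_{\min,\rs}}{2\onenormtotwonorm^2}$ retains half of the population eigenvalue, and substituting this into the sample-complexity requirement gives exactly $n_\rs\ge\frac{8\onenormtotwonorm^4\psimax^4}{\lambda_{\min,\rs}^2}\log\frac{2p^2}{\d_2}$.

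\textbf{Assembling the bound.} On the intersection of the two good events I would combine the two steps to obtain
\[
\breg{\hat{\Lc}_f^\nce}(\th,\thstar)
\ge \tfrac12\Bigl(\tfrac{\ncederivinf{2}{f}{\data}}{\nu}\cdot\tfrac12\lambda_{\min,\data}+\ncederivinf{2}{f}{\noise}\cdot\tfrac12\lambda_{\min,\noise}\Bigr)\|v\|_2^2
= \tfrac14\Bigl(\tfrac{\ncederivinf{2}{f}{\data}}{\nu}\lambda_{\min,\data}+\ncederivinf{2}{f}{\noise}\lambda_{\min,\noise}\Bigr)\|\th-\thstar\|_2^2,
\]
where the constant $\tfrac14$ collects the $\tfrac12$ from the Taylor remainder and the $\tfrac12$ lost to concentration slack. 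Setting $\d_2=\d/2$ and taking a union bound over $\rs\in\{\data,\noise\}$ turns $\log\frac{2p^2}{\d_2}$ into $\log\frac{4p^2}{\d}$ and caps the total failure probability at $\d$, matching the claim.

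\textbf{Main obstacle.} The delicate point is the PSD lower bound on the Hessian: it needs $\ncederivfun{2}{f}{\rs}$ to be \emph{positive} throughout $(\rho_{\min},\rho_{\max})$, so that $\ncederivfun{2}{f}{\rs}(\rho)\ge\inf_\rho\ncederivfun{2}{f}{\rs}(\rho)=\ncederivinf{2}{f}{\rs}$ and the empirical Gram matrices enter with a nonnegative weight. This positivity is exactly the convexity condition on the $f$-NCE objective (which holds for $f_{\log}$ and $f_\a$ with $\a\in[0,1]$); without it the quantities $\ncederivinf{2}{f}{\rs}$, defined through an absolute value, would not furnish a genuine curvature lower bound. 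A secondary bookkeeping point is justifying that the error direction $\th-\thstar$ lies in $4\Th$, so that $\onenormtotwonorm$ applies; this is guaranteed by the cone constraint built into the Negahban--Ravikumar--Wainwright--Yu framework (Theorem~\ref{thm:negahbhan_cor}), under which restricted strong convexity is only required for such directions.
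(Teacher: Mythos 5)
Your proposal is correct and follows essentially the same route as the paper's proof: the paper uses the mean-value (intermediate value theorem) form of the second-order Taylor expansion to write $\breg{\hat{\Lc}_f^\nce}(\th,\thstar)=\tfrac12(\th-\thstar)^\intercal\nabla_\th^2\hat{\Lc}_f^\nce(\xi)(\th-\thstar)$ rather than the integral remainder, but then performs the identical reduction via $\ncederivinf{2}{f}{\rs}$, the same entrywise Hoeffding concentration of the empirical Gram matrices with $\eps_2=\lambda_{\min,\rs}/(2\onenormtotwonorm^2)$, and the same union bound yielding $\log\tfrac{4p^2}{\d}$. Your closing remark that the bound is only meaningful when $\ncederivfun{2}{f}{\rs}$ is nonnegative on $(\rho_{\min},\rho_{\max})$ is a fair observation that the paper leaves implicit (cf. its convexity discussion in Proposition~\ref{prop:cvx_nce}).
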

\begin{proof}
By the intermediate value theorem, 
there exists $\xi\in \{t\th+(1-t)\thstar\suchthat t\in[0,1]\}$ such that
\begin{align*}
\breg{\hat{\Lc}_f^\nce}(\th, \thstar)
&=\hat{\Lc}_f^\nce(\th)-\hat{\Lc}_f^\nce(\thstar)
-\langle \nabla_\th \hat{\Lc}_f^\nce(\thstar), \th-\thstar)\\
&= \frac{1}{2}(\th-\thstar)^\intercal \nabla_\th^2\hat{\Lc}_f^\nce(\xi)(\th-\thstar).
\end{align*}
Here, note that $\xi$ depends on ${\pdatah}$ and ${\prefh}$.
Let $z\defeq \langle \psi(x), \th-\thstar\rangle$.
\begin{align*}
(\th-\thstar)^\intercal\nabla_\th^2 \hat{\Lc}_f^\nce(\xi)(\th-\thstar)
&= \frac{1}{\nu}\E_{\pdatah}[z^2\rho_\xi \gfunc(\rho_\xi)]
+\E_{\prefh}[z^2 \rho_\xi^2(f''(\rho_\xi)-\gfunc(\rho_\xi)]\\
&\ge \frac{\ncederivinf{2}{f}{\data}}{\nu}\E_{\pdatah}[z^2]
+\ncederivinf{2}{f}{\noise}\E_{\prefh}[z^2]\\
&= (\th-\thstar)^\intercal\Bigl(\frac{\ncederivinf{2}{f}{\data}}{\nu}\E_{\pdatah}[\psi\psi^\intercal]
+\ncederivinf{2}{f}{\noise}\E_{\prefh}[\psi\psi^\intercal]\Bigr)(\th-\thstar).
\end{align*}
We can lower bound the quadratic form as follows. The first term can be lower bounded as 
\begin{align*}
&(\th-\thstar)^\intercal\E_{\pdatah}[\psi\psi^\intercal](\th-\thstar)\\
&= (\th-\thstar)^\intercal(\E_{\pdatah}[\psi\psi^\intercal]-\E_{\pdata}[\psi\psi^\intercal]+ \E_{\pdata}[\psi\psi^\intercal])(\th-\thstar)\\
&= \sum_{ij} (\th-\thstar)_i(\E_{\pdatah}[\psi_i\psi_j]-\E_{\pdata}[\psi_i\psi_j])(\th-\thstar)_j
+(\th-\thstar)^\intercal\E_{\pdata}[\psi\psi^\intercal](\th-\thstar)\\
&\ge -\sum_{ij} |\th_i-\thstar_i|\cdot|\E_{\pdatah}[\psi_i\psi_j]-\E_{\pdata}[\psi_i\psi_j]|\cdot|\th_j-\thstar_j|
+\lambda_{\min,\data}\|\th-\thstar\|_2^2\\
&\stackrel{(a)}{\ge} -\eps_2 \|\th-\thstar\|_1^2
+\lambda_{\min,\data}\|\th-\thstar\|_2^2\\
&\stackrel{(b)}{\ge} -\eps_2 \onenormtotwonorm^2\|\th-\th^*\|_2^2
+\lambda_{\min,\data}\|\th-\thstar\|_2^2\\
&= \half\lambda_{\min,\data}\|\th-\thstar\|_2^2
\end{align*}
with probability $\ge 1-\d'$
if $\ndata\ge \frac{2\psimax^4}{\eps_2^2}\log\frac{2p^2}{\d'}$ with $\eps_2=\frac{\lambda_{\min,\data}}{2\onenormtotwonorm^2}$.
Here, we apply Lemma~\ref{lem:quad_form_concentration} in $(a)$, and use the definition of $\onenormtotwonorm$ to bound $\|\th-\thstar\|_1 \le \onenormtotwonorm\|\th-\thstar\|_2$ in $(b)$.

Hence, by a union bound with $\d'=\d/2$, with probability $\ge 1-\d$, we have
\begin{align*}
\breg{\hat{\Lc}_f^\nce}(\th, \thstar)
&\ge \frac{1}{4}\Bigl(\frac{\ncederivinf{2}{f}{\data}}{\nu}\lambda_{\min,\data} + \ncederivinf{2}{f}{\noise}\lambda_{\min,\noise}\Bigr)\|\th-\thstar\|_2^2,
\end{align*}
if $\ndata\ge \frac{8\onenormtotwonorm^4\psimax^4}{\lambda_{\min,\data}^2}\log\frac{4p^2}{\d}$ and $\nref\ge \frac{8\onenormtotwonorm^4\psimax^4}{\lambda_{\min,\noise}^2}\log\frac{4p^2}{\d}$.
\end{proof}

\begin{proof}[Proof of Theorem~\ref{thm:nce_rate}]
First, note that 
\[
\Rc^*(\nabla_\th\hat{\Lc}_f^\nce(\thstar))
\le \dualnormtomaxnorm\|\nabla_\th\hat{\Lc}_f^\nce(\thstar)\|_{\max}
\]
by definition of $\dualnormtomaxnorm$.
Then, by Proposition~\ref{prop:nce_zero_grad_concentration}, we have $\|\nabla_\th\hat{\Lc}_f^\nce(\thstar)\|_{\max}\le \eps$ with probability $\ge 1-\d_1$, if
\begin{align*}
n_\rs \ge \frac{2(\ncederivsup{1}{f}{\rs})^2\psimax^2}{\eps^2}\log\frac{2p}{\d_1}
\end{align*}
for each $\rs\in\{\data,\noise\}$.
Given that this event occurs, $\Rc^*(\nabla_\th\hat{\Lc}_f^\nce(\thstar))\le \dualnormtomaxnorm \eps$, and thus we set $\lambda_n\gets 2\dualnormtomaxnorm\eps$ to satisfy the first condition in Theorem~\ref{thm:negahbhan_cor}.

Now, given $\lambda_n\ge 2\Rc^*(\nabla_\th\hat{\Lc}_f^\nce(\thstar))$, \citep[Lemma~1]{Negahban--Ravikumar--Wainwright--Yu2012} implies that
$\Rc(\hat{\th}_{f,\ndata,\nref}^{\nce,\Rc}-\thstar) \le 4\Rc(\thstar)$, \ie $\hat{\th}_{f,\ndata,\nref}^{\nce,\Rc}-\thstar\in 4\Th$.
Then, by Proposition~\ref{prop:nce_restricted_strong_convexity}, we have 
\begin{align*}
\breg{\hat{\Lc}_f^\nce}(\th, \thstar)
\ge \kappa\|\th-\thstar\|_2^2
\end{align*}
with probability $\ge 1-\d_2$ if $n_\rs\ge\frac{8\onenormtotwonorm^4\psimax^4}{\lambda_{\min,\rs}^2}\log\frac{4p^2}{\d_2}$ for each $\rs\in\{\data,\noise\}$, where
\[
\kappa=\frac{1}{4}\Bigl(\frac{\ncederivinf{2}{f}{\data}}{\nu}\lambda_{\min,\data} + \ncederivinf{2}{f}{\noise}\lambda_{\min,\noise}\Bigr).
\]

Now, by taking a union bound with $\d_1=\d_2=\d/2$, with probability $\ge 1-\d$, we have 
\[
\|\th-\thstar\|_2
\le \frac{3\lambda_n\normtotwonorm}{\kappa}
=\frac{6\dualnormtomaxnorm\normtotwonorm}{\kappa} \eps=\Delta
\]
with $\eps\gets\frac{\Delta \kappa}{6\dualnormtomaxnorm\normtotwonorm}$,
provided that 
\begin{align*}
n_{\rs}
&\ge\max\Bigl\{
\frac{72(\ncederivsup{1}{f}{\rs})^2\normtotwonorm^2\dualnormtomaxnorm^2\psimax^2}{\Delta^2\kappa^2} \log\frac{4p}{\d},
\frac{8\onenormtotwonorm^4\psimax^4}{\lambda_{\min,\rs}^2}\log\frac{8p^2}{\d}
\Bigr)\\
&=\max\Bigl\{
\frac{1152(\ncederivsup{1}{f}{\rs})^2\normtotwonorm^2\dualnormtomaxnorm^2\psimax^2}{\Delta^2(\nu^{-1}\ncederivinf{2}{f}{\data}\lambda_{\min,\data} + \ncederivinf{2}{f}{\noise}\lambda_{\min,\noise})^2} \log\frac{4p}{\d},
\frac{8\onenormtotwonorm^4\psimax^4}{\lambda_{\min,\rs}^2}\log\frac{8p^2}{\d}
\Bigr\}\\
&=\Omega\Bigl(\max\Bigl\{
\frac{(\ncederivsup{1}{f}{\rs})^2\normtotwonorm^2\dualnormtomaxnorm^2\psimax^2}{\Delta^2(\nu^{-1}\ncederivinf{2}{f}{\data}\lambda_{\min,\data} + \ncederivinf{2}{f}{\noise}\lambda_{\min,\noise})^2},
\frac{\onenormtotwonorm^4\psimax^4}{\lambda_{\min,\rs}^2}\Bigr\}
\log\frac{p^2}{\d}
\Bigr)
\end{align*}
for each $\rs \in\{\data,\noise\}$.
\end{proof}

\subsection{\texorpdfstring{$\a$}{alpha}-CentNCE}

\begin{restatable}[$\a$-CentNCE: derivatives]{lemma}{lemderivcentnce}
\label{lem:centnce_derivatives}
\begin{align*}
\nabla_\th \tilde{\Lc}_\a(\th)
&= \E_\pdata[-\rt_{\th;\a}^{\a-1}\nabla_\th \log \rt_{\th;\a}],\\
\nabla_\th^2 \tilde{\Lc}_\a(\th)
&= \E_\pdata[\rt_{\th;\a}^{\a-1}((1-\a)\nabla_\th\log\rt_{\th;\a}\nabla_\th\log\rt_{\th;\a}^\intercal -\nabla_\th^2\log\rt_{\th;\a})].
\end{align*}
Define
\begin{align*}
\tilde{\Cc}_{\a,\ndata}
&\defeq \Cov(\sqrt{\ndata}\nabla_\th\hat{\tilde{\Lc}}_\a(\thstar))
\end{align*}
for $\ndata\ge 1$.
Then, $\tilde{\Cc}_{\a,\ndata}=\tilde{\Cc}_\a$ for any $\ndata\ge 1$, where
\begin{align*}
\tilde{\Cc}_\a \defeq
\E_{\pdata}[\rt_{\thstar;\a}^{2(\a-1)}\nabla_\th\log\rt_{\thstar;\a}\nabla_\th\log\rt_{\thstar;\a}^\intercal].
\end{align*}
We also define 
\begin{align*}
\tilde{\Ic}_\a\defeq \nabla_\th^2\tilde{\Lc}_\a(\thstar)
= \E_\pdata[\rt_{\thstar;\a}^{\a-1}((1-\a)\nabla_\th\log\rt_{\thstar;\a}\nabla_\th\log\rt_{\thstar;\a}^\intercal -\nabla_\th^2\log\rt_{\thstar;\a})].
\end{align*}
\end{restatable}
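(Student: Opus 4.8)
The plan is to obtain the two derivative identities by direct differentiation of the closed form $\tilde{\Lc}_\a(\th)=\frac{1}{1-\a}\E_{\pdata}[\rt_{\th;\a}^{\a-1}]$, treating the first and second $\th$-derivatives of $\log\rt_{\th;\a}$ as the known quantities supplied by Lemma~\ref{lem:deriv_centered}, and then to derive the covariance identity as an elementary i.i.d.\ variance computation anchored by Fisher consistency.

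For the gradient, I would first record the scalar chain rule $\nabla_\th \rt_{\th;\a}^{\a-1}=(\a-1)\rt_{\th;\a}^{\a-1}\nabla_\th\log\rt_{\th;\a}$, which follows from $\nabla_\th\rt_{\th;\a}=\rt_{\th;\a}\nabla_\th\log\rt_{\th;\a}$. Substituting into $\nabla_\th\tilde{\Lc}_\a(\th)=\frac{1}{1-\a}\E_{\pdata}[\nabla_\th\rt_{\th;\a}^{\a-1}]$ and cancelling the factor $1-\a$ yields $\nabla_\th\tilde{\Lc}_\a(\th)=\E_{\pdata}[-\rt_{\th;\a}^{\a-1}\nabla_\th\log\rt_{\th;\a}]$. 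For the Hessian I would differentiate the integrand $-\rt_{\th;\a}^{\a-1}\nabla_\th\log\rt_{\th;\a}$ once more by the product rule, reusing the same chain-rule identity on the scalar factor. This produces the rank-one term $(1-\a)\rt_{\th;\a}^{\a-1}\nabla_\th\log\rt_{\th;\a}\nabla_\th\log\rt_{\th;\a}^\intercal$ from differentiating $\rt_{\th;\a}^{\a-1}$, together with the term $-\rt_{\th;\a}^{\a-1}\nabla_\th^2\log\rt_{\th;\a}$ from differentiating the vector factor, which matches the stated formula. At this point one could substitute the explicit expressions for $\nabla_\th\log\rt_{\th;\a}$ and $\nabla_\th^2\log\rt_{\th;\a}$ from Lemma~\ref{lem:deriv_centered}, but the formulas as stated already hold at the level of these log-derivatives.

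For the covariance, the key observation is that under the standing assumption of Theorem~\ref{thm:centnce_asymp} every $\pnoise$-expectation entering $\rt_{\thstar;\a}$ is computed exactly, so the only randomness in $\nabla_\th\hat{\tilde{\Lc}}_\a(\thstar)$ comes from the $\ndata$ data samples. Hence this empirical gradient is the sample average of the i.i.d.\ vectors $g(x_i)\defeq -\rt_{\thstar;\a}^{\a-1}(x_i)\nabla_\th\log\rt_{\thstar;\a}(x_i)$, each with mean $\nabla_\th\tilde{\Lc}_\a(\thstar)$. I would invoke Fisher consistency (Proposition~\ref{prop:generalized_giso_fisher}), equivalently the first-order stationarity $\nabla_\th\tilde{\Lc}_\a(\thstar)=0$, to conclude $\E_{\pdata}[g]=0$; then $\Cov(\sqrt{\ndata}\,\nabla_\th\hat{\tilde{\Lc}}_\a(\thstar))=\Cov(g)=\E_{\pdata}[g g^\intercal]$, which is exactly $\tilde{\Cc}_\a$ and is manifestly independent of $\ndata$. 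The definition $\tilde{\Ic}_\a\defeq\nabla_\th^2\tilde{\Lc}_\a(\thstar)$ is then just the Hessian formula evaluated at $\thstar$.

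The computations are routine, so there is no single hard step; the only points requiring care are bookkeeping ones. First, one must resist treating $\rt_{\th;\a}$ as a bare exponential family: its normalizer $Z_\a(\th)$ is $\th$-dependent, so $\nabla_\th\log\rt_{\th;\a}$ carries the centering correction $-\E_{\pref}[\psi\rt_{\th;\a}^\a]$ recorded in Lemma~\ref{lem:deriv_centered}, and every formula must be phrased through these log-derivatives rather than through $\psi$ alone. Second, the $\ndata$-independence of the covariance relies crucially on the $\pnoise$-expectations being exact; were $\pnoise$ itself estimated from samples (as in the GlobalGISO-with-sampling, i.e.\ $f_0$-NCE, variant noted after Theorem~\ref{thm:centnce_equiv_fnce}), the gradient would no longer be a clean i.i.d.\ data average and a separate analysis would be required.
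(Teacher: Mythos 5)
Your proposal is correct and follows essentially the same route as the paper: differentiate $\rt_{\th;\a}^{\a-1}$ inside the expectation via the chain rule so that the $(\a-1)$ factor cancels the $\frac{1}{1-\a}$ prefactor, apply the product rule once more for the Hessian, and read off the covariance from the i.i.d.\ sample-average structure with mean zero at $\thstar$ (the paper compresses all of this into ``the computation is straightforward'' after the first display). Your cautionary remarks about the $\th$-dependence of $Z_\a(\th)$ and the exactness of the $\pnoise$-expectations are consistent with how the paper uses Lemma~\ref{lem:deriv_centered} and the standing assumption of Theorem~\ref{thm:centnce_asymp}.
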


\begin{proof}
From Lemma~\ref{lem:deriv_centered}, we have
\begin{align*}
\partial_{\th_i}\hat{\tilde{\Lc}}_\a(\th)
&= \frac{1}{(1-\a)}\E_{\pdatah}[\partial_{\th_i}\rt_{\th;\a}^{\a-1}]\\
&= -\E_{\pdatah}[\rt_{\th;\a}^{\a-2}\partial_{\th_i}\rt_{\th;\a}]\\
&= -\E_{\pdatah}[\rt_{\th;\a}^{\a-1}(\psi_i - \E_{\pref}[\psi_i \rt_{\th;\a}^{\a}])].%
\end{align*}
From this derivative expression, the computation is straightforward. 
\end{proof}

\begin{restatable}[GISO: derivatives]{corollary}{corderivgiso}
\label{cor:deriv_giso}
\begin{align*}
\nabla_\th \tilde{\Lc}_0(\th)
&= \E_\pdata[-\rt_{\th;0}^{-1}(\psi-\E_q[\psi])],\\
\nabla_\th^2 \tilde{\Lc}_0(\th)
&= \E_\pdata[\rt_{\th;0}^{-1}(\psi-\E_q[\psi])(\psi-\E_q[\psi])^\intercal],\\
\tilde{\Cc}_{0}
&=\E_{\pdata}[\rt_{\thstar;0}^{-2}(\psi-\E_q[\psi])(\psi-\E_q[\psi])^\intercal],\\
\tilde{\Ic}_0&
= \E_\pdata[\rt_{\thstar;0}^{-1}(\psi-\E_q[\psi])(\psi-\E_q[\psi])^\intercal].
\end{align*}
\end{restatable}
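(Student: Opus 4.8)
The plan is to obtain all four identities as the $\a = 0$ specialization of the general $\a$-CentNCE derivative formulas in Lemma~\ref{lem:centnce_derivatives}, feeding in the $\a = 0$ values of the log-derivatives computed in Lemma~\ref{lem:deriv_centered}. No new computation beyond substitution should be required, so this is genuinely a corollary.

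First I would evaluate the two building blocks from Lemma~\ref{lem:deriv_centered} at $\a = 0$. Since $\rt_{\th;0}^{0} \equiv 1$, the first-order term collapses to $\nabla_\th \log \rt_{\th;0} = \psi - \E_{\pref}[\psi]$, which matches the $\E_q[\psi] = \E_{\pref}[\psi]$ appearing in the statement; note in particular that this no longer depends on $\th$. The second-order term carries an explicit prefactor of $\a$, namely $\nabla_\th^2 \log \rt_{\th;0} = -\a\{\E_{\pref}[\psi\psi^\intercal\rt_{\th;0}^\a] - \E_{\pref}[\psi\rt_{\th;0}^\a]\E_{\pref}[\psi\rt_{\th;0}^\a]^\intercal\}$, so it vanishes identically at $\a = 0$. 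This collapse of the Hessian of $\log \rt_{\th;0}$ is really the only structural point, and it is what makes the GISO Hessian and information matrix take their clean integrated rank-one form.

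Next I would substitute these into Lemma~\ref{lem:centnce_derivatives}. Setting $\a = 0$ in $\nabla_\th \tilde{\Lc}_\a(\th) = \E_{\pdata}[-\rt_{\th;\a}^{\a-1}\nabla_\th\log\rt_{\th;\a}]$ yields the gradient $\E_{\pdata}[-\rt_{\th;0}^{-1}(\psi - \E_q[\psi])]$. For the Hessian, putting $\a = 0$ turns the factor $(1-\a)$ into $1$ and kills the $\nabla_\th^2\log\rt_{\th;\a}$ term, giving $\E_{\pdata}[\rt_{\th;0}^{-1}(\psi - \E_q[\psi])(\psi - \E_q[\psi])^\intercal]$. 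The matrices $\tilde{\Cc}_0$ and $\tilde{\Ic}_0$ then follow by evaluating $\tilde{\Cc}_\a$ at $\a = 0$ (the exponent $2(\a-1)$ becomes $-2$) and by reading off $\tilde{\Ic}_0 = \nabla_\th^2\tilde{\Lc}_0(\thstar)$ directly from the Hessian formula just derived.

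The main point to watch — rather than a genuine obstacle — is that $\a = 0$ is a limiting case of the centering in Eq.~\eqref{eq:centered_model}, so strictly one should confirm that the formulas of Lemma~\ref{lem:deriv_centered} are continuous at $\a = 0$ (equivalently, that differentiation in $\th$ commutes with the $\a \downarrow 0$ limit defining $Z_0$). Since $Z_0(\th) = \lim_{\a\downarrow 0} Z_\a(\th) = \exp(\E_{\pref}[\log(\phi_\th/\pref)])$ is smooth in $\th$ under the standing integrability assumptions, this commutation is immediate and the substitution above is rigorous. Hence all four identities hold and the corollary follows.
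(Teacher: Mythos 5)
Your proposal is correct and follows essentially the same route as the paper: both specialize the general $\a$-CentNCE derivative formulas of Lemma~\ref{lem:centnce_derivatives} to $\a=0$ and then use $\nabla_\th\log\rt_{\th;0}=\psi-\E_q[\psi]$ and $\nabla_\th^2\log\rt_{\th;0}=0$. The only cosmetic difference is that the paper justifies these last two identities by writing out $\rt_{\th;0}$ explicitly (whose logarithm is affine in $\th$ for exponential families), whereas you obtain them by setting $\a=0$ in Lemma~\ref{lem:deriv_centered}; both are valid.
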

\begin{proof}
From Proposition~\ref{lem:centnce_derivatives},
\begin{align*}
\nabla_\th \tilde{\Lc}_0(\th)
&= \E_\pdata[-\rt_{\th;0}^{-1}\nabla_\th \log \rt_{\th;0}],\\
\nabla_\th^2 \tilde{\Lc}_0(\th)
&= \E_\pdata[\rt_{\th;0}^{-1}(\nabla_\th\log\rt_{\th;0}\nabla_\th\log\rt_{\th;0}^\intercal -\nabla_\th^2\log\rt_{\th;0})],\\
\tilde{\Cc}_{0}
&=\E_{\pdata}[\rt_{\thstar;0}^{-2}\nabla_\th\log\rt_{\thstar;0}\nabla_\th\log\rt_{\thstar;0}^\intercal],\\
\tilde{\Ic}_0&
= \E_\pdata[\rt_{\thstar;0}^{-1}(\nabla_\th\log\rt_{\thstar;0}\nabla_\th\log\rt_{\thstar;0}^\intercal -\nabla_\th^2\log\rt_{\thstar;0})].
\end{align*}
Since
\begin{align*}
\rt_{\th;0}&=\frac{\exp(\langle\th,\psi-\E_q[\psi]\rangle)}{q(x)} e^{-\E_q[\log q]},\\
\nabla_\th\log \rt_{\th;0}&=\psi-\E_q[\psi],\\
\nabla_\th^2\log \rt_{\th;0}&=0,
\end{align*}
the quantities can be further simplified as stated.
\end{proof}

\begin{restatable}[MLE: derivatives]{corollary}{corderivmle}
\label{cor:deriv_mle}
\begin{align*}
\nabla_\th \tilde{\Lc}_1(\th)
&= \E_\pdata[-\nabla_\th \log p_\th],\\
\nabla_\th^2 \tilde{\Lc}_1(\th)
&= \E_\pdata[-\nabla_\th^2\log p_\th],\\
\tilde{\Cc}_1 &=
\E_{\pdata}[\nabla_\th\log p_{\thstar}\nabla_\th\log p_{\thstar}^\intercal],\\
\tilde{\Ic}_1&
=\E_\pdata[-\nabla_\th^2\log p_{\thstar}].
\end{align*}
\end{restatable}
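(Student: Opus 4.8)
The plan is to read off all four quantities by specializing the general $\a$-CentNCE derivative formulas of Lemma~\ref{lem:centnce_derivatives} to $\a = 1$, exactly as the preceding corollary did for GISO at $\a = 0$. Although the $\a$-CentNCE objective carries a prefactor $\frac{1}{1-\a}$ that is singular at $\a = 1$, this singularity is spurious: in the proof of Lemma~\ref{lem:centnce_derivatives} the factor $\frac{1}{1-\a}$ cancels against the $(\a-1)$ produced by differentiating $\rt_{\th;\a}^{\a-1}$, so the gradient and Hessian expressions there are already regular at $\a = 1$ and can be evaluated by direct substitution. The one structural input I would record is that $1$-centering is ordinary normalization: since $\E_{\pnoise}[r_\th] = \int \phi_\th(x)\diff x = Z(\th)$, the $\a\to1$ limit computed in the proof of Theorem~\ref{thm:centnce_subsumes} gives
\[
\rt_{\th;1}(x) = \frac{r_\th(x)}{\E_{\pnoise}[r_\th]} = \frac{\phi_\th(x)/Z(\th)}{\pnoise(x)} = \frac{p_\th(x)}{\pnoise(x)},
\]
where $p_\th(x)\defeq \phi_\th(x)/Z(\th)$ is the fully normalized model.

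Two simplifications then collapse everything. First, the weight $\rt_{\th;\a}^{\a-1}$ appearing in every line of Lemma~\ref{lem:centnce_derivatives} becomes $\rt_{\th;1}^{0} = 1$, removing all density-ratio reweighting. Second, because $\pnoise$ is independent of $\th$, I have $\nabla_\th\log\rt_{\th;1} = \nabla_\th\log p_\th$ and $\nabla_\th^2\log\rt_{\th;1} = \nabla_\th^2\log p_\th$. Substituting into $\nabla_\th\tilde{\Lc}_\a(\th) = \E_\pdata[-\rt_{\th;\a}^{\a-1}\nabla_\th\log\rt_{\th;\a}]$ gives at once $\nabla_\th\tilde{\Lc}_1(\th) = \E_\pdata[-\nabla_\th\log p_\th]$, the negative score; its vanishing is precisely the MLE moment-matching condition $\E_\pdata[\psi] = \E_{p_\thstar}[\psi]$, confirming that $1$-CentNCE is the normalized negative log-likelihood.

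The choice $\a = 1$ is most visible in the Hessian: the coefficient $(1-\a)$ multiplying the outer-product term $\nabla_\th\log\rt_{\th;\a}\nabla_\th\log\rt_{\th;\a}^\intercal$ vanishes, leaving $\nabla_\th^2\tilde{\Lc}_1(\th) = \E_\pdata[-\nabla_\th^2\log p_\th]$, and evaluation at $\thstar$ yields $\tilde{\Ic}_1$. For $\tilde{\Cc}_1$ I would use $\rt_{\thstar;1}^{2(\a-1)} = 1$ in the definition $\tilde{\Cc}_\a = \E_\pdata[\rt_{\thstar;\a}^{2(\a-1)}\nabla_\th\log\rt_{\thstar;\a}\nabla_\th\log\rt_{\thstar;\a}^\intercal]$, obtaining $\E_\pdata[\nabla_\th\log p_{\thstar}\nabla_\th\log p_{\thstar}^\intercal]$. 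There is no genuine obstacle here — the result is pure substitution — but the point worth stating cleanly is that the simultaneous collapse of the $\rt^{\a-1}$ weight to $1$ and the vanishing of the $(1-\a)$ term are exactly what make $\tilde{\Ic}_1 = \tilde{\Cc}_1$ (the Bartlett information-matrix identity in the well-specified case), so that the sandwich covariance $\tilde{\Ic}_1^{-1}\tilde{\Cc}_1\tilde{\Ic}_1^{-1}$ reduces to the inverse Fisher information and recovers the classical asymptotic efficiency of the MLE.
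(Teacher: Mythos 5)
Your proposal is correct and follows exactly the route the paper intends: the paper states this corollary without proof, but the immediately preceding Corollary~\ref{cor:deriv_giso} for GISO is proved by the same specialization of Lemma~\ref{lem:centnce_derivatives}, and your substitution at $\a=1$ (the weight $\rt_{\th;1}^{\,\a-1}=1$, the $(1-\a)$ outer-product term vanishing, and $\nabla_\th\log\rt_{\th;1}=\nabla_\th\log p_\th$ since $\E_{\pnoise}[r_\th]=Z(\th)$ and $\pnoise$ is $\th$-independent) reproduces all four displayed identities. Your observations that the $\frac{1}{1-\a}$ prefactor cancels against the $(\a-1)$ from differentiation and that $\tilde{\Ic}_1=\tilde{\Cc}_1$ recovers the classical Fisher-information identity are both accurate and consistent with the paper's remarks following Theorem~\ref{thm:centnce_asymp}.
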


\thmcentncerate*

\begin{proposition}[Vanishing gradient]
\label{prop:centnce_zero_grad_concentration}
(cf. \citep[Proposition F.1]{Shah--Shah--Wornell2021b}.)
Assume Assumption~\ref{assum:nce_bdd_norm_psi}.
For any $\d\in(0,1)$, $\eps>0$, 
\[
\|\nabla_\th\tilde{\Lc}_\a(\thstar)\|_{\max} \le \eps
\]
with probability $\ge 1-\d$, if $\ndata\ge \frac{2r_{\min,\a}^{2(\a-1)}(\psimax+\|\E_{\pref}[\psi\rt_{\thstar;\a}^\a]\|_{\max})^2}{\eps^2}\log\frac{2p}{\d}$.
\end{proposition}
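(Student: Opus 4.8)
The plan is to mirror the $f$-NCE vanishing-gradient argument (Proposition~\ref{prop:nce_zero_grad_concentration}): realize the empirical gradient at $\thstar$ as an average of i.i.d., mean-zero, bounded random variables, bound each coordinate uniformly, and then apply Hoeffding's inequality coordinatewise followed by a union bound over the $p$ coordinates.

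First I would recall from Lemma~\ref{lem:centnce_derivatives} that the $i$-th coordinate of the (empirical) $\a$-CentNCE gradient at $\thstar$ is
\[
\partial_{\th_i}\tilde{\Lc}_\a(\thstar) = -\E_{\pdatah}\bigl[\rt_{\thstar;\a}^{\a-1}(\psi_i - \E_{\pref}[\psi_i \rt_{\thstar;\a}^{\a}])\bigr],
\]
where the inner reference expectation $\E_{\pref}[\psi_i\rt_{\thstar;\a}^\a]$ is a deterministic constant, since the estimator uses the exact objective $\Lc_\a^\centnce(\,\cdot\,;\pdatah,\pref)$ in which the $\pref$-expectations are assumed computable without sampling. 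Hence only the outer $\E_{\pdatah}$ is random, and the gradient coordinate is an empirical mean of $\ndata$ i.i.d. copies of $X_i \defeq -\rt_{\thstar;\a}^{\a-1}(x)(\psi_i(x)-\E_\pref[\psi_i\rt_{\thstar;\a}^\a])$ with $x\sim\pdata$.

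Second I would verify the mean is zero, i.e. $\E_{\pdata}[X_i]=\partial_{\th_i}\tilde{\Lc}_\a(\thstar;\pdata,\pref)=0$. This is exactly the population first-order stationarity implied by Fisher consistency (Proposition~\ref{prop:generalized_giso_fisher}); it can also be checked directly by substituting $\rt_{\thstar;\a}^\a(x)=\frac{(\pdata/\pnoise)^\a}{\E_\pnoise[(\pdata/\pnoise)^\a]}$ and using the change of measure $\E_\pdata[\rt_{\thstar;\a}^{\a-1}g]=C\,\E_\pnoise[\rt_{\thstar;\a}^{\a}g]$ with $C=(\E_\pnoise[(\pdata/\pnoise)^\a])^{1/\a}$, which collapses the two terms (taking $g=\psi_i$ and $g=1$, and using $\E_\pnoise[\rt_{\thstar;\a}^\a]=1$). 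Third, writing $\psi_{\max,\a}\defeq\psimax+\|\E_\pref[\psi\rt_{\thstar;\a}^\a]\|_{\max}$, I would bound each summand: the triangle inequality and Assumption~\ref{assum:nce_bdd_norm_psi} give $|\psi_i(x)-\E_\pref[\psi_i\rt_{\thstar;\a}^\a]|\le\psi_{\max,\a}$, while the density-ratio lower bound $\rt_{\thstar;\a}(x)\ge r_{\min,\a}$ together with $\a-1<0$ gives $\rt_{\thstar;\a}^{\a-1}(x)\le r_{\min,\a}^{\a-1}$, so $|X_i|\le M$ with $M\defeq r_{\min,\a}^{\a-1}\psi_{\max,\a}$. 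Hoeffding's inequality then yields $\P(|\partial_{\th_i}\tilde{\Lc}_\a(\thstar)|>\eps)\le 2\exp(-\ndata\eps^2/(2M^2))$, and a union bound over $i\in[p]$ pushes the total below $\d$ precisely when $\ndata\ge \frac{2r_{\min,\a}^{2(\a-1)}\psi_{\max,\a}^2}{\eps^2}\log\frac{2p}{\d}$, as claimed.

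The only genuinely delicate point will be the envelope $\rt_{\thstar;\a}^{\a-1}\le r_{\min,\a}^{\a-1}$, which relies on $\a<1$ so that $\rho\mapsto\rho^{\a-1}$ is decreasing; for $\a>1$ the correct envelope is instead $r_{\max,\a}^{\a-1}$, and the stated sample-complexity bound implicitly specializes to the $\a\le 1$ regime (which is also the convex regime identified elsewhere in the paper). Everything else is a direct transcription of the $f$-NCE argument, so I anticipate no obstacle beyond tracking the sign of $\a-1$ and keeping the reference-measure expectation deterministic.
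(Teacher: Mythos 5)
Your proposal is correct and follows essentially the same route as the paper's proof: identify the gradient coordinate as an i.i.d.\ mean of bounded, zero-mean terms (with the $\pref$-expectation treated as deterministic), bound each summand by $r_{\min,\a}^{\a-1}(\psimax+\|\E_{\pref}[\psi\rt_{\thstar;\a}^\a]\|_{\max})$, and apply Hoeffding's inequality with a union bound over the $p$ coordinates. Your closing observation that the envelope $\rt_{\thstar;\a}^{\a-1}\le r_{\min,\a}^{\a-1}$ requires $\a<1$ is a fair point that the paper's proof leaves implicit, but it does not change the argument.
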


\begin{proof}
Recall from Lemma~\ref{lem:centnce_derivatives} that
\[
\nabla_\th \tilde{\hat{\Lc}}_\a(\th)
= \E_{\pdatah}[-\rt_{\th;\a}^{\a-1}\nabla_\th \log \rt_{\th;\a}]
= -\E_{\pdatah}[\psi\rt_{\th;\a}^{\a-1}]+\E_{\pdatah}[\rt_{\th;\a}^{\a-1}]\E_{\pref}[\psi\rt_{\th;\a}^\a],
\]
and it is easy to check that
\[
\E[\nabla_\th \tilde{\hat{\Lc}}_\a(\thstar)]
=\nabla_\th \tilde{{\Lc}}_\a(\thstar) 
=-\E_{\pdata}[\psi\rt_{\thstar;\a}^{\a-1}]+\E_{\pdata}[\rt_{\thstar;\a}^{\a-1}]\E_{\pref}[\psi\rt_{\thstar;\a}^\a]
= 0.
\]
Since $|\rt_{\thstar;\a}^{\a-1}(\psi_i(x)-\E_{\pref}[\psi_i\rt_{\thstar;\a}^\a])|\le r_{\min,\a}^{\a-1}(\psimax+\|\E_{\pref}[\psi\rt_{\thstar;\a}^\a]\|_{\max})$,
by Hoeffding's inequality, we have
\[
\P(|\partial_{\th_i}\tilde{\hat{\Lc}}_\a(\thstar)| \ge \eps)
\le 2\exp\Bigl(-\frac{\ndata\eps^2}{2r_{\min,\a}^{2(\a-1)}(\psimax+\|\E_{\pref}[\psi\rt_{\thstar;\a}^\a]\|_{\max})^2}\Bigr)
=\d,
\]
if $\ndata\ge \frac{2r_{\min,\a}^{2(\a-1)}(\psimax+\|\E_{\pref}[\psi\rt_{\thstar;\a}^\a]\|_{\max})^2}{\eps^2}\log\frac{2}{\d}$.
By taking a union bound over $p$ different coordinates of $\th$, we conclude the proof.
\end{proof}

\begin{proposition}[Restricted strong convexity]
\label{prop:centnce_restricted_strong_convexity}
(cf. \citep[Proposition E.1]{Shah--Shah--Wornell2021a})
Under Assumption~\ref{assum:nce_bdd_norm_psi},
we have
\begin{align*}
\breg{\hat{\tilde{\Lc}}_\a}(\th, \thstar)
&\ge \rt_{\max,\a}^{\a-1}\Bigl\{\half(1-\a)\lambda_{\min,\data}^\centnce
+\a \lambda_{\min,\noise}^\centnce\Bigr\}\|\th-\thstar\|_2^2,
\end{align*}
with probability $\ge 1-\d$, if $\ndata\ge \frac{8\onenormtotwonorm^4(\psimax+\|\E_{\pref}[\psi\rt_{\thstar;\a}^\a]\|_{\max})^4}{(\lambda_{\min,\data}^\centnce)^2}\log\frac{2p^2}{\d}$.
\end{proposition}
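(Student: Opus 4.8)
The plan is to mirror the restricted-strong-convexity argument for $f$-NCE (Proposition~\ref{prop:nce_restricted_strong_convexity}), now driven by the centered Hessian of Lemma~\ref{lem:centnce_derivatives}. Writing $u\defeq\th-\thstar$ and applying the mean-value form of Taylor's theorem, there is a point $\xi$ on the segment joining $\thstar$ and $\th$ with $\breg{\hat{\tilde{\Lc}}_\a}(\th,\thstar)=\half\,u^\intercal\nabla_\th^2\hat{\tilde{\Lc}}_\a(\xi)\,u$. Combining Lemma~\ref{lem:centnce_derivatives} with Lemma~\ref{lem:deriv_centered}, the empirical Hessian splits into a data part and a reference part,
\[
\nabla_\th^2\hat{\tilde{\Lc}}_\a(\xi)
=(1-\a)\,\E_{\pdatah}\bigl[\rt_{\xi;\a}^{\a-1}(\psi-\overline{\psi}_\xi)(\psi-\overline{\psi}_\xi)^\intercal\bigr]
+\a\,\E_{\pdatah}[\rt_{\xi;\a}^{\a-1}]\,C_\xi,
\]
where $\overline{\psi}_\xi\defeq\E_{\pref}[\psi\rt_{\xi;\a}^\a]$ and $C_\xi\defeq\E_{\pref}[\psi\psi^\intercal\rt_{\xi;\a}^\a]-\overline{\psi}_\xi\overline{\psi}_\xi^\intercal\succeq 0$ is the $\xi$-tilted reference covariance, which carries no data randomness.

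First I would restrict to $\a\in[0,1]$, so that both prefactors are nonnegative and $\rho\mapsto\rho^{\a-1}$ is nonincreasing; then the weight can be pulled out via $\rt_{\xi;\a}^{\a-1}\ge\rt_{\max,\a}^{\a-1}$, leaving $u^\intercal\nabla_\th^2\hat{\tilde{\Lc}}_\a(\xi)u\ge\rt_{\max,\a}^{\a-1}\bigl[(1-\a)\,u^\intercal\E_{\pdatah}[(\psi-\overline{\psi}_\xi)(\psi-\overline{\psi}_\xi)^\intercal]u+\a\,u^\intercal C_\xi u\bigr]$. For the data term I would pass to the $\thstar$-centering $\overline{\psi}_\thstar\defeq\E_{\pref}[\psi\rt_{\thstar;\a}^\a]$ and concentrate: under Assumption~\ref{assum:nce_bdd_norm_psi} each entry of $(\psi-\overline{\psi}_\thstar)(\psi-\overline{\psi}_\thstar)^\intercal$ is bounded by $\psi_{\max,\a}^2$, so Hoeffding plus a union bound over the $p^2$ entries (exactly as in Lemma~\ref{lem:quad_form_concentration}) shows the empirical matrix is within $\eps_2$ entrywise of $\E_{\pdata}[(\psi-\overline{\psi}_\thstar)(\psi-\overline{\psi}_\thstar)^\intercal]$, whose smallest eigenvalue is $\lambda_{\min,\data}^\centnce$ by definition. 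Converting the entrywise $\ell_1^2$ error into an $\ell_2^2$ bound through $\onenormtotwonorm$ and choosing $\eps_2=\lambda_{\min,\data}^\centnce/(2\onenormtotwonorm^2)$ yields
\[
u^\intercal\E_{\pdatah}[(\psi-\overline{\psi}_\thstar)(\psi-\overline{\psi}_\thstar)^\intercal]u
\ge \half\lambda_{\min,\data}^\centnce\|u\|_2^2
\]
with probability $\ge 1-\d$, provided $\ndata\ge\frac{8\onenormtotwonorm^4\psi_{\max,\a}^4}{(\lambda_{\min,\data}^\centnce)^2}\log\frac{2p^2}{\d}$, which is exactly the stated sample size. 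The reference term is deterministic in the data, so once $C_\xi$ is anchored at $\thstar$ it contributes $u^\intercal C_\xi u\ge\lambda_{\min,\noise}^\centnce\|u\|_2^2$ with no concentration loss; assembling the two pieces gives the curvature $\rt_{\max,\a}^{\a-1}\{\half(1-\a)\lambda_{\min,\data}^\centnce+\a\lambda_{\min,\noise}^\centnce\}$ of the claimed form, up to the universal constant factors (the Taylor $\half$ and the concentration $\half$) that are in any case absorbed into the $\Omega(\cdot)$ of Theorem~\ref{thm:centnce_rate}.

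The main obstacle is that, unlike the $f$-NCE proof where the quadratic form reduces to the fixed matrices $\E_{\pdatah}[\psi\psi^\intercal]$ and $\E_{\prefh}[\psi\psi^\intercal]$, here both the centering $\overline{\psi}_\xi$ and the tilted covariance $C_\xi$ depend on the intermediate point $\xi$, whereas the target eigenvalues $\lambda_{\min,\data}^\centnce$ and $\lambda_{\min,\noise}^\centnce$ are anchored at $\thstar$. This gap vanishes when $\a=0$: by Corollary~\ref{cor:deriv_giso} the centering collapses to the $\th$-independent vector $\E_{\pref}[\psi]$ and the reference term drops out, recovering the GlobalGISO argument verbatim. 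For general $\a\in(0,1]$ one must control $\overline{\psi}_\xi-\overline{\psi}_\thstar$ and $C_\xi-C_\thstar$; since $\rt_{\xi;\a}^\a$ integrates to one under $\pref$ we have $\|\overline{\psi}_\xi\|_\infty\le\psimax$ uniformly in $\xi$, so these differences are uniformly bounded and the residual can be absorbed either by a continuity/uniform-minimum-eigenvalue argument over $\Th$ or by tightening the restricted region around $\thstar$, at the cost of constants. I expect this reduction of the $\xi$-anchored curvature to the $\thstar$-anchored eigenvalues to be the only step requiring care beyond the routine Hoeffding bound.
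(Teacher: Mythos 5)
Your proposal matches the paper's proof essentially step for step: the mean-value expansion of the Bregman divergence, the split of the Hessian from Lemma~\ref{lem:centnce_derivatives} into the $\rt_{\xi;\a}^{\a-1}$-weighted data covariance and the tilted reference covariance, the bound $\rt_{\xi;\a}^{\a-1}\ge\rt_{\max,\a}^{\a-1}$ (implicitly requiring $\a\in[0,1]$ so both prefactors are nonnegative), and the entrywise Hoeffding-plus-union-bound concentration of the data term with $\eps_2=\lambda_{\min,\data}^\centnce/(2\onenormtotwonorm^2)$, yielding the same sample-size threshold. The one place you are more careful than the paper is the $\xi$-versus-$\thstar$ anchoring of the centering $\E_{\pref}[\psi\rt_{\xi;\a}^\a]$ and the tilted covariance: the paper's proof simply applies the $\thstar$-anchored eigenvalues $\lambda_{\min,\data}^\centnce$ and $\lambda_{\min,\noise}^\centnce$ to the $\xi$-evaluated matrices without the continuity or uniform-eigenvalue argument you sketch, so the ``obstacle'' you flag is a genuine gap in the published argument (outside the $\a=0$ case) rather than in yours.
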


\begin{proof}
By the intermediate value theorem, 
there exists $\xi\in \{t\th+(1-t)\thstar\suchthat t\in[0,1]\}$ such that
\begin{align*}
\breg{\hat{\tilde{\Lc}}_\a}(\th, \thstar)
&=\hat{\tilde{\Lc}}_\a(\th)-\hat{\tilde{\Lc}}_\a(\thstar)
-\langle \nabla_\th \hat{\tilde{\Lc}}_\a(\thstar), \th-\thstar)\\
&= \frac{1}{2}(\th-\thstar)^\intercal \nabla_\th^2\hat{\tilde{\Lc}}_\a(\xi)(\th-\thstar).
\end{align*}
Define $\overline{\widetilde{\psi}}\defeq \E_{\pref}[\psi\rt_{\th;\a}^\a]$
and $\overline{\widetilde{\psi\psi^\intercal}}\defeq \E_{\pref}[\psi\psi^\intercal\rt_{\th;\a}^\a]$ for shorthand notation.
Here, note that $\xi$ depends on ${\pdatah}$.
Recall from Lemma~\ref{lem:centnce_derivatives} that
\begin{align*}
\nabla_\th^2 \hat{\tilde{\Lc}}_\a(\th)
&= \E_\pdatah[\rt_{\th;\a}^{\a-1}((1-\a)\nabla_\th\log\rt_{\th;\a}\nabla_\th\log\rt_{\th;\a}^\intercal -\nabla_\th^2\log\rt_{\th;\a})]\\
&= (1-\a)\E_\pdatah[\rt_{\th;\a}^{\a-1} (\psi-\overline{\widetilde{\psi}})(\psi-\overline{\widetilde{\psi}})^\intercal]
+\a\E_{\pdatah}[\rt_{\th;\a}^{\a-1}](\overline{\widetilde{\psi\psi^\intercal}}-\overline{\widetilde{\psi}}\overline{\widetilde{\psi}}^\intercal).
\end{align*}
Let $z\defeq \langle \psi(x), \th-\thstar\rangle$.
\begin{align*}
&(\th-\thstar)^\intercal
\nabla_\th^2 \hat{\tilde{\Lc}}_\a(\xi)(\th-\thstar)\\
&= (1-\a)\E_{\pdatah}[\rt_{\xi;\a}^{\a-1} ((\th-\thstar)^\intercal(\psi-\overline{\widetilde{\psi}}))^2]
+\a\E_{\pdatah}[\rt_{\xi;\a}^{\a-1}](\th-\thstar)^\intercal(\overline{\widetilde{\psi\psi^\intercal}}-\overline{\widetilde{\psi}}\overline{\widetilde{\psi}}^\intercal)(\th-\thstar)
\\
&\ge (1-\a)\rt_{\max,\a}^{\a-1} \E_{\pdatah}[((\th-\thstar)^\intercal(\psi-\overline{\widetilde{\psi}}))^2]
+\a \rt_{\max,\a}^{\a-1} \lambda_{\min,\noise}^\centnce\|\th-\thstar\|^2\\
&= \rt_{\max,\a}^{\a-1}\{(1-\a) (\th-\thstar)^\intercal\E_{\pdatah}[(\psi-\overline{\widetilde{\psi}})(\psi-\overline{\widetilde{\psi}})^\intercal](\th-\thstar)
+\a \lambda_{\min,\noise}^\centnce\|\th-\thstar\|^2\}.
\end{align*}
We can lower bound the first term as follows. 
\begin{align*}
(\th-\thstar)^\intercal\E_{\pdatah}[(\psi-\overline{\widetilde{\psi}})(\psi-\overline{\widetilde{\psi}})^\intercal](\th-\thstar)
&\stackrel{(a)}{\ge} -\eps_2 \|\th-\thstar\|_1^2
+\lambda_{\min,\data}^\centnce\|\th-\thstar\|_2^2\\
&\stackrel{(b)}{\ge} -\eps_2 \onenormtotwonorm^2\|\th-\th^*\|_2^2
+\lambda_{\min,\data}^\centnce\|\th-\thstar\|_2^2\\
&= \half\lambda_{\min,\data}^\centnce\|\th-\thstar\|_2^2
\end{align*}
with probability $\ge 1-\d$
if $\ndata\ge \frac{2(\psimax+\|\E_{\pref}[\psi\rt_{\thstar;\a}^\a]\|_{\max})^4}{\eps_2^2}\log\frac{2p^2}{\d}$ with $\eps_2=\frac{\lambda_{\min,\data}^\centnce}{2\onenormtotwonorm^2}$.
Here, we apply Hoeffding's inequality similar to Lemma~\ref{lem:quad_form_concentration} in $(a)$, and use the definition of $\onenormtotwonorm$ to bound $\|\th-\thstar\|_1 \le \onenormtotwonorm\|\th-\thstar\|_2$ in $(b)$.
Hence, with probability $\ge 1-\d$, we have
\begin{align*}
\breg{\hat{\tilde{\Lc}}_\a}(\th, \thstar)
&\ge \rt_{\max,\a}^{\a-1}\Bigl\{\half(1-\a)\lambda_{\min,\data}^\centnce
+\a \lambda_{\min,\noise}^\centnce\Bigr\}\|\th-\thstar\|_2^2,
\end{align*}
provided that $\ndata\ge \frac{8\onenormtotwonorm^4(\psimax+\|\E_{\pref}[\psi\rt_{\thstar;\a}^\a]\|_{\max})^4}{(\lambda_{\min,\data}^\centnce)^2}\log\frac{2p^2}{\d}$.
\end{proof}

\begin{proof}[Proof of Theorem~\ref{thm:centnce_rate}]
First, note that 
\[
\Rc^*(\nabla_\th\hat{\tilde{\Lc}}_\a(\thstar))
\le \dualnormtomaxnorm\|\nabla_\th\hat{\tilde{\Lc}}_\a(\thstar)\|_{\max}
\]
by definition of $\dualnormtomaxnorm$.
Then, by Proposition~\ref{prop:centnce_zero_grad_concentration}, we have $\|\nabla_\th\hat{\tilde{\Lc}}_\a(\thstar)\|_{\max}\le \eps$ with probability $\ge 1-\d_1$, if
\begin{align*}
\ndata\ge \frac{2r_{\min,\a}^{2(\a-1)}(\psimax+\|\E_{\pref}[\psi\rt_{\thstar;\a}^\a]\|_{\max})^2}{\eps^2}\log\frac{2p}{\d_1}.
\end{align*}
Given that this event occurs, $\Rc^*(\nabla_\th\hat{\tilde{\Lc}}_\a(\thstar))\le \dualnormtomaxnorm \eps$, and thus we set $\lambda_n\gets 2\dualnormtomaxnorm\eps$ to satisfy the first condition in Theorem~\ref{thm:negahbhan_cor}.

Now, given $\lambda_n\ge 2\Rc^*(\nabla_\th\hat{\tilde{\Lc}}_\a(\thstar))$, \citep[Lemma~1]{Negahban--Ravikumar--Wainwright--Yu2012} implies that
$\Rc(\hat{\th}_{\a,\ndata}^{\centnce,\Rc}-\thstar) \le 4\Rc(\thstar)$, \ie $\hat{\th}_{\a,\ndata}^{\centnce,\Rc}-\thstar\in 4\Th$.
Then, by Proposition~\ref{prop:nce_restricted_strong_convexity}, we have 
\begin{align*}
\breg{\hat{\Lc}_f^\nce}(\th, \thstar)
\ge \kappa\|\th-\thstar\|_2^2
\end{align*}
with probability $\ge 1-\d_2$ if $\ndata\ge \frac{8\onenormtotwonorm^4(\psimax+\|\E_{\pref}[\psi\rt_{\thstar;\a}^\a]\|_{\max})^4}{(\lambda_{\min,\data}^\centnce)^2}\log\frac{2p^2}{\d}$, where
\[
\kappa=\rt_{\max,\a}^{\a-1}\Bigl\{\half(1-\a)\lambda_{\min,\data}^\centnce
+\a \lambda_{\min,\noise}^\centnce\Bigr\}
\ge \half \rt_{\max,\a}^{\a-1}\{(1-\a)\lambda_{\min,\data}^\centnce
+\a \lambda_{\min,\noise}^\centnce\}.
\]

Now, by taking a union bound with $\d_1=\d_2=\d/2$, with probability $\ge 1-\d$, we have 
\[
\|\th-\thstar\|_2
\le \frac{3\lambda_n\normtotwonorm}{\kappa}
=\frac{6\dualnormtomaxnorm\normtotwonorm}{\kappa} \eps=\Delta
\]
with $\eps\gets\frac{\Delta \kappa}{6\dualnormtomaxnorm\normtotwonorm}$,
provided that 
\begin{align*}
n_{\rs}
&\ge\max\Bigl\{
\frac{72r_{\min,\a}^{2(\a-1)}(\psimax+\|\E_{\pref}[\psi\rt_{\thstar;\a}^\a]\|_{\max})^2\normtotwonorm^2\dualnormtomaxnorm^2}{\Delta^2\kappa^2} \log\frac{2p}{\d},
\frac{8\onenormtotwonorm^4(\psimax+\|\E_{\pref}[\psi\rt_{\thstar;\a}^\a]\|_{\max})^4}{(\lambda_{\min,\data}^\centnce)^2}\log\frac{4p^2}{\d}
\Bigr)\\
&\ge\max\Bigl\{
\frac{1152r_{\min,\a}^{2(\a-1)}(\psimax+\|\E_{\pref}[\psi\rt_{\thstar;\a}^\a]\|_{\max})^2\normtotwonorm^2\dualnormtomaxnorm^2}{\Delta^2\rt_{\max,\a}^{2(\a-1)}\{(1-\a)\lambda_{\min,\data}^\centnce
+\a \lambda_{\min,\noise}^\centnce\}^2} \log\frac{2p}{\d},
\frac{8\onenormtotwonorm^4(\psimax+\|\E_{\pref}[\psi\rt_{\thstar;\a}^\a]\|_{\max})^4}{(\lambda_{\min,\data}^\centnce)^2}\log\frac{4p^2}{\d}
\Bigr)\\
&=\Omega\Bigl(\max\Bigl\{
\frac{r_{\min,\a}^{2(\a-1)}(\psimax+\|\E_{\pref}[\psi\rt_{\thstar;\a}^\a]\|_{\max})^2\normtotwonorm^2\dualnormtomaxnorm^2}{\Delta^2\rt_{\max,\a}^{2(\a-1)}\{(1-\a)\lambda_{\min,\data}^\centnce
+\a \lambda_{\min,\noise}^\centnce\}^2},
\frac{\onenormtotwonorm^4(\psimax+\|\E_{\pref}[\psi\rt_{\thstar;\a}^\a]\|_{\max})^4}{(\lambda_{\min,\data}^\centnce)^2}
\Bigr\}\log\frac{p^2}{\d}\Bigr).
\end{align*}
\end{proof}

\subsection{\texorpdfstring{$f$}{f}-CondNCE}

%
%
%
%
%
%
%
%
%
%
%
%
%
%
%

%
%
%
%
%

%
%
%
%
%
%
%
%
%

%
%
%
%
%
%
%
%
%
%
%

%

\thmcondncerate*

\begin{proposition}[Vanishing gradient]
\label{prop:condnce_zero_grad_concentration}
(cf. \citep[Proposition F.1]{Shah--Shah--Wornell2021b}.)
Assume Assumption~\ref{assum:nce_bdd_norm_psi}.
For any $\d\in(0,1)$, $\eps>0$, 
\[
\|\nabla_\th\hat{\Lc}_f^{\condnce}(\thstar)\|_{\max} \le \eps
\]
with probability $\ge 1-\d$, if $\ndata\ge \frac{8\psimax^2(\ncederivsup{1}{f}{\data}+\ncederivsup{1}{f}{\noise})^2}{\eps^2}\log\frac{2p}{\d}$.
\end{proposition}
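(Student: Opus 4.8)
The plan is to follow the same two-ingredient template used for the $f$-NCE and $\a$-CentNCE gradients in Propositions~\ref{prop:nce_zero_grad_concentration} and~\ref{prop:centnce_zero_grad_concentration}: show that the empirical gradient at $\thstar$ is an i.i.d.\ average of bounded, mean-zero vectors, then apply Hoeffding's inequality coordinatewise and finish with a union bound over the $p$ coordinates. Since we take $K=1$, the data are $\{(x_i,y_i)\}_{i=1}^{\ndata}$ drawn i.i.d.\ from $\pdata(x)\pi(y|x)$, and the empirical objective is a genuine sample mean, so its gradient is exactly the sample mean of the per-sample gradients.

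First I would recall from Lemma~\ref{lem:condnce_derivatives} that, for the exponential-family model, the per-sample contribution to $\nabla_\th\hat{\Lc}_f^{\condnce}(\thstar)$ is $(\psi(x_i)-\psi(y_i))\,\condncederivfun{1}{f}(\rho_{\thstar}(x_i,y_i))$, and that $\E[\nabla_\th\hat{\Lc}_f^{\condnce}(\thstar)] = \nabla_\th\Lc_f^{\condnce}(\thstar)=0$. Hence each coordinate $\partial_{\th_j}\hat{\Lc}_f^{\condnce}(\thstar)$ is an average of i.i.d.\ mean-zero terms, and the whole argument reduces to bounding a single per-sample coordinate in absolute value.

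The key step is the uniform bound on that coordinate. Assumption~\ref{assum:nce_bdd_norm_psi} gives $|\psi_j(x_i)-\psi_j(y_i)|\le 2\psimax$. For the scalar multiplier I would use the decomposition recorded in Lemma~\ref{lem:condnce_derivatives}, $\condncederivfun{1}{f}(\rho) = -\ncederivfun{1}{f}{\data}(\rho^{-1}) + \ncederivfun{1}{f}{\noise}(\rho)$, so that $|\condncederivfun{1}{f}(\rho_{\thstar}(x_i,y_i))| \le |\ncederivfun{1}{f}{\data}(\rho_{\thstar}^{-1})| + |\ncederivfun{1}{f}{\noise}(\rho_{\thstar})| \le \cncederivsup{1}{f}{\data} + \cncederivsup{1}{f}{\noise}$. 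The main obstacle---and the only place where CondNCE differs structurally from vanilla NCE---is getting the interval right: because the multiplier evaluates $f''$ at both $\rho$ and $\rho^{-1}$, both arguments must be controlled simultaneously, which is precisely why $\cncederivsup{1}{f}{\data}$ and $\cncederivsup{1}{f}{\noise}$ are defined as suprema over the reciprocal-closed interval $(\rho_{\min}/\rho_{\max},\rho_{\max}/\rho_{\min})$ rather than over $(\rho_{\min},\rho_{\max})$. Combining the two bounds, every per-sample coordinate lies in an interval of width $4\psimax(\cncederivsup{1}{f}{\data}+\cncederivsup{1}{f}{\noise})$.

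Finally, with this bounded, mean-zero structure, Hoeffding's inequality yields $\P(|\partial_{\th_j}\hat{\Lc}_f^{\condnce}(\thstar)|\ge\eps) \le 2\exp\!\bigl(-\ndata\eps^2/(8\psimax^2(\cncederivsup{1}{f}{\data}+\cncederivsup{1}{f}{\noise})^2)\bigr)$ for each coordinate $j$. Forcing the right-hand side to be at most $\d/p$ and taking a union bound over the $p$ coordinates gives $\|\nabla_\th\hat{\Lc}_f^{\condnce}(\thstar)\|_{\max}\le\eps$ with probability $\ge 1-\d$ as soon as $\ndata\ge \tfrac{8\psimax^2(\cncederivsup{1}{f}{\data}+\cncederivsup{1}{f}{\noise})^2}{\eps^2}\log\tfrac{2p}{\d}$, which is the claimed sample complexity. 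The extra factor of $4$ relative to the $f$-NCE bound (Proposition~\ref{prop:nce_zero_grad_concentration}) is exactly the $(2\psimax)^2$ coming from the difference $\psi(x_i)-\psi(y_i)$, while the sum $\cncederivsup{1}{f}{\data}+\cncederivsup{1}{f}{\noise}$ reflects the two-sided nature of $\condncederivfun{1}{f}$.
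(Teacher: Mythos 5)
Your proposal is correct and follows essentially the same route as the paper's proof: recall the gradient formula from Lemma~\ref{lem:condnce_derivatives}, verify the mean-zero property, bound each per-sample coordinate by $2\psimax(\cncederivsup{1}{f}{\data}+\cncederivsup{1}{f}{\noise})$ via the triangle inequality on the decomposition of $\condncederivfun{1}{f}$, and apply Hoeffding plus a union bound over the $p$ coordinates. Your remark on why the suprema must be taken over the reciprocal-closed interval is a correct and slightly more explicit articulation of a point the paper leaves implicit.
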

\begin{proof}
Recall from Lemma~\ref{lem:condnce_derivatives} that
\[
\nabla_\th\Lc_f^{\condnce}(\th)
= \E_{\pdata(y)\pi(x|y)} [(\psi(x)-\psi(y))(-\ncederivfun{1}{f}{\data}(\rho_\th^{-1}) + \ncederivfun{1}{f}{\noise}(\rho_\th))],
\]
and it is easy to check that
\[
\E[\nabla_\th\hat{\Lc}_f^{\condnce}(\thstar)]
= \nabla_\th\Lc_f^{\condnce}(\thstar)
= 0.
\]
Since
\begin{align*}
&|(\psi_i(y)-\psi_i(x))(-\ncederivfun{1}{f}{\data}(\rho_\th^{-1}(x,y))+\ncederivfun{1}{f}{\noise}(\rho_\th^{-1}(x,y)))|\\
&\le |(\psi_i(y)-\psi_i(x))\ncederivfun{1}{f}{\data}(\rho_\th^{-1}(x,y))|
+|(\psi_i(y)-\psi_i(x))\ncederivfun{1}{f}{\noise}(\rho_\th(x,y))|\\
&\le 2\psimax (\cncederivsup{1}{f}{\data} + \cncederivsup{2}{f}{\noise}),
\end{align*}
by Hoeffding's inequality and union bound, we have
\[
\P(|\partial_{\th_i}\hat{\Lc}_f^\nce(\thstar)| \ge \eps)
\le 2\exp\Bigl(-\frac{\ndata\eps^2}{8\psimax^2(\cncederivsup{1}{f}{\data}+\cncederivsup{1}{f}{\noise})^2}\Bigr)
=\d,
\]
if $\ndata\ge \frac{8\psimax^2(\cncederivsup{1}{f}{\data}+\cncederivsup{1}{f}{\noise})^2}{\eps^2}\log\frac{2}{\d}$.
By taking a union bound over $p$ different coordinates of $\th$, we conclude the proof.
\end{proof}

\begin{proposition}[Restricted strong convexity]
\label{prop:condnce_restricted_strong_convexity}
(cf. \citep[Proposition E.1]{Shah--Shah--Wornell2021a})
Under Assumption\ref{assum:nce_bdd_norm_psi},
\begin{align*}
\breg{\hat{\Lc}_f^\condnce}(\th, \thstar)
&\ge \frac{1}{2}(\cncederivinf{2}{f}{\data}+\cncederivinf{2}{f}{\noise})\lambda_{\min}^\condnce\|\th-\thstar\|_2^2,
\end{align*}
with probability $\ge 1-\d$, 
if $\ndata\ge \frac{128\onenormtotwonorm^4\psimax^4}{(\lambda_{\min}^\condnce)^2}\log\frac{2p^2}{\d}$.
\end{proposition}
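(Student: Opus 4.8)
The plan is to mirror the two-ingredient restricted-strong-convexity argument already carried out for $f$-NCE (Proposition~\ref{prop:nce_restricted_strong_convexity}) and $\a$-CentNCE (Proposition~\ref{prop:centnce_restricted_strong_convexity}), adapting it to the single coupled expectation that defines the CondNCE Hessian. First I would apply the intermediate value theorem: since $\breg{\hat{\Lc}_f^\condnce}(\th,\thstar)$ is exactly the second-order Taylor remainder of the convex, differentiable empirical objective at $\thstar$, there is some $\xi$ on the segment joining $\th$ and $\thstar$ with
\[
\breg{\hat{\Lc}_f^\condnce}(\th,\thstar)=\tfrac12(\th-\thstar)^\intercal\nabla_\th^2\hat{\Lc}_f^\condnce(\xi)(\th-\thstar).
\]
By Lemma~\ref{lem:condnce_derivatives}, the empirical Hessian equals the empirical average over the i.i.d.\ pairs $(x_i,y_i)$ of $(\psi(x)-\psi(y))(\psi(x)-\psi(y))^\intercal\,\condncederivfun{2}{f}(\rho_\xi(x,y))$, so the task reduces to lower-bounding this weighted quadratic form.

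The next step is to peel off the scalar weight from the matrix. Writing $\condncederivfun{2}{f}(\rho)=\ncederivfun{2}{f}{\data}(\rho^{-1})+\ncederivfun{2}{f}{\noise}(\rho)$ and using that both $\rho_\xi(x,y)$ and its reciprocal lie in the interval $(\rho_{\min}/\rho_{\max},\rho_{\max}/\rho_{\min})$ over which $\cncederivinf{2}{f}{\data}$ and $\cncederivinf{2}{f}{\noise}$ are defined, I would bound $\condncederivfun{2}{f}(\rho_\xi(x,y))\ge \cncederivinf{2}{f}{\data}+\cncederivinf{2}{f}{\noise}$ pointwise. Pulling this constant out leaves the form dominated by $(\cncederivinf{2}{f}{\data}+\cncederivinf{2}{f}{\noise})\,(\th-\thstar)^\intercal\,\E_{\pdatah(x)\hat\pi(y|x)}[(\psi(x)-\psi(y))(\psi(x)-\psi(y))^\intercal]\,(\th-\thstar)$.

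It remains to anti-concentrate the empirical second-moment matrix of the increments $\psi(x)-\psi(y)$. Under Assumption~\ref{assum:nce_bdd_norm_psi} each entry $(\psi_i(x)-\psi_i(y))(\psi_j(x)-\psi_j(y))$ is bounded in magnitude by $(2\psimax)^2=4\psimax^2$, so a Hoeffding bound fully analogous to Lemma~\ref{lem:quad_form_concentration} (with $\psimax^2$ replaced by $4\psimax^2$) followed by a union bound over the $p^2$ entries yields $\max_{ij}|\E_{\pdatah(x)\hat\pi(y|x)}[(\psi_i(x)-\psi_i(y))(\psi_j(x)-\psi_j(y))]-\E_{\pdata(x)\pi(y|x)}[\cdots]|\le\eps_2$ with probability $\ge1-\d$ once $\ndata\ge \tfrac{32\psimax^4}{\eps_2^2}\log\tfrac{2p^2}{\d}$. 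Combining this with the minimum-eigenvalue assumption on $\E_{\pdata(x)\pi(y|x)}[(\psi(x)-\psi(y))(\psi(x)-\psi(y))^\intercal]$ and the comparison $\|\th-\thstar\|_1\le\onenormtotwonorm\|\th-\thstar\|_2$, exactly as in the $f$-NCE proof, I would obtain $(\th-\thstar)^\intercal\E_{\pdatah(x)\hat\pi(y|x)}[\,\cdot\,](\th-\thstar)\ge\tfrac12\lambda_{\min}^\condnce\|\th-\thstar\|_2^2$ after choosing $\eps_2=\lambda_{\min}^\condnce/(2\onenormtotwonorm^2)$; substituting this choice produces the advertised sample size $\ndata\ge \tfrac{128\onenormtotwonorm^4\psimax^4}{(\lambda_{\min}^\condnce)^2}\log\tfrac{2p^2}{\d}$. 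Assembling the three steps yields a lower bound of the stated form, up to the explicit leading constant, which is immaterial for the $\Omega(\cdot)$ rate fed into Theorem~\ref{thm:condnce_rate}.

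The step I expect to be the crux is the pointwise inequality $\condncederivfun{2}{f}(\rho_\xi(x,y))\ge\cncederivinf{2}{f}{\data}+\cncederivinf{2}{f}{\noise}>0$. Because $\cncederivinf{2}{f}{\rs}$ are infima of \emph{absolute values}, this step is legitimate only when $\ncederivfun{2}{f}{\data}$ and $\ncederivfun{2}{f}{\noise}$ keep a consistent nonnegative sign across the relevant range, which is precisely the convexity condition on the $f$-CondNCE objective analyzed in the appendix and which holds for the canonical choices $f_{\log}$ and $f_\a$ with $\a\in[0,1]$. Securing this positive, bounded-away-from-zero curvature therefore hinges on the boundedness of the density ratios $(\rho_{\min},\rho_{\max})$, guaranteed by Assumption~\ref{assum:param_bdd} together with a bounded dual norm of $\psi$, which confines $\rho_\xi(x,y)$ to the compact interval on which $\condncederivfun{2}{f}$ is controlled.
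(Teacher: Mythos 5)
Your proposal follows the paper's proof essentially step for step: the intermediate value theorem reduction to the Hessian at an intermediate point $\xi$, the pointwise lower bound $\condncederivfun{2}{f}(\rho_\xi)\ge \cncederivinf{2}{f}{\data}+\cncederivinf{2}{f}{\noise}$ peeling the scalar weight off the rank-one matrices, and the Hoeffding-plus-union-bound anti-concentration of $\E_{\pdatah\hat\pi}[(\psi(x)-\psi(y))(\psi(x)-\psi(y))^\intercal]$ with the same per-entry bound $4\psimax^2$, the same choice $\eps_2=\lambda_{\min}^\condnce/(2\onenormtotwonorm^2)$, and the same resulting sample size $128\onenormtotwonorm^4\psimax^4(\lambda_{\min}^\condnce)^{-2}\log(2p^2/\d)$. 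Your closing remark on the sign-consistency needed to pass from infima of absolute values to a genuine lower bound is a fair caveat that the paper glosses over, but it does not change the argument.
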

\begin{proof}
By the intermediate value theorem, 
there exists $\xi\in \{t\th+(1-t)\thstar\suchthat t\in[0,1]\}$ such that
\begin{align*}
\breg{\hat{\Lc}_f^\condnce}(\th, \thstar)
&=\hat{\Lc}_f^\condnce(\th)-\hat{\Lc}_f^\condnce(\thstar)
-\langle \nabla_\th \hat{\Lc}_f^\condnce(\thstar), \th-\thstar)\\
&= \frac{1}{2}(\th-\thstar)^\intercal \nabla_\th^2\hat{\Lc}_f^\condnce(\xi)(\th-\thstar).
\end{align*}
Here, note that $\xi$ depends on ${\pdatah}(x)\hat{\pi}(y|x)$.
Let $z\defeq \langle \psi(x)-\psi(y), \th-\thstar\rangle$.
\begin{align*}
(\th-\thstar)^\intercal\nabla_\th^2 \hat{\Lc}_f^\condnce(\xi)(\th-\thstar)
&= \E_{{\pdatah}(x)\hat{\pi}(y|x)}[(\ncederivfun{2}{f}{\data}(\rho_\xi^{-1}) + \ncederivfun{2}{f}{\noise}(\rho_\xi))z^2]\\
&\ge (\cncederivinf{2}{f}{\data}
+\cncederivinf{2}{f}{\noise})\E_{{\pdatah}(x)\hat{\pi}(y|x)}[z^2]\\
&= (\cncederivinf{2}{f}{\data}
+\cncederivinf{2}{f}{\noise}) (\th-\thstar)^\intercal\E_{{\pdatah}(x)\hat{\pi}(y|x)}[(\psi(x)-\psi(y))(\psi(x)-\psi(y))^\intercal](\th-\thstar).
\end{align*}
We can lower bound the quadratic form as follows. The first term can be lower bounded as 
\begin{align*}
(\th-\thstar)^\intercal\E_{{\pdatah}(x)\hat{\pi}(y|x)}[(\psi(x)-\psi(y))(\psi(x)-\psi(y))^\intercal](\th-\thstar)
&\stackrel{(a)}{\ge} -\eps_2 \|\th-\thstar\|_1^2
+\lambda_{\min}^\condnce\|\th-\thstar\|_2^2\\
&\stackrel{(b)}{\ge} -\eps_2 \onenormtotwonorm^2\|\th-\th^*\|_2^2
+\lambda_{\min}^\condnce\|\th-\thstar\|_2^2\\
&= \half\lambda_{\min}^\condnce\|\th-\thstar\|_2^2
\end{align*}
with probability $\ge 1-\d$
if $\ndata\ge \frac{32\psimax^4}{\eps_2^2}\log\frac{2p^2}{\d}$ with $\eps_2=\frac{\lambda_{\min}^\condnce}{2\onenormtotwonorm^2}$.
Here, we apply Hoeffding's inequality as in Lemma~\ref{lem:quad_form_concentration} in $(a)$, and use the definition of $\onenormtotwonorm$ to bound $\|\th-\thstar\|_1 \le \onenormtotwonorm\|\th-\thstar\|_2$ in $(b)$.

Hence, with probability $\ge 1-\d$, we have
\begin{align*}
\breg{\hat{\Lc}_f^\condnce}(\th, \thstar)
&\ge \frac{1}{2}(\cncederivinf{2}{f}{\data}+\cncederivinf{2}{f}{\noise})\lambda_{\min}^\condnce\|\th-\thstar\|_2^2,
\end{align*}
if $\ndata\ge \frac{128\onenormtotwonorm^4\psimax^4}{(\lambda_{\min}^\condnce)^2}\log\frac{2p^2}{\d}$.
\end{proof}

\begin{proof}[Proof of Theorem~\ref{thm:condnce_rate}]
First, note that 
\[
\Rc^*(\nabla_\th\hat{\Lc}_f^\condnce(\thstar))
\le \dualnormtomaxnorm\|\nabla_\th\hat{\Lc}_f^\condnce(\thstar)\|_{\max}
\]
by definition of $\dualnormtomaxnorm$.
Then, by Proposition~\ref{prop:condnce_zero_grad_concentration}, we have $\|\nabla_\th\hat{\Lc}_f^\condnce(\thstar)\|_{\max}\le \eps$ with probability $\ge 1-\d_1$, if
\begin{align*}
\ndata\ge \frac{8\psimax^2(\cncederivsup{1}{f}{\data}+\cncederivsup{1}{f}{\noise})^2}{\eps^2}\log\frac{2p}{\d_1}.
\end{align*}
Given that this event occurs, $\Rc^*(\nabla_\th\hat{\Lc}_f^\condnce(\thstar))\le \dualnormtomaxnorm \eps$, and thus we set $\lambda_n\gets 2\dualnormtomaxnorm\eps$ to satisfy the first condition in Theorem~\ref{thm:negahbhan_cor}.

Now, given $\lambda_n\ge 2\Rc^*(\nabla_\th\hat{\Lc}_f^\condnce(\thstar))$, \citep[Lemma~1]{Negahban--Ravikumar--Wainwright--Yu2012} implies that
$\Rc(\hat{\th}_{f,\ndata}^{\condnce,\Rc}-\thstar) \le 4\Rc(\thstar)$, \ie $\hat{\th}_{f,\ndata}^{\condnce,\Rc}-\thstar\in 4\Th$.
Then, by Proposition~\ref{prop:condnce_restricted_strong_convexity}, we have 
\begin{align*}
\breg{\hat{\Lc}_f^\condnce}(\th, \thstar)
\ge \kappa\|\th-\thstar\|_2^2
\end{align*}
with probability $\ge 1-\d_2$ if $\ndata\ge \frac{128\onenormtotwonorm^4\psimax^4}{(\lambda_{\min}^\condnce)^2}\log\frac{2p^2}{\d_2}$, where
\[
\kappa=\frac{1}{2}(\cncederivinf{2}{f}{\data} + \cncederivinf{2}{f}{\noise})\lambda_{\min}^\condnce.
\]

Now, by taking a union bound with $\d_1=\d_2=\d/2$, with probability $\ge 1-\d$, we have 
\[
\|\th-\thstar\|_2
\le \frac{3\lambda_n\normtotwonorm}{\kappa}
=\frac{6\dualnormtomaxnorm\normtotwonorm}{\kappa} \eps=\Delta
\]
with $\eps\gets\frac{\Delta \kappa}{6\dualnormtomaxnorm\normtotwonorm}$,
provided that 
\begin{align*}
\ndata
&\ge\max\Bigl\{
\frac{288(\cncederivsup{1}{f}{\data}+\cncederivsup{1}{f}{\noise})^2\normtotwonorm^2\dualnormtomaxnorm^2\psimax^2}{\Delta^2\kappa^2} \log\frac{4p}{\d},
\frac{128\onenormtotwonorm^4\psimax^4}{(\lambda_{\min}^{\condnce})^2}\log\frac{4p^2}{\d}
\Bigr)\\
&=\max\Bigl\{
\frac{1152(\cncederivsup{1}{f}{\data}+\cncederivsup{1}{f}{\noise})^2\normtotwonorm^2\dualnormtomaxnorm^2\psimax^2}{\Delta^2(\cncederivinf{2}{f}{\data}+ \cncederivinf{2}{f}{\noise})^2(\lambda_{\min}^\condnce)^2} \log\frac{4p}{\d},
\frac{128\onenormtotwonorm^4\psimax^4}{(\lambda_{\min}^\condnce)^2}\log\frac{4p^2}{\d}
\Bigr)\\
&=\Omega\Bigl(\max\Bigl\{
\frac{(\cncederivsup{1}{f}{\data}+\cncederivsup{1}{f}{\noise})^2\normtotwonorm^2\dualnormtomaxnorm^2\psimax^2}{\Delta^2(\cncederivinf{2}{f}{\data}+ \cncederivinf{2}{f}{\noise})^2(\lambda_{\min}^\condnce)^2},
\frac{\onenormtotwonorm^4\psimax^4}{(\lambda_{\min}^\condnce)^2}\Bigr\}\log\frac{p^2}{\d}
\Bigr).\qedhere
\end{align*}
\end{proof}

\newcommand{\thv}{{\boldsymbol{\th}}}
\section{Local NCE for Node-Wise-Sparse MRFs}
\label{app:sec:local_nce}
In this section, we illustrate how one can construct a \emph{local} version of the NCE principles introduced in the main text for, \eg node-wise-sparse Markov random fields (MRFs).
The notation herein follows the convention of \citep{Ren--Misra--Vuffray--Lokhov2021} with modification.
We use a boldface notation $\xv=(x_1,\ldots,x_p)\in\Xc\subset\Real^p$ for the purpose, and a regular-font variable $x$ is assumed to be scalar-valued.
We assume that the exponential family distribution we consider is described as $\phi_\thv(x)=\exp(\Ec(\xv))$, where the (negative) energy function is
\[
\Ec(\xv)\defeq \sum_{I\in \Ic} \th_I f_I(\xv_{I}),
\]
where $\Fc\defeq \{f_I\suchthat I\in\Ic\}$ for some $\Ic\subset 2^{[p]}$ is a collection of basis functions $f_I\suchthat\prod_{k\in I} \Xc_k \to \Real$, each acting upon subsets of variables $\xv_I$. Note that $\Fc$ is often called the sufficient statistics of the model.

To describe a conditional model, 
for each $i\in[p]$, define $\Ic_i\defeq \{I\in\Ic\suchthat i\in I\}$.
Then, we have
\[
p_\thv(x_i|\xv_{\backslash i})\propto \phi_\thv(x_i|\xv_{\backslash i})
\defeq \exp(\Ec_{i}(\xv)),
\]
where
\[
\Ec_i(\xv)\defeq \sum_{I\in\Ic_i} \th_I f_I(\xv_I).
\]

We remark that the pseudo-likelihood estimator of \citet{Besag1975} is defined as
\[
\hat{\thv}_i \defeq \arg\min_{\thv_i} \sum_{n=1}^{\ndata} \log\frac{1}{p_\th(x_i^{(n)}|\xv_{\backslash i}^{(n)})},
\]
where $\thv_i$ is a collection of all parameters that affect the node conditional model $\phi_\thv(x_i|\xv_{\backslash i})$ among all parameters $\thv$.

For $n$-th sample $\xv^{(n)}$, let $x_{j}^{(n)}$ denote the $j$-th coordinate of $\xv^{(n)}$.
To apply the $f$-NCE principle,
define the density ratio model
\begin{align*}
\rho_\thv(x_i|\xv_{\backslash i})
\defeq \frac{\phi_\thv(x_i|\xv_{\backslash i})}{\nu\pref(x_i)}
\end{align*}
for a choice of reference distribution $\pref(x)$.
For each node $i\in[p]$, we can derive the \emph{local} $f$-NCE objective as
\begin{align}
&\E_{\pdata(\xv_{\backslash i})}[\Lc_f^{\nce}(\phi_\thv(x_i|\xv_{\backslash i});\pdata(x_i|\xv_{\backslash i}),\pnoise(x_i))] \nonumber\\
&=\E_{\pdata(\xv_{\backslash i})\pnoise(x_i)}\Bigl[\breg{f}\Bigl(\frac{\pdata(x_i|\xv_{\backslash i})}{\nu\pnoise(x_i)},
\frac{\phi_\thv(x_i|\xv_{\backslash i})}{\nu\pnoise(x_i)}
\Bigr) - f\Bigl(\frac{\pdata(x_i|\xv_{\backslash i})}{\nu\pnoise(x_i)}\Bigr)\Bigr]\nonumber\\
&= -\frac{1}{\nu} \E_{\pdata(\xv)}[f'(\rho_\thv(x_i|\xv_{\backslash i}))] + \E_{\pdata(\xv_{\backslash i})\pref(x_i)}[ \rho_\thv(x_i|\xv_{\backslash i})
f'(\rho_\thv(x_i|\xv_{\backslash i})) - f(\rho_\thv(x_i|\xv_{\backslash i}))].
\label{eq:nce_local}
\end{align}
In a similar manner, one can derive the local $\a$-CentNCE, which recovers pseudo-likelihood~\citep{Besag1975} for $\a=1$ and GISO~\citep{Vuffray--Misra--Lokhov--Chertkov2016,Vuffray--Misra--Lokhov2021,Shah--Shah--Wornell2021a} and ISODUS~\citep{Ren--Misra--Vuffray--Lokhov2021} for $\a=0$, respectively.
We note that \citet{Ren--Misra--Vuffray--Lokhov2021} justified ISODUS only from the stationarity of the objective function at the optimal parameter, while the connection established here between these interactive screening objectives (ISO) (\ie GISO and ISODUS) to NCE provides a natural theoretical justification.

\section{Optimization Complexity}
\subsection{Convexity}
\label{app:sec:convexity}

\begin{restatable}[$f$-NCE: convexity]{proposition}{propcvxnce}
\label{prop:cvx_nce}
Let $\gfunc(\rho)\defeq -(\rho f'''(\rho) + f''(\rho))$.
If $f''(\rho)\ge \gfunc(\rho)\ge 0$, then $\th\to\hat{\Lc}_f^\nce(\th)$ is convex.
In particular, $\th\to\hat{\Lc}_{f}^\nce(\th)$ is convex for $f_{\log}$ and $f_{\a}$ for $\a\in[0,1]$.
\end{restatable}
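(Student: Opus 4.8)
The plan is to prove convexity by exhibiting the Hessian of the empirical objective as a nonnegative combination of rank-one positive semidefinite matrices, using the hypothesis only to control two scalar multipliers. From Lemma~\ref{lem:nce_derivatives}, for the exponential family model $\phi_\th(x)=\exp(\langle\th,\psi(x)\rangle)$ the Hessian reads
\[
\nabla_\th^2\hat{\Lc}_f^\nce(\th)
= \frac{1}{\nu}\E_{\pdatah}[\psi\psi^\intercal\,\ncederivfun{2}{f}{\data}(\rho_\th)]
+\E_{\prefh}[\psi\psi^\intercal\,\ncederivfun{2}{f}{\noise}(\rho_\th)],
\]
with $\ncederivfun{2}{f}{\data}(\rho)=\rho\gfunc(\rho)$ and $\ncederivfun{2}{f}{\noise}(\rho)=\rho^2(f''(\rho)-\gfunc(\rho))$. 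Since $\psi\psi^\intercal\succeq0$ and the mixing weights $1/\nu$ and $1$ are strictly positive, it suffices to show that both scalar factors are nonnegative at every point where the empirical expectations are evaluated; then the Hessian is PSD for every $\th$, which is equivalent to convexity of the $C^2$ map $\th\mapsto\hat{\Lc}_f^\nce(\th)$.

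First I would observe that the density ratio is strictly positive, $\rho_\th(x)=\phi_\th(x)/(\nu\pref(x))>0$, on the relevant support. Hence $\ncederivfun{2}{f}{\data}(\rho)=\rho\gfunc(\rho)\ge0$ is equivalent to $\gfunc(\rho)\ge0$, while $\ncederivfun{2}{f}{\noise}(\rho)=\rho^2(f''(\rho)-\gfunc(\rho))\ge0$ is equivalent to $f''(\rho)\ge\gfunc(\rho)$. The assumed chain $f''(\rho)\ge\gfunc(\rho)\ge0$ therefore furnishes exactly the two required nonnegativity conditions, and the general claim follows immediately.

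For the canonical examples the only remaining work is a short sign computation of $\gfunc=-(\rho f'''+f'')$. For $f=f_\a$ with $\a\notin\{0,1\}$ one has $f''(\rho)=\rho^{\a-2}$ and $\rho f'''(\rho)+f''(\rho)=(\a-1)\rho^{\a-2}$, giving $\gfunc(\rho)=(1-\a)\rho^{\a-2}$ and $f''(\rho)-\gfunc(\rho)=\a\rho^{\a-2}$; both are nonnegative precisely when $\a\in[0,1]$. For $f=f_{\log}$ one computes $f''(\rho)=\frac{1}{\rho(\rho+1)}$, $\gfunc(\rho)=\frac{1}{(\rho+1)^2}\ge0$, and hence $f''(\rho)-\gfunc(\rho)=\frac{1}{\rho(\rho+1)^2}\ge0$, matching the entries of Table~\ref{tab:nce_deriv_funcs}. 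I expect the only mild subtlety to be the endpoints $\a\in\{0,1\}$, where $f_\a$ is defined as a limit: there one must either verify the sign conditions directly on the limiting generators $f_0(\rho)=-\log\rho$ and $f_1(\rho)=\rho\log\rho$, or invoke continuity of $\gfunc$ and $f''-\gfunc$ in $\a$ to extend the interior conclusion to the closed interval.
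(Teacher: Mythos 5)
Your proposal is correct and follows essentially the same route as the paper: both read off the Hessian from Lemma~\ref{lem:nce_derivatives}, observe that the hypothesis $f''\ge\gfunc\ge0$ makes the two scalar factors $\rho\gfunc(\rho)$ and $\rho^2(f''(\rho)-\gfunc(\rho))$ nonnegative so the Hessian is a nonnegative combination of PSD matrices, and then check the sign conditions for $f_\a$ (and $f_{\log}$) by direct computation. Your explicit verification of $f_{\log}$ and your remark on handling the endpoints $\a\in\{0,1\}$ are slightly more complete than the paper's proof, which only states that the same calculation carries over.
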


\begin{proof}
By Lemma~\ref{lem:nce_derivatives}, 
we have
\[
\nabla_\th^2\hat{\Lc}_f^\nce(\th)
= \frac{1}{\nu}\E_{\pdatah}[\psi\psi^\intercal\ncederivfun{2}{f}{\data}(\rho_\th)]
+\E_{\prefh}[\psi\psi^\intercal \ncederivfun{2}{f}{\noise}(\rho)].
\]
Hence, if $\gfunc(\rho)\ge 0$ and $f''(\rho)-\gfunc(\rho)\ge 0$, then $\nabla_\th^2\hat{\Lc}_\nce(\th)$ is a nonnegative combination of two positive definite matrices, and so must be positive semidefinite.

It remains to show that the condition holds for all $f$'s in Table~\ref{tab:nce_ex}.
For the asymmetric power score $f_\a(\rho)=\rho^\a$ with $0<\a<1$, first, it is easy to check that $\rho\mapsto f_\a(\rho)$ is convex.
\begin{align*}
f_\a''(\rho) &= \rho^{\a-2},\\
g_\a(\rho) &= -(\rho f_\a'''(\rho) + f_\a''(\rho)) = (1-\a)\rho^{\a-2}.
\end{align*}
Since $g_\a(\rho)\ge 0$ and $f_\a''(\rho)-g_\a(\rho)=\a\rho^{\a-2}\ge 0$, $\th\to\hat{\Lc}_\nce(\th)$ is convex by Lemma~\ref{prop:cvx_nce}.
Note that the same calculation holds for $\a\in\{0,1\}$.
\end{proof}

A counter example of convex functions $f$ which do not result in convex objectives is $f_\a(\rho)$ for $\a\not\in[0,1]$. For $f$-NCE, while $f_\a''(\rho)=\rho^{\a-2}\ge 0$ for any $\a$, $\gfunc_\a(\rho)=(1-\a)\rho^{\a-2}<0$ for $\a>1$ and $\a\neq 2$ and $f_\a''(\rho)-g_{f_\a}(\rho)=\a\rho^{\a-2}<0$ for $\a<0$.
For $\a=2$, $g_{f_\a}(\rho)=-1<0$.

\begin{restatable}[CentNCE: convexity]{proposition}{propcvxcentnce}
\label{prop:cvx_centnce}
For $\a\in[0,1]$,
$\th\mapsto \Lc_\a^\centnce(\th;\pdata,\pref)$ is convex.   
\end{restatable}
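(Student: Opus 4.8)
The plan is to prove convexity directly through the Hessian: I will show that $\nabla_\th^2 \Lc_\a^\centnce(\th;\pdata,\pref) \succeq 0$ for every $\th$ whenever $\a\in[0,1]$. The starting point is the Hessian formula already recorded in Lemma~\ref{lem:centnce_derivatives},
\[
\nabla_\th^2 \tilde{\Lc}_\a(\th)
= \E_\pdata\bigl[\rt_{\th;\a}^{\a-1}\bigl((1-\a)\nabla_\th\log\rt_{\th;\a}\,\nabla_\th\log\rt_{\th;\a}^\intercal -\nabla_\th^2\log\rt_{\th;\a}\bigr)\bigr],
\]
which, importantly, is valid uniformly for all $\a$ (the factor $\tfrac{1}{1-\a}$ in the objective cancels against the power rule in the derivation, so no separate limiting argument is needed at the endpoints $\a\in\{0,1\}$). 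The same formula holds verbatim for the empirical objective with $\pdata$ replaced by $\pdatah$, so both the population and empirical statements will follow at once.

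Next I would substitute the log-derivative identities from Lemma~\ref{lem:deriv_centered}. Writing $\widetilde{\psi}\defeq \psi - \E_{\pref}[\psi\,\rt_{\th;\a}^\a]$ for the centered statistic, we have $\nabla_\th\log\rt_{\th;\a}=\widetilde{\psi}$ and
\[
-\nabla_\th^2\log\rt_{\th;\a}
= \a\bigl(\E_{\pref}[\psi\psi^\intercal \rt_{\th;\a}^\a] - \E_{\pref}[\psi \rt_{\th;\a}^\a]\E_{\pref}[\psi \rt_{\th;\a}^\a]^\intercal\bigr).
\]
The crucial observation is that the $\a$-centering is designed precisely so that $\E_{\pref}[\rt_{\th;\a}^\a]=1$; hence $\rt_{\th;\a}^\a\,\diff\pref$ is a genuine probability measure and the bracketed quantity is its covariance matrix of $\psi$, which is positive semidefinite. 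Since $\nabla_\th^2\log\rt_{\th;\a}$ does not depend on $x$, I can pull it out of the $\E_\pdata$ to obtain the decomposition
\[
\nabla_\th^2 \tilde{\Lc}_\a(\th)
= (1-\a)\,\E_\pdata\bigl[\rt_{\th;\a}^{\a-1}\,\widetilde{\psi}\,\widetilde{\psi}^\intercal\bigr]
+ \a\,\E_\pdata\bigl[\rt_{\th;\a}^{\a-1}\bigr]\,\mathrm{Cov}_{\rt_{\th;\a}^\a\pref}(\psi).
\]

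With this decomposition the conclusion is immediate: for $\a\in[0,1]$ both scalar coefficients $1-\a$ and $\a$ are nonnegative; the factor $\rt_{\th;\a}^{\a-1}$ is strictly positive (under the support assumption $\supp(\pdata)\subset\supp(\pref)$ the density ratio is positive), so $\E_\pdata[\rt_{\th;\a}^{\a-1}\widetilde{\psi}\widetilde{\psi}^\intercal]$ is an average of positive-weighted rank-one PSD matrices and is therefore PSD, and similarly $\E_\pdata[\rt_{\th;\a}^{\a-1}]>0$ multiplies the PSD covariance matrix. Thus $\nabla_\th^2 \tilde{\Lc}_\a(\th)$ is a nonnegative combination of two PSD matrices, hence PSD, which proves convexity. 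One can sanity-check the endpoints against the known cases: at $\a=0$ the second term vanishes and one recovers the GISO Hessian of Corollary~\ref{cor:deriv_giso}, while at $\a=1$ the first term vanishes and the Hessian reduces to the Fisher information $\mathrm{Cov}_{p_\thstar}(\psi)$ of MLE. The only point requiring care — and the one I would be most careful to state cleanly — is the justification that the Hessian formula is valid at the boundary values $\a\in\{0,1\}$ and that the centering identity $\E_{\pref}[\rt_{\th;\a}^\a]=1$ is exactly what turns the troublesome $-\nabla_\th^2\log\rt_{\th;\a}$ term into a bona fide PSD covariance rather than a merely sign-indefinite second-moment matrix; this is precisely where the restriction $\a\in[0,1]$ (as opposed to the non-convex regime $\a\notin[0,1]$ seen for $f$-NCE in Proposition~\ref{prop:cvx_nce}) enters.
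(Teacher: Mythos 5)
Your proof is correct, but it takes a genuinely different route from the paper. The paper's own proof works with the objective itself rather than its Hessian: for $\a\in(0,1)$ it writes $\log \Lc_\a^\centnce(\th;\pdata,\pref) = -\log(1-\a) + \log\E_{\pdata}[\rho_\th^{\a-1}] + \tfrac{1-\a}{\a}\log\E_{\pref}[\rho_\th^{\a}]$, observes that each log-expectation term is a LogSumExp of the affine map $\th\mapsto\log\rho_\th(x)$ (hence convex) with nonnegative coefficient $\tfrac{1-\a}{\a}$, and concludes log-convexity, which implies convexity; the endpoints $\a\in\{0,1\}$ are then dispatched by citing the well-known convexity of the GISO and MLE objectives. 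Your Hessian decomposition
\[
\nabla_\th^2 \Lc_\a^\centnce(\th)
= (1-\a)\,\E_{\pdata}\bigl[\rt_{\th;\a}^{\a-1}\widetilde{\psi}\widetilde{\psi}^\intercal\bigr]
+ \a\,\E_{\pdata}\bigl[\rt_{\th;\a}^{\a-1}\bigr]\bigl(\E_{\pref}[\psi\psi^\intercal\rt_{\th;\a}^{\a}]-\E_{\pref}[\psi\rt_{\th;\a}^{\a}]\E_{\pref}[\psi\rt_{\th;\a}^{\a}]^\intercal\bigr)
\]
is exactly the matrix the paper itself computes elsewhere (it is $\tilde{\Ic}_\a$ in Theorem~\ref{thm:centnce_asymp} and the quantity lower-bounded in Proposition~\ref{prop:centnce_restricted_strong_convexity}), and your key observation that $\E_{\pref}[\rt_{\th;\a}^{\a}]=1$ makes the second factor a bona fide covariance is the right one --- without that normalization the second-moment-minus-outer-product matrix need not be PSD. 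The trade-offs: the paper's argument is shorter, avoids interchanging differentiation with expectation, and in fact establishes the strictly stronger property of log-convexity on $(0,1)$, but it treats the endpoints by appeal to known results; your argument handles all of $\a\in[0,1]$ uniformly (the gradient formula $-\E_{\pdata}[\rt_{\th;\a}^{\a-1}\nabla_\th\log\rt_{\th;\a}]$ is regular at $\a=1$ even though the raw objective needs a limiting definition there), makes explicit where $\a\in[0,1]$ enters, and produces a decomposition that is directly reused in the restricted-strong-convexity analysis. Both arguments rely on the exponential-family structure through the affinity of $\th\mapsto\log\rho_\th(x)$.
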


\begin{proof}
For $\a\in(0,1)$, note that we can write
\[
\th \mapsto \log \tilde{\Lc}_\a(\th;\pdata,\pref)
= -\log(1-\a) + \log \E_{\pdata}[\rho_\th^{\a-1}(x)] + \frac{1-\a}{\a}\log\E_{\pref}[\rho_\th^\a(x)].
\]
Here, the second and third terms can be understood as LogSumExp operations applied on the linear function $\th\mapsto \log\rho_\th(x)$, and the resulting function becomes also convex.
For $\a\in\{0,1\}$, the MLE and GISO objectives are well-known to be convex.
\end{proof}

The proof of the following proposition is similar as above, and we thus omit the proof.
\begin{restatable}[$f$-CondNCE: convexity]{proposition}{propcvxcondnce}
\label{prop:cvx_condnce}
If $\gfunc(\rho^{-1}) + \rho^3(f''(\rho)-\gfunc(\rho)) \ge 0$, then $\th\to\hat{\Lc}_f^\condnce(\th)$ is convex. In particular, $\th\to\hat{\Lc}_{f}^\condnce(\th)$ is convex for $f_{\log}$ and $f_{\a}$ for $\a\in[0,1]$.
\end{restatable}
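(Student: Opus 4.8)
The plan is to read off the Hessian of the empirical $f$-CondNCE objective from Lemma~\ref{lem:condnce_derivatives} and reduce positive semidefiniteness to the nonnegativity of a single scalar function. Concretely, that lemma gives
\begin{align*}
\nabla_\th^2\hat{\Lc}_f^{\condnce}(\th)
&= \E_{\pdatah(x)\hat{\pi}(y|x)}\bigl[(\psi(x)-\psi(y))(\psi(x)-\psi(y))^\intercal\,
\condncederivfun{2}{f}(\rho_\th(x,y))\bigr],
\end{align*}
where $\condncederivfun{2}{f}(\rho)=\rho^{-1}\gfunc(\rho^{-1})+\rho^2(f''(\rho)-\gfunc(\rho))$. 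Since each rank-one matrix $(\psi(x)-\psi(y))(\psi(x)-\psi(y))^\intercal$ is positive semidefinite, the Hessian is a nonnegatively-weighted mixture of positive semidefinite matrices whenever the scalar weight $\condncederivfun{2}{f}(\rho_\th(x,y))$ is nonnegative on the realized range of the density ratio. Thus it suffices to establish $\condncederivfun{2}{f}(\rho)\ge 0$ for all admissible $\rho>0$, exactly as in the proofs of Propositions~\ref{prop:cvx_nce} and~\ref{prop:cvx_centnce}.

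The second step would be to match this scalar condition to the stated hypothesis. Because $\rho=\rho_\th(x,y)=\phi_\th(x)/\phi_\th(y)>0$, multiplying $\condncederivfun{2}{f}(\rho)$ by the positive factor $\rho$ preserves its sign, and
\[
\rho\,\condncederivfun{2}{f}(\rho)=\gfunc(\rho^{-1})+\rho^3\bigl(f''(\rho)-\gfunc(\rho)\bigr)
\]
is precisely the quantity appearing in the statement. Hence $\condncederivfun{2}{f}(\rho)\ge 0$ if and only if $\gfunc(\rho^{-1})+\rho^3(f''(\rho)-\gfunc(\rho))\ge 0$, which is the assumed sufficient condition, and convexity of $\th\mapsto\hat{\Lc}_f^{\condnce}(\th)$ follows immediately.

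The final step would verify the condition for the two canonical generators. For the asymmetric power $f_\a$ with $\a\in[0,1]$, reusing the identities $f_\a''(\rho)=\rho^{\a-2}$ and $\gfunc_\a(\rho)=(1-\a)\rho^{\a-2}$ from the proof of Proposition~\ref{prop:cvx_nce}, a direct substitution gives
\[
\condncederivfun{2}{f_\a}(\rho)=(1-\a)\rho^{1-\a}+\a\rho^\a\ge 0
\]
for every $\rho>0$. For the log generator, using $f_{\log}''(\rho)=\tfrac{1}{\rho(\rho+1)}$ and $\gfunc(\rho)=\tfrac{1}{(\rho+1)^2}$ yields $\condncederivfun{2}{f_{\log}}(\rho)=\tfrac{2\rho}{(\rho+1)^2}\ge 0$. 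Both are nonnegative on all of $(0,\infty)$, so the objective is convex globally in $\th$.

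I do not expect a genuine obstacle, since the argument is purely a sign analysis of a scalar function. The only point requiring care is that convexity over all of $\Th$ demands $\condncederivfun{2}{f}(\rho)\ge 0$ uniformly over the entire realizable range of $\rho_\th(x,y)$, not merely at $\thstar$; for $f_{\log}$ and $f_\a$ with $\a\in[0,1]$ this holds on all of $(0,\infty)$, so no restriction on the density-ratio range is needed, whereas for a general $f$ one would instead check the inequality only on the range $(\rho_{\min}/\rho_{\max},\rho_{\max}/\rho_{\min})$ used in Theorem~\ref{thm:condnce_rate}.
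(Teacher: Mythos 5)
Your proposal is correct and follows exactly the route the paper intends: the paper omits this proof, stating only that it is ``similar as above,'' i.e., the same Hessian sign analysis used for Propositions~\ref{prop:cvx_nce} and~\ref{prop:cvx_centnce}, which is precisely what you carry out via Lemma~\ref{lem:condnce_derivatives}. Your identification of the stated hypothesis as $\rho$ times the scalar Hessian weight, and your explicit verifications $(1-\a)\rho^{1-\a}+\a\rho^{\a}\ge 0$ and $2\rho/(\rho+1)^2\ge 0$, all check out.
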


\subsection{Smoothness}
\label{app:smooth}
Under the boundedness assumption, we can show that $f$-NCE objective function is smooth with probability 1.

\begin{proposition}[Smoothness]
\label{prop:smooth}
(cf. \citep[Proposition B.1]{Shah--Shah--Wornell2021b}.)
Assume Assumption~\ref{assum:nce_bdd_norm_psi}.
$\th\mapsto\hat{\Lc}_f^\nce(\th)$ is a smooth function with smoothness constant
\[
p \psimax^2\Bigl(\frac{\ncederivsup{2}{f}{\data}}{\nu} +\ncederivsup{2}{f}{\noise}\Bigr).
\]
\end{proposition}
\begin{proof}
Recall from Lemma~\ref{lem:nce_derivatives} that
\begin{align*}
\nabla_\th^2\hat{\Lc}_f^\nce(\th)
&= \frac{1}{\nu}\E_{\pdatah}[\psi\psi^\intercal\ncederivfun{2}{f}{\data}(\rho_\th)]
+\E_{\prefh}[\psi\psi^\intercal \ncederivfun{2}{f}{\noise}(\rho)].
\end{align*}
By Ger\v{s}gorin's theorem~\citep[Theorem~6.1.1]{Horn--Johnson2012}, the largest eigenvalue of a matrix is upper bounded by the largest absolute row sum or column sum.
Therefore, we have
\begin{align*}
\lambda_{\max}(\nabla_\th^2\hat{\Lc}_f^\nce(\th))
&\le \max_{j} \sum_i |\partial_{\th_i\th_j}\hat{\Lc}_f^\nce(\th)|\\
&\le \max_j \sum_i \frac{1}{\nu}|\E_{\pdatah}[\psi_i\psi_j\ncederivfun{2}{f}{\data}(\rho_\th)]|
+|\E_{\prefh}[\psi_i\psi_j \ncederivfun{2}{f}{\noise}(\rho)]|\\
&\le \max_j \psimax^2 \sum_i\Bigl(\frac{1}{\nu}\sup_{x,\th}\ncederivfun{2}{f}{\data}(\rho_\th) 
+\sup_{x,\th} \ncederivfun{2}{f}{\noise}(\rho)\Bigr)\\
&\le p \psimax^2\Bigl(\frac{\ncederivsup{2}{f}{\data}}{\nu}+\ncederivsup{2}{f}{\noise}\Bigr).\qedhere
\end{align*}
\end{proof}

We note that \citet[Lemma~3.1]{Shah--Shah--Wornell2021b} shows that the projected gradient descent algorithm returns an $\eps$-optimal solution for GlobalGISO in polynomial optimization complexity, based on the similarly established smoothness of GlobalGISO.
We can establish a similar optimization complexity guarantee, but we omit the statement.

\section{Experiments}
\label{app:sec:exps}

In this section, we present a preliminary empirical evaluation of a selected set of estimators on a synthetic data, following a setting in \citep[Section~5.1]{Shah--Shah--Wornell2023}.
We consider a unnormalized exponential family model 
\[
\phi_\th(x)\defeq \exp\bigl(x^\intercal \th x\bigr),
\]
where $\th\in\Real^{p\times p}$ for $x\in[-1,1]^p$.
The data generating distribution is chosen as the model with $\th=\th^\star$ defined as
\begin{align*}
\Theta_{ij}^\star \defeq \begin{cases}
\frac{1}{\sqrt{p}} & \text{if }i=1, \text{ or } j=1, \text{ or } i=j,\\
0 & \text{otherwise}.
\end{cases}
\end{align*}
The samples were generated by brute-force sampling by discretizing each axis by 100 bins.
We generated $N=10^5$ samples for $p\in\{11,13,15,17,19\}$ and computed the estimates for each estimator with varying sample size $\{0.04N,0.08N,\ldots,0.64N\}$.
We repeated the experiments with random subsamples for 5 times for each configuration.

Assuming the parameter space $\Theta$ is bounded under the Frobenius norm, we consider NCE estimators regularized by the Frobenius norm and optimized via gradient descent. We used a regularization weight $\lambda_n = 10^{-2}$ and a learning rate $\eta = 0.1$ across all settings, except for the $f_{\log}$-NCE estimator, where we used $\eta = 1.0$. Each optimization was run for 1000 gradient steps. As shown in Figure~\ref{fig:exp_rate}, the selected estimators exhibit an empirical convergence rate of $n^{-1/2}$. However, we observed that the $f_1$-NCE estimator (asymmetric log NCE; see Table~\ref{tab:nce_ex}) and the CNCE estimator did not display convergent behavior, despite the theoretical guarantees available for this example. This discrepancy highlights the need for further investigation into the empirical behavior of various estimators, particularly in high-dimensional settings.

\begin{figure}[thb]
    \centering    \includegraphics[width=.95\linewidth]{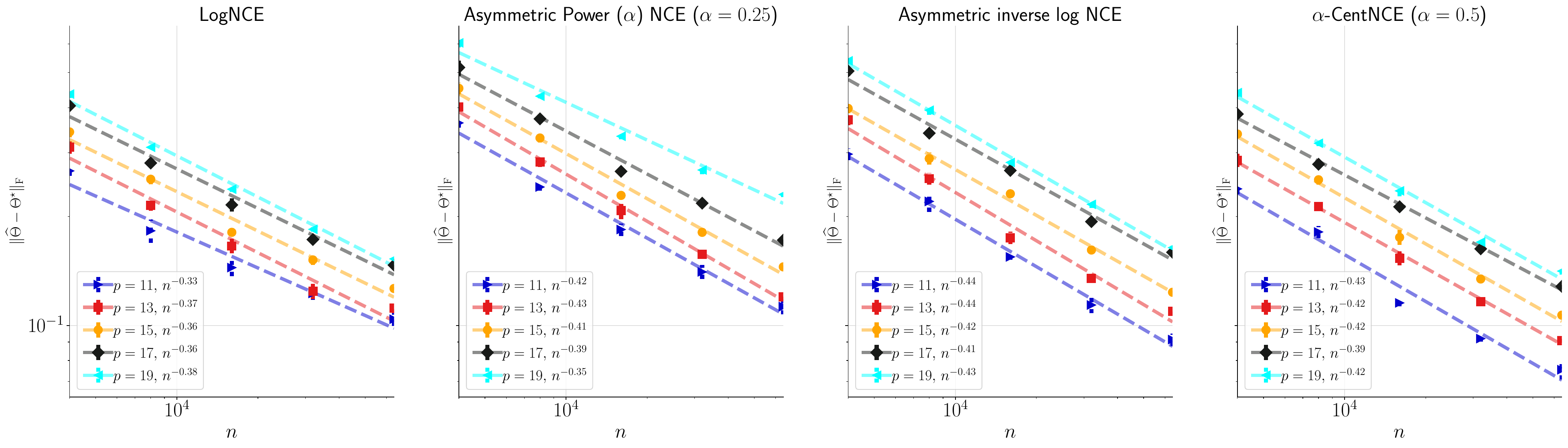}\vspace{-1em}
    \caption{Convergence rate of different NCE estimators.}
    \label{fig:exp_rate}
\end{figure}

\end{document}